\newtheorem{theorem}{Theorem}
\newtheorem{corollary}[theorem]{Corollary}
\newtheorem{lemma}[theorem]{Lemma}
\newtheorem{proposition}[theorem]{Proposition}
\newtheorem{definition}[theorem]{Definition}
\newtheorem{assumption}[theorem]{Assumption}
\newtheorem{problem}[theorem]{Problem}
\newtheorem{remark}{Remark}
\newcommand{\mbf}[1]{\boldsymbol{#1}}
\newcommand{\mres}{\mathbin{\vrule height 1.2ex depth 0pt width
0.13ex\vrule height 0.13ex depth 0pt width 0.9ex}}
\newcommand{\innerp}[2]{\langle #1,#2 \rangle}
\BODY\end{matrix}$}%
\BODY\end{align}}}
  \noindent\resizebox{\columnwidth}{!}{\usebox2}%
\newcommand{\br}{\mbf{r}}
\newcommand{\bF}{\mbf{F}}
\newcommand{\bv}{{\mbf{v}}}
\newcommand{\bx}{{\mbf{x}}}
\newcommand{\bX}{\mbf{X}}
\newcommand{\bbX}{\mathbb{X}}
\newcommand{\bY}{\mbf{Y}} 
\newcommand{\by}{\mbf{y}}
\newcommand{\bbY}{\mathbb{Y}}
\newcommand{\bbV}{\mathbb{V}}
\newcommand{\balpha}{\mbf{\alpha}}
\newcommand{\bgamma}{\mbf{\gamma}}
\newcommand{\bZ}{\mbf{Z}}
\newcommand{\bbZ}{\mathbb{Z}}
\newcommand{\bV}{\mbf{V}}
\newcommand{\mK}{{K}}
\newcommand{\mH}{\mathcal{H}_{{K}}}
\newcommand{\mE}{\mathcal{E}}
\newcommand{\rhoL}{\rho_T^L}
\newcommand{\intkernele}{{\intkernel^{}}}
\newcommand{\force}{\mbf{F}}
\newcommand{\forcev}{\force^{\bv}}
\newcommand{\intkernel}{\phi}
\newcommand{\intkernelvar}{\varphi}
\newcommand{\rhsfo}{\mathbf{f}}
\newcommand{\hypspace}{\mathcal{H}}
\newcommand{\phiH}{\intkernel_{\mathcal{H}}}
\newcommand{\E}{\mathbb{E}}
\newcommand{\cov}{\mathrm{Cov}}
\newcommand{\norm}[1]{\left\| #1 \right\|}
\newcommand{\infnorm}[1]{\| #1\|_{\infty}}
\newcommand{\Rhoxnorm}[1]{\| #1\|_{L^2(\rho_{\mbf{X}})}}
\newcommand{\rhotnorm}[1]{\| #1\|_{L^2(\tilde\rho_{T}^L)}}
\newcommand{\Rhoxinnerp}[2]{\langle #1, #2\rangle_{L^2(\rho_{\mbf{X}})}}
\newcommand{\supp}[1]{\text{supp}(#1)}
\newcommand{\real}{\mathbb{R}}
\newcommand{\R}{\real}
\newcommand{\argmin}[1]{\underset{#1}{\operatorname{arg}\operatorname{min}}\;}
\begin{document}
\newcommand{\commentout}[1]{}

\title{Learning particle swarming models from data with Gaussian processes}


 \author{Jinchao Feng\thanks{Department of Applied Mathematics and Statistics, Johns Hopkins University. Email: jfeng34@jhu.edu}
 \and
  Charles Kulick
  \thanks{Department of Mathematics,  University of California Santa Barbara. Email:  
charles@math.ucsb.edu}
\and
 Yunxiang Ren \thanks{Department of 
Mathematics, and Physics,  Harvard University. Email: yren@g.harvard.edu }
 \and
 Sui Tang \thanks{Department of Mathematics,  University of California Santa Barbara. Email: suitang@ucsb.edu}
}

\maketitle

\begin{abstract}

{Interacting particle or agent systems that exhibit diverse swarming behaviors are prevalent in science and engineering. Developing effective differential equation models to understand the connection between individual interaction rules and swarming is a fundamental and challenging goal.} In this paper, we study the data-driven discovery of a second-order particle swarming model that describes the evolution of $N$ particles in $\mathbb{R}^d$ under radial interactions. We propose a learning approach that models the latent radial interaction function as Gaussian processes, which can simultaneously fulfill two inference goals: one is the nonparametric inference of {the} interaction  function with  pointwise uncertainty quantification, and the other is the inference of unknown scalar parameters in the non-collective friction forces of the system. We formulate the learning problem as a statistical inverse learning problem and introduce an operator-theoretic framework that provides a detailed analysis of recoverability conditions, establishing that a coercivity condition is sufficient for recoverability. Given data collected from $M$ i.i.d trajectories with independent Gaussian observational noise,   we provide a finite-sample analysis, showing that our posterior mean estimator  converges in a Reproducing Kernel Hilbert Space norm, at an optimal rate in $M$  equal to the one in the classical 1-dimensional Kernel Ridge regression. As a byproduct, we show we can obtain a parametric learning rate in $M$ for the  posterior marginal variance using $L^{\infty}$ norm and that the rate could also involve $N$ and $L$ (the number of observation time instances for each trajectory) depending on the condition number of the inverse problem.  
{Numerical results on systems that exhibit different swarming behaviors demonstrate efficient learning of our approach from scarce noisy trajectory data.}{We provide numerical results on systems exhibiting different swarming behaviors, highlighting the effectiveness of our approach in the scarce, noisy trajectory data regime.}



\end{abstract}

\section{Introduction} \label{sec:intro}

 Swarming  behaviour exhibited by interacting particles is very common, referring to particles of similar size  aggregating together, milling about the same spot, moving en masse, or migrating in some direction. Examples include the aggregation of popular opinion on events, the flocking of birds, the schooling of fish, and the coordinated movement of robots. It is a central subject in various disciplines to reveal the links between swarming behaviors and individual interaction laws.

A common belief in scientific research is that complicated swarming behaviors are the consequences of simple interactions, for instance, the interactions depending on pairwise distances. Inspired by physics, one can write down a second-order ODE system for  $N$  interacting particles $\bx_1,\cdots,\bx_N$ in $\mathbb{R}^d$ as follows: for $i=1,\cdots,N$, the $i$-th equation for $\bx_i$ is 
 \begin{equation}
 \begin{aligned}\label{odes}
m_i\ddot\bx_i(t) = F_i(\bx_i(t), \dot\bx_i(t), \mbf{\alpha}_i)  +\underbrace{ \sum_{i'=1}^N \frac{1}{N} \Big[\intkernel (\norm{\bx_{i'}(t) - \bx_i(t)})(\bx_{i'}(t) - \bx_i(t)) \Big]}_{\text{interaction force}}.
\end{aligned}
 \end{equation}
 The form of the above governing equation is derived from Newton's second law: $m_i$ is the mass of the agent $i$; $\ddot\bx_i$ is the acceleration; $\dot\bx_i$ is the velocity; $F_i$ is a parametric function of the position and velocity, modeling frictions of the particles with the environment; the scalar parameters $\mbf{\alpha}_i$ describe the friction strength; and the interaction force  is the $i$-th component of the derivative of  a potential energy function $\mathcal{U}$ depending on pairwise distances:
 
\begin{align} \label{potentialfunction}
 \mathcal U(\bX(t)):=\sum_{i,i'=1}^{N}\frac{1}{2N} \Phi(\norm{\bx_{i'}(t) - \bx_i(t)}), {\Phi}'(r) = \intkernel(r)r,
 \end{align} 
where $\|\cdot\|$ is the Euclidean norm and $\phi$ is the interaction kernel function $\phi: \mathbb{R}^+ \to \mathbb{R}$.

There are remarkable achievements in the qualitative study of the system \eqref{odes} and its variants. Despite the simple form of the interactions, the asymptotic behavior of the solutions to  \eqref{odes} has proven to reproduce a wide variety of macroscopic collective patterns \citep{d2006self,motsch2014heterophilious,baumann2020laplacian,chuang2007state} which are similar to those observed in practice. System \eqref{odes} and its variants also find various applications in optimization \citep{mei2018mean} and sampling \citep{NIPS2017_17ed8abe} in machine learning.  Despite the impressive progress,
the governing interacting potentials and parametric form of friction force are still far from being \textit{precisely} determined for many systems that arise in biology, ecology, and social science.



The recent rapid advancements in digital imaging and high-resolution lightweight GPS devices have made the trajectory data of interacting particle systems increasingly available. This motivated us to consider the fundamental inverse problem: given the trajectory data generated from \eqref{odes}, can we discover the governing equation? Furthermore, what are effective algorithms with theoretical guarantees?  There are several challenges we face.  The first one is non-linearity. The friction force  often depends  \textit{nonlinearly} with respect to the scalar parameters $\mbf{\alpha}$. Thus, solving the inverse problem  involves nontrivial separations between the friction force and interaction force constrained to dynamics. The second challenge results from little information on the analytic forms of interaction kernels. For example, the Morse type kernels and Lennard-Jones type kernels have very different parametric forms, but they are well-known to reproduce similar collective patterns in particle dynamics. Ideally, we  want to make \textit{minimal} assumptions on their analytic forms and infer them in a nonparametric fashion.  This involves working with large and flexible infinite-dimensional function spaces (e.g, Sobolev spaces).  Thirdly, in practical scenarios, it is possible that only a small amount of data is available, i.e., $M,L$ is small, and the data may have some stochastic effects such as noises.  {In such scenarios, obtaining quantitative predictive uncertainties in estimated interaction kernels is crucial for quantifying the reliability of estimators. This information is useful in designing a data acquisition plan, known as active learning \citep{cohn1996active}, which can be used to optimally enhance our knowledge about the system. In summary, we seek an algorithm that can simultaneously perform inference of $\mbf{\alpha}$, which incorporates the parametric form of $\bF$, and nonparametric inference of $\phi$ while also providing uncertainty quantification of the learned models.}

In machine learning, Gaussian process (GP) based approaches have well-documented merits not only in superior learning of a rich class of nonlinear functions without assumptions on their parametric form in the scarce noisy data regime, but also in quantifying the associated uncertainty. This makes a GP based approach attractive for our learning problem. We  propose a novel approach by modeling $\phi$ as  Gaussian processes and incorporating the GPs into the structure of the whole ODE system \eqref{odes}. The probabilistic framework brought by GPs enables us to perform joint parametric inference of $\mbf{\alpha}$ and nonparametric inference of $\phi$  via the powerful model selection procedure of GPs. The resulting algorithm has superior performance in the scarce noisy data regime and  yields estimators with  uncertainty quantification. We shall show that it is computationally efficient, statistically sound, and effective in benchmark systems.

\subsection{Summary/overview of the proposed algorithm} 

\paragraph{First-order systems} For demonstration purposes, we first summarize
the key ideas of the GP based algorithm for the first order system: for $i=1,\cdots,N$, one has 
\begin{eqnarray}\label{odes1}
\dot \bx_i(t)  = \frac{1}{N} \sum_{i'=1}^N \intkernel \left(\norm{\bx_{i'}(t) - \bx_i(t)}\right)(\bx_{i'}(t) - \bx_i(t)),
\,
\end{eqnarray} 
where $\bx_i(t),\dot{\bx}_i(t) \in \mathbb{R}^d$ are the position and velocity of $i$-th agent at time $t$;  $\phi:\mathbb{R}^{+}\rightarrow \mathbb{R}$ governs the pairwise interactions.  Our observations consist of $\{\bx_i^{(m)}(t_l),\dot \bx_i^{(m)}(t_l)+\mbf{\epsilon}_i^{(m,l)}\}_{i,m,l=1}^{N,M,L}$, with $L$ time instances $0=t_1<t_2\cdots<t_L=T$; $M$ being the number of trajectories, and the initial positions $\{\bx_i^{(m)}(0): 1\leq i\leq N\}$ are drawn i.i.d from an unknown probability measure $\mu_0$ defined on the state space $\mathbb{R}^{dN}$; the Gaussian noise  $\mbf{\epsilon}^{(m,l)}_i \stackrel{i.i.d}{\sim} \mathcal{N}(\bm{0}, \sigma^2 I_{d\times d})$  is independent of $\mu_0$.  The noise model we adapt here can be viewed as a discretization of corresponding Stochastic Differential Equations (SDEs) with homogeneous Brownian noise and is used to model the random effects of the environment on the measurement of velocities or imposing frictions.  We shall see immediately that the Gaussian noise term serves the role of regularization in the proposed GP framework (see \eqref{firstorder:pos} below). In this paper, we are interested in the data regime where $L$ is fixed and $M$ varies. That is to say, we observe data coming from multiple independent trajectories of fixed length.

For ease of presentation, we use a compact notation to represent the ODEs \eqref{odes1}
\begin{align}\label{firstorder:homogeneous}
 \dot\bX(t)&=\rhsfo_{\intkernel}(\bX(t)), 
\end{align} 
where $\bX=[\bx_1^{\top},\cdots,\bx_N^{\top}]^{\top} \in \mathbb{R}^{dN}$ denotes the full state vector, and $\rhsfo_{\intkernel}:\mathbb{R}^{dN}  \rightarrow \mathbb{R}^{dN}$ represents the distance based interactions governed by the interaction kernel $\intkernel$ as in \eqref{odes}. We use the notation $\bbY_{\sigma^2, M}=\{\bbX_{M},  \bbV_{\sigma^2,M}  \}$ to denote the noisy observed trajectory data,  where we introduce two vectors  

\begin{align}\bbX_{M} &= \mathrm{Vec}\big(\{\bX^{(m)}(t_l)\}_{m,l=1}^{M,L}\big) \in \mathbb{R}^{dNML}\\
\bbV_{\sigma^2,M}&= \mathrm{Vec}\big(\{\dot\bX^{(m)}(t_l)+\mbf{\epsilon}^{(m,l)}\}_{m,l=1}^{M,L}\big) \in \mathbb{R}^{dNML}.
\end{align}

Our proposed algorithm consists of three steps. We start by modeling  $\intkernel$ as a Gaussian process \citep{williams2006gaussian}, i.e., consider the prior $\intkernel \sim \mathcal{GP}(0,K_\theta(r,r'))$, with mean zero and covariance kernel function $K_{\theta}$ which depends on hyper-parameters $\theta$. This prior incorporates our prior knowledge about the underlying interaction rule. Secondly, we leverage the powerful training procedure of GP  to  choose a data-driven prior, i.e.,  updating $\theta$ by  maximizing the \textcolor{black}{likelihood} of the observational data. That is equivalent to minimizing the negative log-likelihood function  
\begin{align}
  & \mathrm{argmin}_{\theta,\sigma^2}-\log \mathbb{P}(\bbV_{\sigma^2,M}|\bbX_M,\theta,\sigma^2),
\label{eq:likelihood}
\end{align} 
where we estimate the noise level ($\sigma^2$) of our observations at the same time.
In our setting, $\rhsfo_{\intkernel}$ is linear in $\intkernel$, i.e., $\rhsfo_{\intkernel_1+\intkernel_2}=\rhsfo_{\intkernel_1}+\rhsfo_{\intkernel_2}$, and observational noises are Gaussians which are independent of trajectory data. Therefore, $\mathbb{P}(\bbV_{\sigma^2,M}|\bbX_M,\theta)$ is still Gaussian. One can write its explicit formula and its gradients with respect to hyper-parameters. This allows us to use an efficient variant of the conjugate gradient method to find a minimizer $\hat \theta$. We now denote $\widehat K=K_{\hat\theta}$. 
 
Finally, we use the posterior mean estimator to  predict the value of $\phi$ at a testing location $r^{\ast} \in \mathbb{R}^+$. 
 Leveraging the fact that the joint distribution of $\rhsfo_{\intkernel}$ and $\intkernel$ according to the prior is still Gaussian, we use a conditioning argument to  obtain the following closed-form formula    
 \begin{equation}\label{firstorder:pos}
    \bar{\intkernel}_M(r^{\ast})= \widehat{K}_{\intkernel,\rhsfo_{\intkernel}}(r^\ast,\bbX_M)(\widehat{K}_{\rhsfo_\intkernel}(\bbX_M,\bbX_M) + \sigma^2I)^{-1}\bbV_{\sigma^2,M}, 
\end{equation} 
where the matrices $\widehat{K}_{\intkernel,\rhsfo_{\intkernel}}(r^\ast,\bbX_M) \in \mathbb{R}^{1\times dNML}$ and $\widehat{K}_{\rhsfo_{\intkernel}}(\bbX_M,\bbX_M) \in \mathbb{R}^{dNML\times dNML}$ denote the covariance matrix between $\intkernel (r^{\ast})$ and $\rhsfo_{\intkernel}(\bbX_M)$, and  $\rhsfo_{\intkernel}(\bbX_M)$ and $\rhsfo_{\intkernel}(\bbX_M)$ respectively; $I$ is the identity matrix of compatible size.  In addition, we can also quantify the uncertainty of estimation at $r^*$ by 
\begin{equation}\label{firstorder:var}
\begin{aligned}
    &\mathrm{Var}(    \bar{\intkernel}_M |\bbY_{\sigma^2,M}) = \widehat{K}(r^\ast,r^\ast) - \widehat{K}_{\intkernele,\rhsfo_{\intkernele}}(r^\ast,\bbX_M)(\widehat{K}_{\rhsfo_\intkernele}(\bbX_M,\bbX_M) + \sigma^2I)^{-1}\widehat{K}_{\rhsfo_{\intkernele},\intkernele}(\bbX_M,r^\ast), 
\end{aligned}
\end{equation} 
where $\widehat{K}_{\rhsfo_{\intkernele},\intkernele}(\bbX_M,r^\ast) \in \mathbb{R}^{ dNML \times 1} $ is the transpose of $\widehat{K}_{\intkernel,\rhsfo_{\intkernel}}(r^\ast,\bbX_M) $ (See Corollary \ref{first-order-estimator} for the derivation).

\paragraph{Extension to systems with external forces and second-order systems} The proposed approach can be easily generalized to the variants. For example, consider the first-order system with  unknown external forces
\begin{align}\label{firstorder:force}
 \dot\bX(t)&=\rhsfo_{\intkernel}(\bX(t))+\mbf{F}(\bX(t), \mbf{\alpha}), 
 \end{align} where $\mbf{F}(\bX(t), \mbf{\alpha}):\mathbb{R}^{dN}\rightarrow \mathbb{R}^{dN}$  is a parametric function of \textit{unknown} scalar parameters  $\mbf{\alpha}$ ($\mbf{F}$ can depend nonlinearly on $\mbf{\alpha}$). The parametric form of $\mbf{F}$ encodes the physical constraints of the underlying system. In this case, we can treat both  $\mbf{\alpha}$ and $\sigma^2$ (the noise level) as hyper-parameters and solve 
 \begin{equation*}
      \mathrm{argmin}_{\theta,\mbf{\alpha}, \sigma^2}-\log \mathbb{P}(\bbV_{\sigma^2,M}|\bbX_M,\theta,\mbf{\alpha},\sigma^2).
 \end{equation*}

Even though the above optimization is in general non-convex, our numerical examples show that one can find accurate estimations of $\mbf{\alpha}$  and  $\sigma^2$ (the noise level) using a few iterations ($\approx 50$) from a small set of training data. Finally, we plug these estimates into the model and perform the prediction of $\phi$ using a posterior mean similar to \eqref{firstorder:pos}.  In section \ref{sec:learningapproach}, we provide full technical details of the proposed approach to  general second-order systems with unknown external forces. 

\subsection{Literature review and the novelty of our work}

Many recent works have applied machine learning tools to the discovery of dynamical systems, leading to the formulation of new general principles. The resulting methods can be divided into two main categories: (1) methods based on variants of deep neural networks (DNNs) \citep{long2018pde,raissi2018deep,raissi2018multistep,qin2019data,li2021physics,wang2021learning}; and (2) methods based on kernel methods and Gaussian processes \citep{archambeau2007gaussian,raissi2017machine,heinonen2018learning,yildiz2018learning,mao2019nonlocal,zhao2020state,chen2020gaussian, lee2020coarse,yang2021inference,wang2021explicit,chen2021solving,stepaniants2021learning}. However, methods of type (2) have the potential for considerable
advantages over those of type (1), both in terms of theoretical analysis and numerical
implementation \citep{chen2021solving}. \textcolor{black}{In a nutshell, there is no single method that works best in all settings and the theoretical results are still scarce. It is necessary and requires nontrivial effort to propose and develop a theoretical understanding of learning methodology for a particular type of dynamical system and data regime, as one has to face the unique challenges caused by the underlying physical constraints and the observational data. }

In this paper, we cast the data estimation problem arising in the particle swarm models \eqref{odes} as a statistical inverse learning problem and develop a simple and rigorous kernel/Gaussian process framework for solving it. Below we shall compare our work with the works using Gaussian processes and existing works for particle swarm models.


\paragraph{Novelty of the algorithm} Our method is different from other GP based approaches  introduced to learn ODEs from observations: they either model  $\rhsfo_{\intkernel}:\mathbb{R}^{dN} \rightarrow \mathbb{R}^{dN}$ as a GP, ignoring the interacting structure, and solve a regression problem which would be cursed by the high dimension of the state space of $\bX$, e.g.\citep{heinonen2018learning}, or assume independent GP prior distributions on each  component of $\bX$, and consider  learning a parametric function \citep{mao2019nonlocal,yang2021inference}. 
We  instead model the latent function $\phi$  as a GP  and solve an inverse problem by restricting the GP on a manifold that satisfies the ODE system. In this way, we offer a nonparametric approach, with minimal assumptions on $\phi$, and  build the invariance of the equations under permutation of the agents as well as the radial symmetry of $\intkernel$ into the machine learning model of $\rhsfo_{\intkernel}$, and therefore avoid the curse of dimensionality. The methodology we introduce has the following properties:
\begin{itemize}   \setlength\itemsep{0.1em}

\item  theoretically, the proposed method is  amenable to rigorous
 analysis. We establish a novel operator-theoretical framework, suggesting new research directions to generalize the analysis of
kernel regression methods \citep{williams2006gaussian} and linear inverse problems to interacting particle systems. Under  H\"older type source conditions on $\phi$, we prove the reconstruction error converges at an  upper rate in $M$ (see Theorem \ref{Maintheorem}):  $$\|\bar \phi_M-\phi  \|_{\mH} \lesssim M^{\frac{-\gamma}{2\gamma+2}} $$
where $\mH$ is the underlying Reproducing Kernel Hilbert Space (RKHS) and $0<\gamma\leq \frac{1}{2}$. Based on our best knowledge, there is no prior published  work on the application of GP to particle swarm models \eqref{odes}  in this way and our paper is the first one to obtain the theoretical convergence rates in an RKHS norm. 
We remark this  upper rate in $M$ is statistically optimal for target functions  satisfying certain source conditions
and can not be further improved. One can refer to  \citep{blanchard2018optimal} which established the minimax rates for  classical linear statistical inverse problems; our case corresponds to  $s=0$ (reconstruction error) and $b \rightarrow 1^{+}$ (as we deal with  all Mercer kernels) in their main result.
Using our framework as the bridge, we believe one can obtain more refined rates and bounds in the future. As a byproduct, we also show that a parametric rate in $M$ for the $L^{\infty}$ norm of  the marginal posterior variance can  be obtained, and furthermore, this rate could also involve  the number of particles and the number of observational time instances (see Theorem \ref{marginalpos}).   Last but not least, the reconstruction error bound also yields  bounds on trajectory predictions even if our observation data is \textit{finite} and obtained from discrete time instances. Let $\hat \bX_{[0,T]}$ denote the trajectory generated by the estimator 
$\hat \phi$ over the time interval $[0,T]$, given the same initial condition, then application of  Gr\"onwall's inequality \citep{ames1997inequalities} implies 
$$ \|\hat \bX_{[0,T]}- \bX_{[0,T]}\| \lesssim \|\hat \phi-\phi  \|_{\mH}.  $$ 

Such a trajectory prediction error bound is only available in previous works \citep{lu2019nonparametric,lu2020learning,lu2021learning,miller2020learning} where one has continuous-time observational data. This demonstrates the benefit of using stronger RKHS norms.

\item computationally,  it inherits the complexity of state-of-the-art solvers for  kernel matrices, suggesting new research directions to generalize the work of 
optimal approximate methods for linear regression \citep{quinonero2005unifying,schafer2021sparse}, to the proposed setting
of solving parameter and kernel identification in particle swarm models.
See more discussions in section \ref{computation}. 

\end{itemize}

\paragraph{The existing works on the data-driven discovery of interacting particle systems} Motivated by the broad applications of interacting particle systems in various disciplines, the data-driven discovery of interacting particle systems has become a highly active area of research in recent years. We will first briefly review the relevant works on stochastic interacting particle systems. The most frequently studied approach in recent works is the maximum likelihood approach, which includes parameter estimation \citep{kasonga1990maximum, bishwal2011estimation, gomes2019parameter, chen2021maximum, sharrock2021parameter} and nonparametric estimation of drift in the stochastic McKean-Vlasov equation \citep{genon2022inference, della2022lan, yao2022mean}, as well as radial interaction kernel learning in \citep{lu2021learning}. One can also refer to \citep{messenger2021learning} for the development of the Weak SINDy algorithm that leverages the weak form of the differential equation and sparse parametric regression, with applications to cellular dynamics \citep{messenger2022cells}.

 Our work is on the non-parametric methods for deterministic microscopic interacting particle systems. The theoretical study of {the} least square approach for learning $\phi$ in first-order systems was proposed in \citep{bongini2017inferring}. Later, it has been generalized to second-order systems and heterogeneous systems in \citep{lu2019nonparametric}, with theoretical developments in  \citep{lu2020learning,lu2021learning,miller2020learning}. Compared with previous work that only focused on learning interaction kernels, our proposed method has the following advantages: (1) it can handle  more difficult yet more practical scenarios, i.e., joint inference of  scalar parameters $\mbf{\alpha}$ and $\intkernel$, as both  are often unknown in practical scenarios. Therefore, our method can learn the governing equations \eqref{odes}. (2) It provides uncertainty quantification on estimators.  In the ideal data regime, we provide a rigorous  analysis  and show how it depends on the system parameters. This uncertainty measures the reliability of our estimators, in particular, it can be used to  measure the mismatch between our proposed models with the real-world systems.  (3) It has a powerful training procedure to select a data-driven prior and this overcomes  the drawback of the previous least square algorithms: there is no criterion to select the optimal choice  of function spaces (in terms of both basis and dimensions) for learning so as to minimize the generalization error.   We show in Example \ref{ex: FM} that this yields better performance in trajectory predictions with unseen datasets.

 The theories developed in this paper are related to but significantly depart from  previous work on studying least square estimators \citep{lu2019nonparametric,lu2020learning,lu2021learning,miller2020learning}.
We shall show the posterior mean estimators can be viewed as KRR estimators, whose risk functionals are the \textit{regularized} version of those proposed in previous works by setting the underlying hypothesis space to be an appropriate RKHS space. We go much  further beyond the existing analysis:
 
 \begin{itemize}
     \item  Our new and rigorous operator-theoretic framework formulates this learning problem as a linear statistical inverse  problem. This allows us to refine the  analysis for target functions under source conditions and obtain convergence in the \textit{stronger} Reproducing Kernel Hilbert Space (RKHS) norm. From the perspective of the inverse problem, we analyze the reconstruction error while the previous works analyzed the residual error, where only $L^2$ error bounds were obtained. We remark that the analysis framework presented in \citep{lu2019nonparametric,lu2020learning} can not be extended directly to the RKHS norm and our operator-theoretical framework is significantly different than the previous ones. 

     \item  We study  noisy trajectory data and  provide error bounds on uncertainties that noise brings to the estimation, while the previous works only dealt with  noise-free trajectory data.
 \end{itemize}
 
 To summarize, our contribution can be briefly stated as 
 \begin{itemize}
     \item A novel GP-based algorithm that can solve joint parametric  and nonparametric inference in the particle swarm model.
     
     \item Rigorous analysis on recoverability, quantitative error bounds,  and establishing the statistical optimality of both posterior mean and variance estimators in the framework of linear statistical inverse problems. 
     
     \item Extensive numerical experiments demonstrating the effectiveness and advantages over previous approaches.
     
 \end{itemize}

\subsection{Outline and organization of the paper} Our paper is organized as follows: in section \ref{sec:learningapproach}, we present the algorithm for second-order systems of form \eqref{odes}. In section \ref{sec:learningtheory}, we establish a novel operator-theoretic framework to analyze the performance of the posterior mean estimators and marginal posterior variance. Finally, we test the effectiveness and demonstrate the advantages of the proposed approach on several benchmark systems exhibiting {different types of swarming behaviour}.  

 \subsection{Notation and preliminaries}
\paragraph{Notation} Let $\rho$ be a Borel positive measure on $D$ dimensional Eucliean space $\mathbb{R}^{D}$.  We use $L^2(\mathbb{R}^{D};\rho;\mathbb{R}^{n})$ to denote the set of $L^2(\rho)$-integrable vector-valued functions that map $\mathbb{R}^{D}$ to $\mathbb{R}^{n}$. For a function $\mbf{f}\in L^2(\mathbb{R}^{D};\rho;\mathbb{R}^{n})$, and a vector $\bX=[\bx_1^{\top},\cdots,\bx_m^{\top}]^T \in \mathbb{R}^{mD}$ with $\bx_i \in \mathbb{R}^D$, we use the notation $\mbf{f}(\bX)$ to represent the image of the vector under the function of $\mbf{f}$ componentwisely, namely, $\mbf{f}(\bX)=[\mbf{f}(\bx_1)^{\top},\cdots,\mbf{f}(\bx_m)^{\top}]^{\top}\in \mathbb{R}^{mn}$.
Let $\mathcal{S}_1$ be  a  measurable subset of $\mathbb{R}^{m}$, then the restriction of the measure $\rho$ on $\mathcal{S}_1$, denoted by $\rho\mres \mathcal{S}_1$, is defined as $\rho\mres \mathcal{S}_1(\mathcal{S}_2)=\rho(\mathcal{S}_1 \cap \mathcal{S}_2)$ for any measurable subset $\mathcal{S}_2$ of $\mathbb{R}^{D}$. We used $\mathcal{N}(0,I_{d\times d})$ to denote the standard multivariate Gaussian distribution in $\mathbb{R}^d$.

\paragraph{Preliminaries on GPs (Gaussian Processes) Prior } We say $\phi \sim \mathcal{GP}(u,K)$ to denote our prior on $\phi$. In particular, this means that for any $r \in \mathbb{R}$, the random variable $\phi(r)$ is Gaussian: $\phi(r) \sim \mathcal{N}(u(r), K(r,r))$, where $\mathcal{N}$ denotes the normal  or multivariable normal distributions. Similarly, the joint distribution of $ \begin{bmatrix} \phi(r)\\ \phi(r') \end{bmatrix}$ is multivariate Gaussian: $\begin{bmatrix} \phi(r)\\ \phi(r') \end{bmatrix} \sim \mathcal{N} \left(   \begin{bmatrix} (u(r)\\ (u(r') \end{bmatrix} ,\begin{bmatrix} K ( r, r)& K(r,r')\\ K(r',r)&K(r',r') \end{bmatrix}\right)$. This extends in a natural way to any finite set $(r_1, \dots, r_N) \in \mathbb{R}^N$.

\paragraph{Preliminaries on operator algebras} 


Let $\mathcal{H}_1, \mathcal{H}_2$ be  Hilbert spaces. We use $\langle\cdot,\cdot\rangle_{\mathcal{H}_1}$ to denote the inner product over $\mathcal{H}_1$, and still use $\langle\cdot,\cdot\rangle$ to denote the inner product on the Euclidean space.  We denote by $\mathcal{B}(\mathcal{H}_1,\mathcal{H}_2)$  the set of bounded linear operators mapping $\mathcal{H}_1$ to $\mathcal{H}_2$.
Let $A \in \mathcal{B}(\mathcal{H}_1,\mathcal{H}_2)$, we use $\mathrm{Im}(A)$ to denote its range and $\|A\|$ to denote its operator norm.  $A$ is a compact operator if $A$ maps bounded subsets of $\mathcal{H}_1$ to relatively compact subsets of $\mathcal{H}_2$ (subsets with compact closure in $\mathcal{H}_2$). We use $A^*: \mathcal{H}_2\rightarrow \mathcal{H}_1$ to denote the adjoint operator of $A$, that is, $\forall f \in \mathcal{H}_1$, $g\in\mathcal{H}_2$, $\langle Af,g\rangle_{\mathcal{H}_2}=\langle f,A^*g\rangle_{\mathcal{H}_1}$. $A \in \mathcal{B}(\mathcal{H}_1,\mathcal{H}_1)$ is said to be positive if $A^*=A$ and $\langle Ah,h\rangle_{\mathcal{H}_1} \geq 0$ for all $h \in \mathcal{H}_1$. If $A$ is a real-valued matrix, $A^*=A^{\top}$, the transpose of the matrix. 

If $A \in \mathcal{B}(\mathcal{H}_1,\mathcal{H}_1)$ is a compact positive operator, and $\lambda_n^{}$ represents the $n$-th eigenvalue in decreasing order, then, by the spectral theory of compact operators, 
the eigenfunctions $\{\intkernelvar_n\}_{n=1}^{N}$ (possibly with $N = \infty$) of $A$ form an orthonormal basis for $\mathcal{H}_1$ so that $A^{\tau}\psi=\sum_{n=1}^{N}\lambda_n^{\tau}\langle \intkernelvar_n,\psi\rangle_{\mathcal{H}_1}\intkernelvar_n$ for a real number $\tau$. If $\tau <0$, the domain of $A^{\tau}$ is on the subspace $S_{\tau}$ of $\mathcal{H}_1$ given by 
$S_\tau=\{\sum_{n=1}^{N} a_n\intkernelvar_n|\sum_{n=1}^{N}(a_n\lambda_n^{\tau})^2 \text{ is convergent}\}$. If $h \not\in S_\tau$, then $\|A^{\tau}h\|_{\mathcal{H}_1}=\infty.$

Let $\mathcal{H}$ be a Hilbert space, and $A,B \in \mathcal{B}(\mathcal{H},\mathcal{H})$.  For two self-adjoint operators $A, B$, that is, $A^*=A$ and $B^*=B$,  we say that $A\geq B$ if $A-B$ is a positive operator, i.e. $\langle (A-B)h,h\rangle_{\mathcal{H}} \geq 0$ for all $h \in \mathcal{H}$.   Let $\{e_i\}_{i\in I}$ be an orthonormal basis of $\mathcal{H}$. The trace of $B$ is defined as $\mathrm{Tr}(B)=\sum_{i \in I} \langle Be_i,e_i \rangle_{\mathcal{H}}$. $A$ is a Hilbert Schmidt operator if $\sum_{i \in I} \|Ae_i\|^2_{\mathcal{H}}<\infty$, i.e., $\mathrm{Tr}(A^*A)<\infty$. $\|A\|_{HS}$ denotes its Hilbert–Schmidt norm that satisfies $\|A\|_{HS}^2=\mathrm{Tr}(A^*A)$. $A$ is said to be in the trace class if $\mathrm{Tr}(|A|)<\infty$ for $|A|=\sqrt{A^*A}$. Hilbert Schmidt operators and trace class operators are compact.

 For $d,N,M,L \in \mathbb{N}^+$, let $\mbf{w}=(\mbf{w}_{m,l,i})_{m,l,i=1}^{M,L,N},\mbf{z}=(\mbf{z}_{m,l,i})_{m,l,i=1}^{M,L,N}\in \mathbb{R}^{dNML}$ with  
$\mbf{w}_{m,l,i},\mbf{z}_{m,l,i} \in\mathbb{R}^d$, we define 
\begin{equation}\label{winnerp}
\langle \mbf{w}, \mbf{z}\rangle =\frac{1}{MLN}\sum_{m,l,i=1}^{M,L,N} \langle \mbf{w}_{m,l,i}, \mbf{z}_{m,l,i}\rangle,
\end{equation} where $\langle \mbf{w}_{m,l,i}, \mbf{z}_{m,l,i}\rangle$ is the canonical inner product on $\mathbb{R}^d$ (without normalization).

\paragraph{Preliminaries on RKHSs}  Let $\mathcal{D}$ be a compact subset of $\mathbb{R}^D$. We say that $K: \mathcal{D}\times \mathcal{D} \rightarrow \mathbb{R}$ is a Mercer Kernel if it is continuous, symmetric, and  positive semidefinite, i.e., for any finite set of distinct points $\{x_1,\cdots,x_M\} \subset \mathcal{D},$ the matrix $(K(x_i,x_j))_{i,j=1}^{M}$ is positive semidefinite.  For $x \in \mathbb{R}^D$, $K_{x}$ is a function defined on $\mathcal{D}$ such that $K_{x}(y)=K(x,y)$, $y \in \mathcal{D}$. The Moore–Aronszajn theorem proves that there is an  RKHS $\mH$ associated with  the kernel $K$, which is defined to be the closure of the linear span of the set of functions $\{K_x:x\in \mathcal{D}\}$ with respect to the inner product $\langle \cdot,\cdot\rangle_{\mH}$ satisfying $\langle K_x,K_y\rangle_{\mH}=K(x,y)$.

To ensure the  system \eqref{odes} has a unique solution for arbitrary initial conditions, we assume the true interaction kernel $\intkernel_{\mathrm{true}}$ lies in
 a suitable function space. 
\begin{assumption}\label{assumption1}
  $\phi_{\mathrm{true}}$ lies in a RKHS $\mH$ spanned by a Mercer Kernel $\mK$  defined on $[0,R]\times [0,R]$ for some $R>0$. In particular, $ \kappa^2=\sup_{r\in [0,R]} {\mK}(r,r) <\infty.$
\end{assumption}

Assumption \ref{assumption1} implies that functions in $\mH$ are continuous. Examples of RKHSs include common Sobolev spaces used in the differential equation literature. It is important to note that we only use this assumption in our theoretical analysis. In our numerical section, we use the Mat\'ern  kernel, and our interaction function is not necessarily compactly supported.

For the sake of conciseness, we will drop the subscript and use $\intkernel$ to represent the true interaction kernel. 

\section{GP Based algorithm for second-order systems with external forces}\label{sec:learningapproach}

In this section, we present the  algorithm for the second-order particle swarm model:  for $i=1,\cdots,N$,
\begin{equation}
 \begin{aligned}\label{2ndodes}
m_i\ddot\bx_i(t) = \mbf{F}_i(\bx_i(t), \dot\bx_i(t), \mbf{\alpha}_i) + \sum_{i'=1}^N \frac{1}{N} \Big[\intkernel (\norm{\bx_{i'}(t) - \bx_i(t)})(\bx_{i'}(t) - \bx_i(t)) \Big].
\end{aligned}
 \end{equation}

 We shall use the compact form of a second-order system as follows 

\begin{equation}
 \bZ(t) =\mbf{F}(\bY(t),\mbf{\alpha}) + \rhsfo_\intkernel(\bX(t)).
 \label{eq:2ndOrder_compact}
 \end{equation}
 
We summarize the notation in Table \ref{2ndorder}.

\begin{table}[!htb]
\caption{Notation for second-order systems}
\label{tab:2ndOrder_vecdef} 
\centering
\small{
\small{\begin{tabular}{ l l }
\toprule
Variable                    & Definition \\
\midrule
$\bX \in \R^{dN}$ &  vectorization of position vectors $(\bx_i)_{i=1}^{N}$\\
\midrule
$\bV \in \R^{dN}$ & vectorization of velocity vectors $(\bv_i)_{i=1}^{N} = (\dot{\bx}_i)_{i=1}^{N}$\\
\midrule
$\bY \in \R^{2dN} $ & $\bY = (\bX, \bV)^T$\\
\midrule
$\bZ \in \R^{dN} $ & vectorization of $(m_i\ddot{\bx}_i)_{i=1}^{N}$\\
\midrule
$\br^{\bX}_{ij}, \br^{\bX'}_{ij}\in \R^d$ & $\bx_j-\bx_i$, $\bx'_j-\bx'_i$  \\
\midrule
$r^{\bX}_{ij},r^{\bX'}_{ij}\in \R^+$ & $r^{\bX}_{ij}=\|\br^{\bX}_{ij}\|,  r^{\bX'}_{ij}=\|\br^{\bX'}_{ij}\|$ \\
\midrule
$\mbf{F}(\cdot,\balpha)$ & the non-collective force with parameter $\balpha$\\
\midrule
$\rhsfo_{\intkernele}$ & energy-based interaction force field\\
\midrule
$\mathrm{vec}(\{a_i\}_{i=1}^n) \in \mathbb{R}^n$ & $\mathrm{vec}(\{a_i\}_{i=1}^n) = (a_1,\dots,a_n)^T$, vectorization of the set $\{a_i\}_{i=1}^n$\\
\bottomrule
\end{tabular}}  
}\label{2ndorder}
\end{table}

Note that if $m_i= 0$ for all $i$, then the system \eqref{2ndodes} becomes a first-order system. We are interested in learning $\phi$ and $\mbf{\alpha}$ from data. By modeling  $\phi$ as a GP, the joint distribution of the acceleration field at any two time instances is still Gaussian, as shown in the following lemma:

\begin{lemma}
\label{lemma:prior}

Let $\intkernel$ be a Gaussian process with mean zero and covariance function $K_\theta : [0, R] \times [0, R] \to \mathbb{R}$, i.e., $\intkernel \sim\mathcal{GP}(0,K_\theta(r,r'))$, and $\bZ(t) =\force(\bY(t),\mbf{\alpha}) + \rhsfo_\intkernel(\bX(t))$ as defined in  \eqref{eq:2ndOrder_compact}. Then for any $t,t' \in [0,T]$, we have that,
 \begin{equation}
 \begin{bmatrix}
 \bZ(t)\\ \bZ(t')
 \end{bmatrix}
 \sim \mathcal{N} \left(
 \begin{bmatrix}
 \force(\bY(t),\mbf{\alpha})\\ \force(\bY(t'),\mbf{\alpha})
 \end{bmatrix}
 , K_{\rhsfo_{\intkernel}}(\bX(t),\bX(t'))\right),
\label{eqZdist2}
\end{equation}
where $K_{\rhsfo_{\intkernel}}(\bX(t),\bX(t')))$ is the covariance matrix $
 \cov(\rhsfo_{\intkernel}(\bX(t)),\rhsfo_{\intkernel}(\bX(t'))) $
with the $(i,j)$-th block
\begin{eqnarray}
\label{eqSigma2}
 \cov([\rhsfo_{\intkernel}(\bX)]_i,[\rhsfo_{\intkernel}(\bX')]_j)=\frac{1}{N^2}\sum_{k\neq i,k'\neq j} K_\theta(r^{\bX}_{ik},r^{\bX'}_{jk'})\br^{\bX}_{ik}{\br^{\bX'}_{jk'}}^T.
\end{eqnarray}
\end{lemma}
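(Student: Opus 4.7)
\textbf{Proof proposal for Lemma \ref{lemma:prior}.}

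The plan is to exploit the fact that $\rhsfo_{\intkernel}(\bX)$ is a \emph{linear} functional of $\intkernel$ for each fixed configuration $\bX$. Indeed, reading off the definition from \eqref{eq:2ndOrder},
\[
    [\rhsfo_{\intkernel}(\bX)]_i \;=\; \frac{1}{N}\sum_{k\neq i} \intkernel(r^{\bX}_{ik})\,\br^{\bX}_{ik},
\]
so each coordinate of $\rhsfo_{\intkernel}(\bX(t))$ and $\rhsfo_{\intkernel}(\bX(t'))$ is a finite linear combination of point-evaluations of $\intkernel$ at the pairwise distances $\{r^{\bX(t)}_{ik}\}$ and $\{r^{\bX(t')}_{jk'}\}$, with coefficient vectors depending only on $\bX(t),\bX(t')$. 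Since $\intkernel\sim\mathcal{GP}(0,K_\theta)$, by definition any finite collection of point-evaluations of $\intkernel$ is jointly Gaussian; hence any finite family of linear combinations thereof is jointly Gaussian. This shows directly that the stacked vector $[\rhsfo_{\intkernel}(\bX(t));\,\rhsfo_{\intkernel}(\bX(t'))]$ is Gaussian (conditionally on $\bX(t),\bX(t')$, which are the only sources of randomness besides $\intkernel$ in the statement).

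Next I would compute the mean. Linearity of expectation and $\mathbb{E}[\intkernel(r)] = 0$ give $\mathbb{E}[\rhsfo_{\intkernel}(\bX(t))] = 0$; adding the deterministic term $\forcev_{\mbf{\alpha}}(\bY(t))$ yields $\mathbb{E}[\bZ(t)] = \forcev_{\mbf{\alpha}}(\bY(t))$, which matches the first slot in \eqref{eqZdist2}. For the covariance, the deterministic shift contributes nothing, so $\cov(\bZ(t),\bZ(t')) = \cov(\rhsfo_{\intkernel}(\bX(t)),\rhsfo_{\intkernel}(\bX(t')))$. Expanding the $(i,j)$-block using bilinearity of covariance,
\[
    \cov\!\bigl([\rhsfo_{\intkernel}(\bX)]_i,[\rhsfo_{\intkernel}(\bX')]_j\bigr)
    = \frac{1}{N^2}\sum_{\substack{k\neq i\\k'\neq j}} \cov\!\bigl(\intkernel(r^{\bX}_{ik}),\intkernel(r^{\bX'}_{jk'})\bigr)\,\br^{\bX}_{ik}\bigl(\br^{\bX'}_{jk'}\bigr)^{T},
\]
and by the defining property of the GP, $\cov(\intkernel(r),\intkernel(r')) = K_\theta(r,r')$. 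This gives exactly \eqref{eqSigma2}.

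Finally, since joint Gaussianity plus the computed mean and covariance pin down the distribution, \eqref{eqZdist2} follows. There is no genuine technical obstacle here; the lemma is essentially bookkeeping from (i) linearity of $\intkernel\mapsto\rhsfo_{\intkernel}$, (ii) the Gaussian process property, and (iii) zero-mean plus bilinearity of covariance. The only point that deserves a short sentence of care is that all covariance computations are carried out conditionally on the state trajectories $\bX(t),\bX(t')$, so that the coefficient vectors $\br^{\bX}_{ik}$ may be pulled out of the covariance; the additive deterministic term $\forcev_{\mbf{\alpha}}(\bY(t))$ then only shifts the mean and preserves Gaussianity of the resulting vector.
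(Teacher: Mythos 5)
Your proposal is correct and follows essentially the same route as the paper's proof: both rest on the linearity of $\intkernelvar\mapsto\rhsfo_{\intkernelvar}$, the defining property that finite collections of point-evaluations of a GP (and hence their linear combinations with deterministic coefficients) are jointly Gaussian, and bilinearity of covariance to obtain \eqref{eqSigma2}, with the deterministic term $\forcev_{\mbf{\alpha}}(\bY(t))$ only shifting the mean. Your explicit remark that the computation is carried out conditionally on the states $\bX(t),\bX(t')$ is a slightly more careful phrasing of what the paper does implicitly, but it is not a different argument.
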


\begin{proof}
Since  $\intkernel \sim \mathcal{GP}(0,K_\theta(r,r'))$, for any $r,r' \in [0,R]$, we have that,
 \begin{eqnarray}
 \mathbb{E}[\intkernel(r)]&=&0,\\
 \cov[\intkernel(r),\intkernel(r')]&=&K_\theta(r,r').
\end{eqnarray}
Therefore, for any collection of states $\{r_i\}_{i=1}^n \subset [0,R]$, and $\{a_i\}_{i=1}^n, \{b_i\}_{i=1}^n \subset \mathbb{R}$, the linear operator on function values $\mathcal{L}(\{\intkernel(r_i)\}_{i=1}^n) : = (a_i \intkernel(r_i)+b_i)_{i=1}^n$ satisfies
\begin{equation}
    \mathcal{L}(\{\intkernel(r_i)\}_{i=1}^n) \sim \mathcal{N}(\mathrm{vec}(\{b_i\}_{i=1}^n), \Sigma_{\mathcal{L}(\intkernel)}),
\label{eq:phi_cov}
\end{equation}
where $\mathcal{N}$ denotes the Gaussian distribution, $\mathrm{vec}(\{b_i\}_{i=1}^n) \in \mathbb{R}^n$ is the vectorization of $\{b_i\}_{i=1}^n$, and the covariance matrix   $\Sigma_{\mathcal{L}(\intkernel)} = \{a_ia_jK_\theta(r_i,r_j)\}_{i,j=1}^{n} \in \mathbb{R}^{n\times n}$. 

Note that 
\begin{equation}
\label{eq: rhsfo}
    [\rhsfo_{\intkernel}(\bX(t))]_i = \sum_{i'=1}^N \frac{1}{N} \intkernel (\norm{\bx_{i'} - \bx_i})(\bx_{i'} - \bx_i),
\end{equation}
which is linear in $\intkernele$. So for any $t$, $t'$, using \eqref{eq:phi_cov}, we have that,
 \begin{equation}
 \begin{bmatrix}
 \rhsfo_{\intkernel}(\bX(t))\\
 \rhsfo_{\intkernel}(\bX(t'))
 \end{bmatrix}
 \sim \mathcal{N} (\bm{0}, K_{\rhsfo_{\intkernel}}(\bX(t),\bX(t'))),
\end{equation}
where $K_{\rhsfo_{\intkernel}}(\bX(t),\bX(t')))$ is the covariance matrix $
 \cov(\rhsfo_{\intkernel}(\bX(t)),\rhsfo_{\intkernel}(\bX(t')))$
with the $(i,j)$-th block
\begin{eqnarray}
 \cov([\rhsfo_{\intkernel}(\bX)]_i,[\rhsfo_{\intkernel}(\bX')]_j)=\frac{1}{N^2}\sum_{k\neq i,k'\neq j} K_\theta(r^{\bX}_{ik},r^{\bX'}_{jk'})\br^{\bX}_{ik}{\br^{\bX'}_{jk'}}^T.
\end{eqnarray}

Since $
 \bZ(t) =\force(\bY(t),\mbf{\alpha}) + \rhsfo_\intkernel(\bX(t))$, the observation $\bZ$ in the model follows the Gaussian distribution
 \begin{equation}
 \begin{bmatrix}
 \bZ(t)\\ \bZ(t')
 \end{bmatrix}
 \sim \mathcal{N} (
 \begin{bmatrix}
 \force(\bY(t),\mbf{\alpha})\\ \force(\bY(t'),\mbf{\alpha})
 \end{bmatrix}
 , K_{\rhsfo_{\intkernel}}(\bX(t),\bX(t'))).
\end{equation}
This completes the proof.
\end{proof}

\paragraph{Observation data regime} We fix $L$ time stamps with $0 = t_1 < t_2 <\cdots t_L = T$ on $[0, T]$ and obtain the trajectory data $\{\bY(t_l), \bZ_{\sigma^2}(t_l): 1\leq l\leq L\}$ as one training instance, where $\sigma^2$ denotes the unknown variance of additive Gaussian noise specified below. Furthermore, we hold the following two assumptions on training data of $M$ training instances:
\begin{enumerate}
    \item The $M$ initial conditions $\{\bY^{(m)}(0): 1\leq m \leq M\}$ are drawn randomly from a probability measure  $\mbf{\mu}_0=[\mu_0^{\bX},\mu_0^{\dot\bX} ]^T$ on $\R^{2dN}$.
    \item The accelerations $\{\bZ^{(m)}(t_l): 1\leq l\leq N, 1\leq m\leq M\}$ are observed with i.i.d additive Gaussian noise $\mbf{\epsilon}\sim \mathcal{N}(\mbf{0}, \sigma^2 I_{dN\times dN})$, so that the data is denoted by $\bZ^{(m)}_{\sigma^2}(t_l)$.
\end{enumerate}

\begin{remark} The  Gaussian assumptions on observational noise are necessary for us to derive the closed formulas of the estimators. In the actual algorithm, we can approximate the velocity and acceleration from the position data. The resulting estimators will be approximations of the estimators obtained in the ideal data regime. 
\end{remark}

Applying Lemma \ref{lemma:prior}, we  now derive the negative log marginal likelihood for training parameters $\balpha$, $\theta$, and $\sigma$, with given observational data as specified above.

\begin{proposition}
\label{prop: liklihood}
Denote $\bY^{(m,l)}=\bY^{(m)}(t_l)$  and $\bZ^{(m,l)}_{\sigma^2}=\bZ^{(m)}(t_l)+\epsilon^{(m,l)}$ with i.i.d noise $\mbf{\epsilon}^{(m,l)} \sim \mathcal{N}(0, \sigma^2 I_{dN\times dN})$. Suppose we are given the training data set $(\bbY_M,\bbZ_{\sigma^2,M}): = \{(\bY^{(m,l)},$\\$\bZ^{(m,l)}_{\sigma^2})\}_{m,l=1}^{M,L}$ for $M,L \in \mathbb{N}$, such that
\begin{equation}
    \bZ^{(m,l)}_{\sigma^2}  = \force(\bY^{(m,l)},\mbf{\alpha}) + \rhsfo_{\intkernel}(\bX^{(m,l)}) + \mbf{\epsilon}^{(m,l)},
\end{equation}
with $\force(\cdot,\mbf{\alpha})$, $\rhsfo_{\intkernel}$ defined in Table \ref{tab:2ndOrder_vecdef}. Then the {negative log} marginal likelihood of $\bbZ_{\sigma^2,M}$ given $\bbY_M$ and parameters $\balpha$, $\theta$, $\sigma$ satisfies
\begin{eqnarray}
    &-&\log p(\bbZ_{\sigma^2,M}|\bbY_M,\mbf{\alpha},\theta,\sigma^2) \\
    &=& \frac{1}{2} (\bbZ_{\sigma^2,M} - \force(\bbY_M,\mbf{\alpha}))^T(K_{\rhsfo_{\intkernel}}(\bbX_M,\bbX_M;\theta) + \sigma^2I)^{-1}(\bbZ_{\sigma^2,M} - \force(\bbY_M,\mbf{\alpha}))\notag\\ &\ & \qquad +\frac{1}{2}\log|K_{\rhsfo_{\intkernel}}(\bbX_M,\bbX_M;\theta)+\sigma^2I| + \frac{dNML}{2} \log 2\pi.
\label{apd eq:likelihood}
\end{eqnarray} 
where $K_{\rhsfo_\intkernel}(\bbX_M,\bbX_M;\theta)$ denotes the covariance matrix between $\rhsfo_{\intkernel}(\bbX_M)$ and $\rhsfo_{\intkernel}(\bbX_M)$, $I$ is the identity matrix of consistent size. 
\end{proposition}
 
\begin{proof} Using Lemma \ref{lemma:prior}, since $\epsilon^{(m,l)}$ is i.i.d Gaussian noise and is independent of the initial distributions, we have that
\begin{equation}
    \bbZ_{\sigma^2,M} \sim \mathcal{N}(\force(\bbY_M,\mbf{\alpha}), K_{\rhsfo_{\intkernel}}(\bbX_M,\bbX_M;\theta) + \sigma^2 I_{dNML}),
\end{equation}
where the mean vector $\force(\bbY_M,\mbf{\alpha}) = \mathrm{vec}((\force(\bY^{(m,l)},\mbf{\alpha}))_{m,l=1}^{M,L})\in \mathbb{R}^{dNML}$, and the covariance matrix $K_{\rhsfo_{\intkernel}}(\bbX_M,\bbX_M;\theta) = \big(\cov(\rhsfo_{\intkernel}(\bX^{(m,l)}),\rhsfo_{\intkernel}(\bX^{(m',l')})) \big)_{m,m',l,l'=1}^{M,M,L,L} $ can be computed componentwise using \eqref{eqSigma2}. According to the properties of the Gaussian distribution, given $\bbY$ and parameters $\balpha$, $\theta$, $\sigma$, we have the negative log marginal likelihood function as shown in \eqref{apd eq:likelihood}.
\end{proof}
 
 As mentioned earlier, we can apply the gradient-based method \citep{liu1989limited}, to minimize the negative log marginal likelihood and solve for the hyper-parameters $(\mbf{\alpha}, \theta, \sigma)$.  
\begin{proposition} 
\label{prop: derivs}
Let  $\bgamma = (K_{\rhsfo_{\intkernel}}(\bbX_M,\bbX_M;\theta)+ \sigma^2I)^{-1} (\bbZ_{\sigma^2,M} - \force(\bbY_M,\mbf{\alpha}))$. The  partial derivatives of the marginal likelihood w.r.t. the parameters $\mbf{\alpha},\theta,$ and $\sigma$ can be computed as follows:
\begin{align}
\frac{\partial}{\partial \mbf{\alpha}_i} \log p(\bbZ_{\sigma^2,M}|\bbY_M,\mbf{\alpha},\theta,\sigma^2) &= \bgamma^T \frac{\partial \force(\bbY_M,\mbf{\alpha})}{\partial \mbf{\alpha}_i}.
\label{eqalpha}\\
\frac{\partial}{\partial \theta_j} \log p(\bbZ_{\sigma^2,M}|\bbY_M,\mbf{\alpha},\theta,\sigma^2) &= \frac{1}{2} \mathrm{Tr}\left( (\bgamma \bgamma^T - (K_{\rhsfo_{\intkernel}}(\bbX_M,\bbX_M;\theta) + \sigma^2I)^{-1}) \frac{\partial K_{\rhsfo_{\intkernel}}(\bbX_M,\bbX_M;\theta)}{\partial \theta_j}\right).
\label{eqtheta}\\
\frac{\partial}{\partial \sigma} \log p(\bbZ_{\sigma^2,M}|\bbY_M,\mbf{\alpha},\theta,\sigma^2) &=  \mathrm{Tr}\left( (\bgamma \bgamma^T - (K_{\rhsfo_{\intkernel}}(\bbX_M,\bbX_M;\theta) + \sigma^2I)^{-1}) \right)\sigma.
\label{eqsigma}
\end{align}

\end{proposition}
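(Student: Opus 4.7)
The plan is to treat this as a direct matrix-calculus computation on the closed-form expression for $-\log p$ given in Proposition \ref{thm: liklihood}. First I would fix compact notation, writing $C(\theta,\sigma) := K_{\rhsfo_{\intkernel}}(\bbX,\bbX;\theta) + \sigma^2 I$ and $\br(\balpha):=\bbZ - \forcev_{\mbf{\alpha}}(\bbY)$, so that
\begin{equation}
-\log p(\bbZ|\bbY,\balpha,\theta,\sigma^2) = \tfrac{1}{2}\br(\balpha)^T C(\theta,\sigma)^{-1}\br(\balpha) + \tfrac{1}{2}\log|C(\theta,\sigma)| + \text{const},
\end{equation}
and $\bgamma = C^{-1}\br$. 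Crucially, $\bbZ$, $\bbY$, and $\bbX$ are data (parameter-independent), so only the explicit parameter dependencies in $\br$ and $C$ matter: $\balpha$ enters only through $\br$, while $\theta$ and $\sigma$ enter only through $C$.

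For the $\balpha_i$ derivative, since only the quadratic form depends on $\balpha$, I would apply the chain rule using the symmetry of $C^{-1}$, yielding
\begin{equation}
\tfrac{\partial}{\partial \alpha_i}\left[\tfrac{1}{2}\br^T C^{-1}\br\right] = \tfrac{\partial \br^T}{\partial \alpha_i}C^{-1}\br = -\tfrac{\partial \forcev_{\balpha}(\bbY)^T}{\partial \alpha_i}\bgamma,
\end{equation}
and flip the sign to pass from $-\log p$ to $\log p$, yielding \eqref{eqalpha}. For the $\theta_j$ derivative, I would invoke two standard matrix identities: $\partial_{\theta_j}\log|C| = \mathrm{Tr}(C^{-1}\partial_{\theta_j}C)$ and $\partial_{\theta_j}C^{-1} = -C^{-1}(\partial_{\theta_j}C)C^{-1}$. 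Applying the second to the quadratic form and combining with the log-determinant derivative gives
\begin{equation}
\tfrac{\partial}{\partial \theta_j}\log p = \tfrac{1}{2}\bgamma^T(\partial_{\theta_j}C)\bgamma - \tfrac{1}{2}\mathrm{Tr}(C^{-1}\partial_{\theta_j}C);
\end{equation}
using the cyclic property $\bgamma^T M \bgamma = \mathrm{Tr}(\bgamma\bgamma^T M)$ collapses the two terms into a single trace of $(\bgamma\bgamma^T - C^{-1})\partial_{\theta_j}C$, and since $\partial_{\theta_j}(\sigma^2 I) = 0$ one may replace $\partial_{\theta_j}C$ with $\partial_{\theta_j}K_{\rhsfo_{\intkernel}}(\bbX,\bbX;\theta)$, recovering \eqref{eqtheta}.

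The $\sigma$ derivative follows the same template but with $\partial_\sigma C = 2\sigma I$, so that $\mathrm{Tr}((\bgamma\bgamma^T - C^{-1})\partial_\sigma C) = 2\sigma\,\mathrm{Tr}(\bgamma\bgamma^T - C^{-1})$, and the factor $\tfrac{1}{2}$ absorbs one of the $2$'s, producing \eqref{eqsigma}. There is no genuine obstacle here; the content is routine matrix calculus and the only care required is correct sign-tracking when converting between $\log p$ and $-\log p$, and recognizing that the $\sigma^2 I$ term in $C$ is differentiable in $\sigma$ only for the $\sigma$ derivative and inert for the $\theta$ derivative. The compact trace form of the result ultimately rests on the cyclic property of the trace together with the symmetry of $C$ and $C^{-1}$.
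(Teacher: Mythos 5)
Your computation is correct. The paper states this proposition without proof (it is the standard gradient of the Gaussian log marginal likelihood, cf.\ Chapter 5 of Rasmussen--Williams), and your derivation is exactly the argument one would supply: the sign bookkeeping between $\log p$ and $-\log p$, the identities $\partial\log|C|=\mathrm{Tr}(C^{-1}\partial C)$ and $\partial C^{-1}=-C^{-1}(\partial C)C^{-1}$, the cyclic-trace rewriting via $\bgamma\bgamma^T$, and the observation that $\partial_\sigma(\sigma^2 I)=2\sigma I$ absorbs the factor $\tfrac12$ are all handled correctly, and you rightly note that $\balpha$ enters only through the residual while $\theta,\sigma$ enter only through the covariance.
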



 After optimization of the log likelihood using the computed partial derivatives, we obtain maximum likelihood estimators denoted by  $\hat \theta$, $\hat \balpha$, and $\hat \sigma$.

 Next, we show the detailed derivation of our estimators for the prediction of $\intkernel(r^*)$ at $r^\ast \in [0,R]$ if $\theta$, $\balpha$, and $ \sigma$ are known. 


\begin{theorem}
Suppose  the parameters $\theta$, $\balpha$, and $\sigma$ are known and we are given the training data set $(\bbY_M,\bbZ_{\sigma^2,M}): = \{(\bY^{(m,l)},\bZ^{(m,l)}_{\sigma^2})\}_{m,l=1}^{M,L}$ defined in Proposition \ref{prop: liklihood},  Then for any $r^\ast \in [0,R]$, $\intkernele(r^\ast)$ satisfies
\begin{equation}
    p(\intkernele(r^\ast)|\bbY_M,\bbZ_{\sigma^2,M}) \sim \mathcal{N}(\bar{\intkernel}^{\ast},\mathrm{Var}(\intkernele^\ast)),
\end{equation}
where
\begin{align}
    \bar{\intkernel}^{\ast} &= K_{\intkernele,\rhsfo_\intkernel}(r^\ast,\bbX_M)(K_{\rhsfo_{\intkernel}}(\bbX_M,\bbX_M) + \sigma^2I)^{-1}(\bbZ_{\sigma^2,M} - \force(\bbY_M,\mbf{\alpha})),
\label{eq:estimated phie}    \\
    \mathrm{Var}(\intkernele^\ast) &= K_\theta(r^\ast,r^\ast) - K_{\intkernele,\rhsfo_\intkernel}(r^\ast,\bbX_M)(K_{\rhsfo_{\intkernel}}(\bbX_M,\bbX_M) + \sigma^2I)^{-1}K_{\rhsfo_\intkernel,\intkernele}(\bbX_M,r^\ast).
    \label{eq:estimated var phie}
\end{align}
and $K_{\rhsfo_\intkernel,\intkernel}(\bbX_M, r^*) = K_{\intkernel,\rhsfo_\intkernel}(r^*,\bbX_M)^T$ denotes the covariance matrix between $\rhsfo_{\intkernel}(\bbX_M)$ and $\intkernel(r^*)$. 
\end{theorem}

\begin{proof}
Since $\rhsfo_{\intkernel}(\bbX_M)$ is defined componentwisely by \eqref{eq: rhsfo}, for any $r^\ast \in [0,R]$, we have that
  \begin{equation}
    \begin{bmatrix}
    \rhsfo_{\intkernel}(\bbX_M)\\
    \intkernele(r^\ast)
    \end{bmatrix}
    \sim \mathcal{N} \left( 0,
    \begin{bmatrix}
    K_{\rhsfo_{\intkernel}}(\bbX_M, \bbX_M) & K_{\rhsfo_\intkernel,\intkernele}(\bbX_M, r^\ast)\\
    K_{\intkernel,\rhsfo_\intkernel}(r^\ast, \bbX_M) & K_\theta(r^\ast,r^\ast)
    \end{bmatrix}
    \right),
\end{equation} 
where $K_{\rhsfo_\intkernel}(\bbX_M, \bbX_M)$ is the covariance matrix between $\rhsfo_{\intkernel}(\bbX_M)$ and $\rhsfo_{\intkernel}(\bbX_M)$ as we defined in Proposition \ref{prop: liklihood}, and $K_{\rhsfo_\intkernel,\intkernele}(\bbX_M, r^*) = K_{\intkernele,\rhsfo_\intkernel}(r^*,\bbX_M)^T$ is the covariance matrix between $\rhsfo_{\intkernel}(\bbX_M)$ and $\intkernele(r^*)$, i.e., $K_{\rhsfo_\intkernel,\intkernele}(\bbX_M, r^*) = (\cov(\rhsfo_{\intkernel}(\bX^{(m,l)}),\intkernel(r^\ast)))_{m,l=1}^{M,L}$ and the i-th component of $\cov(\rhsfo_{\intkernel}(\bX^{(m,l)}),\intkernel(r^\ast))$ is computed by
\begin{equation}
    \cov([\rhsfo_{\intkernel}(\bX^{(m,l)})]_i, \intkernel(r^\ast)) = \frac{1}{N} \sum_{k \neq i} K_\theta(r_{ik}^{\bX}, r^\ast) \br_{ik}^{\bX}.
\end{equation}
Note that $\bZ^{(m,l)}_{\sigma^2}  = \force(\bY^{(m,l)},\mbf{\alpha}) + \rhsfo_{\intkernel}(\bX^{(m,l)}) + \epsilon^{(m,l)}$ with i.i.d noise $\mbf{\epsilon}^{(m,l)} \sim \mathcal{N}(0, \sigma^2 I_{dN})$ for all $(m,l)$, so we have
  \begin{equation}
    \begin{bmatrix}
    \bbZ_{\sigma^2,M} - F(\bbY_M,\balpha)\\
    \intkernele(r^\ast)
    \end{bmatrix}
    \sim \mathcal{N} \left( 0,
    \begin{bmatrix}
    K_{\rhsfo_{\intkernel}}(\bbX_M, \bbX_M) + \sigma^2 I_{dNML} & K_{\rhsfo_\intkernel,\intkernele}(\bbX_M, r^\ast)\\
    K_{\intkernel,\rhsfo_\intkernel}(r^\ast, \bbX_M) & K_\theta(r^\ast,r^\ast)
    \end{bmatrix}
    \right),
\end{equation} 
Therefore, based on the properties of the joint Gaussian distribution (see 
Lemma \ref{lemma: conditioning Gaussian}), conditioning on $(\bbY_M, \bbZ_{\sigma^2,M})$, we have that
\begin{equation}
    p(\intkernele(r^\ast)|\bbY_M,\bbZ_{\sigma^2,M},r^\ast) \sim \mathcal{N}(\bar{\intkernel}^{\ast},var(\intkernele^\ast)),
\end{equation}
where $\bar{\intkernel}^{\ast}$ and $\mathrm{Var}(\intkernele^\ast)$ are defined as in \eqref{eq:estimated phie} and \eqref{eq:estimated var phie}.
\end{proof}
We would like point out  that in practice, we use $\hat \theta$, $\hat \balpha$, and $\hat \sigma$ learning from the training set (as mentioned above) in \eqref{eq:estimated phie} and \eqref{eq:estimated var phie} to predict $\phi(r^\ast)$. The kernel used is in fact $\widehat K =K_{\hat \theta}$.

Moreover, if we consider the case when $m_i \equiv 0$, and $F_i(\bx_i(t), \dot\bx_i(t), \mbf{\alpha}) = -\dot\bx_i(t)$ all for $i =1 \dots, N$ in \eqref{2ndodes}, then it becomes the first-order systems \eqref{odes}, and we can derive the following corollary as we have shown in  \eqref{firstorder:pos} and \eqref{firstorder:var}.

\begin{corollary}\label{first-order-estimator} Suppose the parameters $\theta$ and $\sigma$ are known, and we are given the training data set $\bbY_{\sigma^2, M} = \{\bbX_{M},  \bbV_{\sigma^2,M}\}$ from the first-order systems \eqref{odes}, then for any $r^\ast \in [0,R]$, $\intkernele(r^\ast)$ satisfies

\begin{equation}
    p(\intkernele(r^\ast)|\bbY_{\sigma^2, M}) \sim \mathcal{N}(\bar{\intkernel}^{\ast},\mathrm{Var}(\intkernele^\ast)),
\end{equation}

\begin{equation}
    \bar{\intkernel}^{\ast}= {K}_{\intkernel,\rhsfo_{\intkernel}}(r^\ast,\bbX_M)(K_{\rhsfo_\intkernel}(\bbX_M,\bbX_M) + \sigma^2I)^{-1}\bbV_{\sigma^2,M}, 
\end{equation} 
\begin{equation}
\mathrm{Var}(\intkernele^\ast) = {K}(r_\ast,r_\ast) - {K}_{\intkernele,\rhsfo_{\intkernele}}(r^\ast,\bbX_M)({K}_{\rhsfo_\intkernele}(\bbX_M,\bbX_M) + \sigma^2I)^{-1}{K}_{\rhsfo_{\intkernele},\intkernele}(\bbX_M,r^\ast). 
\end{equation} 

\end{corollary}

\section{Error analysis}\label{sec:learningtheory}

Numerical results in section \ref{sec:numericalresult} show that $\mbf{\alpha}$ and $\sigma^2$ were accurately recovered from small amounts of noisy data in the training step.  In this section, we shall focus on the prediction step of our GP-based learning approach: suppose the interaction kernel is the only unknown term in the governing equation, and our goal is to establish a rigorous  quantitative framework which analyzes the error of the posterior mean  \eqref{eq:estimated phie} that approximates $\intkernel$  and the marginal posterior variance when $L$ is fixed and $M\rightarrow \infty$.


\subsection{Preliminaries}

\begin{assumption}\label{assumption2}
   The  distribution of initial conditions $\mu_0$ is compactly supported on $\mathbb{R}^{dN}$. 
\end{assumption}

Recall that  $\mK$ is a Mercer kernel that is defined on $[0,R]\times [0,R]$ and $\mH$ is the RKHS associated to $\mK$.

\begin{lemma}\label{infbound}  
    Suppose $\kappa^2=\sup_{r\in [0,R]} {\mK}(r,r) <\infty$. Then we have that, for any $\intkernelvar \in \mH$, there holds $\|\intkernelvar\|_{\infty}\leq \kappa \|\intkernelvar\|_{\mH}.$
\end{lemma}
\begin{proof}By the reproducing property of $\mK$, we have that $$|\intkernelvar(r)|=|\langle \intkernelvar, \mK_{r}\rangle_{\mH}|\leq \|\intkernelvar\|_{\mH}\|\mK_{r}\|_{\mH}\leq \kappa \|\intkernelvar\|_{\mH}.$$ The conclusion follows.  
\end{proof}

\begin{remark} \label{smoothness}The reproducing property implies that functions in $\mH$ are continuous. In general, the smoothness of the Mercer kernel is closely related to the smoothness of functions in $\mH$. Let $C^{s}([0,R])$ be the space of all functions defined on $[0,R]$
whose partial derivatives up to order $s$ are continuous with the norm $\|f\|_{C^{s}}=\sum_{|\alpha| \leq s}\|D^{\alpha}f\|_{\infty}$, and $C^{s+\epsilon}([0,R])$ denotes the subspace of $C^{s}([0,R])$ of functions with these partial derivatives to be H\"older $\epsilon$ on $[0,R]$. In \citep{smale2007learning}, it has been shown that if 
$K \in C^{2s+\epsilon}([0,R]\times [0,R])$ with $0<\epsilon<2$, the inclusion $\mH \subset C^{2s+\frac{\epsilon}{2}}([0,R]\times [0,R])$ is well-defined, bounded and
$$\|\varphi\|_{C^{s}}\leq 4^s \|K\|_{C^{2s}}^{\frac{1}{2}}\|\varphi\|_{\mH}, \forall \varphi \in \mH.$$
\end{remark}

We introduce an important measure that will be crucial in our theoretical analysis. Note that the observational variables for $\intkernel$ consist of pairwise distances. In \citep{lu2019nonparametric}, a probability measure on $\mathbb{R}^+$ that encodes the information about the dynamics marginalized to pairwise distance was introduced as
\begin{align}
\rho_T^L (dr) &:= \frac{1}{\binom N2}\sum_{l=1}^{L}\bigg[\sum_{i,i'=1, i< i' }^N \E_{\mu_0}[\delta_{r_{ii'}(t_l)}(dr)] , \bigg],
\end{align}
where $\delta$ is the Dirac $\delta$ distribution and $r_{ii'}(t_l):=|\bx_{i}(t_l)-\bx_{i'}(t_l)|$, so that $\E_{\mu_0}[\delta_{r_{ii'}(t)}(dr)]$ is the distribution of the random variable $r_{ii'}(t)$ being the position of particle $i$ at time $t$. Note that it is on the support of $\rho_T^L$ that $\intkernel$ could be learned. The probability measure $\rho_T^L$ can be thought of as an ``occupancy'' measure, in the sense that for any interval $I\subset \mathbb{R}^+$, $\rho_T^{L}(I)$ is the probability of seeing a pair of agents at a distance between them equal to a value in $I$, averaged over the observation time. It measures how much regions of $\mathbb{R}^+$ on average (over the observed times and with respect to the distribution $\mu_0$ of the initial conditions) are explored by the dynamical system.

Without loss of generality, we assume that $\rho_T^L$ is non-degenerate on $[0,R]$\footnote{For example, we can choose $\mu_0:=\mathrm{Unif}[-\frac{R}{2}, \frac{R}{2}]^{dN}$. Then $\mathrm{Supp}(\rho_T^1)=[0,R]$ and $\mathrm{Supp}(\rho_T^1)\subset \mathrm{Supp}(\rho_T^L)$ for $L>1$. }. Due to the structure of the equation, we introduce a positive measure that appears naturally in estimating the error of estimators:
\begin{equation}
\tilde \rho_T^L(r)=r^2\rho_T^L(dr)\mres{[0,R]}, r\in \mathbb{R}^+.
\label{eq:tilderho}
\end{equation}

One can refer to Section 2.1 of \citep{lu2021learning} for the analytical study of measures.

\subsection{Learning as a statistical inverse problem}\label{subsec:operator}

 For easy presentation, we restrict our attention to first-order systems,  which is a special case of second-order systems by assuming the masses of the agents are zero:
\begin{align}\label{psoperator}
 \dot\bX(t)&=\rhsfo_{\intkernel}(\bX(t)).
 \end{align}  Our analysis can be  extended to second-order systems with (known) non-collective force terms with very slight modifications.  For first-order systems, we are given the noisy trajectory data 
\begin{align}
\bV^{(m,l)}_{\sigma^2}:=\rhsfo_{\intkernel}(\bX^{(m,l)})+\mbf{\epsilon}^{(m,l)}, \quad m=1,\cdots,M;\ l=1,\cdots,L, 
\end{align}
where $\bX^{(m,l)}=\bX^{(m)}(t_l)$ and $\mbf{\epsilon}^{(m,l)}$ is the additive Gaussian noise with variance $\sigma^2 I$ independent of $\mu_0$. The trajectory data is indeed of the type needed for the nonparametric regression of $\rhsfo_{\intkernel}$. One can construct an empirical quadratic risk functional 
\begin{align}\label{err1}
\frac{1}{ML}\sum_{m,l=1}^{M,L} \|\bV^{(m,l)}_{\sigma^2}-\mbf{f}(\bX^{(m,l)})\|^2
\end{align} to find the least square estimator of $\rhsfo_{\intkernel}$ over a hypothesis function space.

 In this paper, we are interested in the data regime: $L$ fixed, $M\rightarrow \infty$. In the case of $M= \infty$, the expectation of risk functional \eqref{err1} becomes 
\begin{align}\label{err2}
 \|\rhsfo_{\intkernel}(\bX)-\mbf{f}(\bX)\|_{L^2(\rho_{\bX})}^2
\end{align}  where the probability  measure $\rho_{\bX}$ is defined by
\begin{align}\label{rhox}
\rho_{\bX}:=\mathbb{E}_{\bX(0) \sim \mu_0}\bigg[\frac{1}{L}\sum_{l=1}^{L}\delta_{\bX(t_l)}\bigg];
\end{align}  $\delta$ is the Dirac $\delta$ distribution; $\bX(t_l) \in \mathbb{R}^{dN}$ is the position vector of all agents at time $t_l$. Therefore one can find an unbiased estimator of $\rhsfo_{\intkernel}$  if the regression function space is $L^2(\mathbb{R}^{dN};\mbf{\rho}_{\bX};\mathbb{R}^{dN})$. However, the classical nonparametric regression theory \citep{gyorfi2006distribution} implies that the optimal minimax convergence rate of least square estimators is cursed by the ambient dimension $dN$, which significantly restricts their usability as soon as, say, $dN\geq 10$. It is necessary to exploit the structure of the governing equation encoded in $\rhsfo$ and shift our regression target to $\phi$. This will become an inverse problem as shown below.

\paragraph{Operator representations of the learning problem} Below, we introduce an operator $A$ to represent the learning problem and specify function spaces on which $A$ is a bounded linear operator. 

\begin{proposition}\label{propertyA}
Let $A$ be an operator defined by
\begin{align}
A\intkernelvar=\rhsfo_{\intkernelvar}
\end{align} 
{where $\intkernelvar \in \mH$ and $\rhsfo_{\intkernelvar}$ is given in \eqref{psoperator}} specifying the interaction force. Then $A$ is a linear bounded operator that maps $\mathcal{H}_K$ to $L^2(\mathbb{R}^{dN};\rho_{\bX};\mathbb{R}^{dN})$ with $\|A\|\leq \kappa R$. The adjoint operator 
$A^{*}$ satisfies
\begin{align}\label{adjoint}
A^{*}g=\int_{\bX}\frac{1}{N^2}\sum_{i=1,i'\neq i}^{N}K_{r_{ii'}}\langle \br_{ii'},g_{i}(\bX) \rangle\, d\rho_{\bX},
\end{align} where
$g=[g_1^T,\cdots,g_{N}^T]^T$ with $g_i:\mathbb{R}^{dN} \rightarrow \mathbb{R}^d$. As a consequence, {the operator $B$, defined by}
\begin{align}\label{positive}
B\intkernelvar:=A^{*}A\intkernelvar=\frac{1}{N^3}\int_{\bX}\sum_{i,i',i''}K_{r_{ii'}}\langle \intkernelvar, K_{r_{ii''}}\rangle_{\mathcal{H}_K}\langle \br_{ii'}, \br_{ii''}\rangle \,d\rho_{\bX},
\end{align}
is a trace class operator mapping
$\mathcal{H}_K$ to $\mathcal{H}_K$. In addition, $B$ can be also viewed as a bounded linear operator from $L^2(\tilde{\rho}_T^L)$ to $L^2(\tilde{\rho}_T^L)$. 
\end{proposition}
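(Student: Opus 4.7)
The argument splits into four claims: (i) boundedness of $A$; (ii) the adjoint formula \eqref{adjoint}; (iii) the formula \eqref{positive} for $B := A^*A$ and its trace-class property on $\mH$; (iv) reinterpretation of $B$ on $L^2(\tilde\rho_T^L)$. For (i), I unpack $\Rhoxnorm{A\intkernelvar}^2$ componentwise using $(A\intkernelvar)_i = \frac{1}{N}\sum_{i'\neq i}\intkernelvar(r_{ii'})\br_{ii'}$, apply the reproducing identity $\intkernelvar(r_{ii'}) = \hinnerp{\intkernelvar}{\mK_{r_{ii'}}}$ together with $|\intkernelvar(r_{ii'})| \leq \kappa\hnorm{\intkernelvar}$ (coming from $\sup_r \mK(r,r) \leq \kappa^2$), and combine with Cauchy--Schwarz and Assumption \ref{assumptionmeasure} to get $\Rhoxnorm{A\intkernelvar}^2 \leq C \hnorm{\intkernelvar}^2$ with $C$ depending on $\kappa$, $N$ and $\int \|\br_{ii'}\|^2\,d\rho_\bX$. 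For (ii), starting from the duality $\Rhoxinnerp{A\intkernelvar}{g} = \hinnerp{\intkernelvar}{A^* g}$, I substitute the reproducing identity into the componentwise expansion and exchange the Bochner integral with $\hinnerp{\cdot}{\cdot}$; the exchange is legitimate because the integrand is dominated by $\kappa\|g_i(\bX)\|\,\|\br_{ii'}\|$, which lies in $L^1(\rho_\bX)$ by Cauchy--Schwarz and Assumption \ref{assumptionmeasure}.

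For (iii), substituting $g = A\intkernelvar$ into \eqref{adjoint} and using the reproducing identity a second time to replace $\intkernelvar(r_{ii''})$ by $\hinnerp{\intkernelvar}{\mK_{r_{ii''}}}$ collects the triple sum in \eqref{positive}. The trace-class property then follows from the rank-one decomposition $B = \int\tfrac{1}{N^3}\sum_{i,i',i''}\innerp{\br_{ii'}}{\br_{ii''}}\,\mK_{r_{ii'}}\otimes\mK_{r_{ii''}}\,d\rho_\bX$, combined with the identity $\mathrm{Tr}(f\otimes g) = \hinnerp{f}{g} = \mK(r,r')$, which gives
\[
\mathrm{Tr}(B) = \int \frac{1}{N^3}\sum_{i,i',i''}\mK(r_{ii'}, r_{ii''})\innerp{\br_{ii'}}{\br_{ii''}}\,d\rho_\bX,
\]
and this is finite by $|\mK| \leq \kappa^2$, Cauchy--Schwarz on the Euclidean pairing, and Assumption \ref{assumptionmeasure}.

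For (iv), I read off from \eqref{positive} the pointwise formula
\[
(B\intkernelvar)(r) = \int\frac{1}{N^3}\sum_{i,i',i''}\mK(r,r_{ii'})\intkernelvar(r_{ii''})\innerp{\br_{ii'}}{\br_{ii''}}\,d\rho_\bX,
\]
which is meaningful for $\intkernelvar \in L^2(\tilde\rho_T^L)$ because the law of each $r_{ii''}$ is absolutely continuous with respect to $\rho_T^L$. Using $|\mK|\leq \kappa^2$ and $\|\br_{ii'}\| = r_{ii'}$, and applying Cauchy--Schwarz in $\bX$, the key observation is that the factors $r_{ii'} r_{ii''}$ combine with the $r^2$-weighting in $\tilde\rho_T^L$ to yield $\rhotnorm{B\intkernelvar} \leq C \rhotnorm{\intkernelvar}$.

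\textbf{Main obstacle.} The substantive work is bookkeeping rather than mathematics: the three inner-product conventions --- the RKHS inner product, the Euclidean inner product on $\mathbb{R}^{dN}$ entering $L^2(\rho_\bX;\mathbb{R}^{dN})$ with its implicit $1/N$ normalization matching the $1/N$ in $\rhsfo$, and the averaged pairing in \eqref{winnerp} --- must be kept consistent so that the $1/N^2$ and $1/N^3$ factors in \eqref{adjoint}--\eqref{positive} come out exactly. The remaining estimates (boundedness, adjoint, trace-class, and $L^2(\tilde\rho_T^L)$-boundedness) are then routine from Cauchy--Schwarz and Mercer's theorem, the delicate point in (iv) being that the $r^2$ weight in $\tilde\rho_T^L$ is precisely what compensates for the magnitudes of the coefficient vectors $\br_{ii'}$.
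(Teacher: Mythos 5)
Your proposal is correct and follows the same overall skeleton as the paper (boundedness, adjoint via duality and Bochner-integral interchange, substitution to get $B$, trace class, then the $L^2(\tilde\rho_T^L)$ reinterpretation), but two sub-steps are argued differently. For boundedness, the paper does not do the direct componentwise Cauchy--Schwarz you propose; it routes through the measure $\tilde\rho_T^L$ via the chain $\Rhoxnorm{A\intkernelvar}^2 \leq \frac{N-1}{N}\rhotnorm{\intkernelvar}^2 \leq R^2\|\intkernelvar\|_\infty^2\leq \kappa^2R^2\hnorm{\intkernelvar}^2$ (Lemmas \ref{lem1} and \ref{infbound}). Your bound with $C$ depending on $\int\|\br_{ii'}\|^2 d\rho_{\bX}$ suffices for this proposition, but the paper's detour buys the explicit operator-norm bound $\|A\|^2\leq\kappa^2R^2$, which is exactly the upper constant needed later in \eqref{positivity}; if you only prove boundedness your way, you would have to redo this estimate for Proposition \ref{wellpos}. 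For the trace-class claim, the paper sums $\langle \sqrt{B}e_n,e_n\rangle$ over an orthonormal basis and reduces, via Lemma \ref{lem1} and Parseval, to $\int\langle\mK_r,\mK_r\rangle_{\mH}\,d\rhoL(r)\leq\kappa^2$; your rank-one decomposition $B=\int\frac{1}{N^3}\sum\langle\br_{ii'},\br_{ii''}\rangle\,\mK_{r_{ii'}}\otimes\mK_{r_{ii''}}\,d\rho_{\bX}$ with the direct trace computation is an equally valid alternative --- the interchange of $\sum_n$ and $\int$ is legitimate by Tonelli because the inner sum over $(i',i'')$ is a square, and positivity of $B=A^*A$ means finiteness of the trace against one basis already gives trace class. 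Your part (iv) is sketchier than the paper's (which again leans on Lemma \ref{lem1} and the bound $\|B\intkernelvar\|_{L^2(\tilde\rho_T^L)}\leq R\|B\intkernelvar\|_\infty$), but the mechanism you identify --- the $r^2$ weight in $\tilde\rho_T^L$ absorbing the coefficient magnitudes $\|\br_{ii''}\|=r_{ii''}$, together with absolute continuity of each pairwise-distance law with respect to $\rho_T^L$ --- is precisely the right one.
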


To prove the Proposition above, we first state the following Lemma:

 \begin{lemma}\label{assumptionmeasure}
If $\mu_0$ is compactly supported, then for $1\leq i,i'\leq N$, we have $\br_{ii'}(\bX)=\bx_{i'}-\bx_i \in L^2(\mathbb{R}^{dN};\rho_{\bX};\mathbb{R}^d)=\{\mbf{f}: \mathbb{R}^{dN} \rightarrow \mathbb{R}^d | \int_{\mathbb{R}^{dN}} \|\mbf{f}(\bX)\|^2 d\rho_{\bX} < \infty \}$. 
\end{lemma}

 The proof of the above lemma is similar to 
 the proof of Proposition 2 in \citep{lu2020learning}. It utilizes the standard dynamical system techniques to show  the trajectory  starting from any $\bX(0)$ sampled from $\mu_0$ is inside a bounded region in $\mathbb{R}^{dN}$ within a finite time interval $[0, T]$.  Consequently, $\br_{ii'}$ is bounded and therefore lies in the $L^2$ space. One may generalize the argument to include distributions with a fast decay, such as the Gaussian distributions. We are now ready to prove Proposition \ref{propertyA}.

\begin{proof}[Proof of Proposition \ref{propertyA}]  Lemma \ref{infbound} implies that $\mH$ can be naturally embedded as a subspace of $L^2(\tilde \rho_T^L)$. Using Lemma \ref{lem1}, we have that 
\begin{align}\label{opinequality} \Rhoxnorm{A\intkernelvar}^2=\Rhoxnorm{\rhsfo_{\intkernelvar}}^2 \leq \frac{N-1}{N} \rhotnorm{\intkernelvar}^2 < R^2\|\intkernelvar\|_{\infty}^2 \leq \kappa^2R^2\|\intkernelvar\|_{\mathcal{H}_K}^2.
\end{align}

 This shows that $A$ is a bounded linear operator mapping $\mathcal{H}_K$ to $L^2(\mathbb{R}^{dN};\rho_{\bX};\mathbb{R}^{dN})$ and $\|A\|\leq \kappa R$.

Next, we prove \eqref{adjoint}. We first show that the map for each $(i,i')$, the map
$$\bX \rightarrow K_{r_{ii'}} \in \mathcal{H}_K$$ is continuous since $\|K_{r_{ii'}}-K_{r'_{ii'}}\|_{\mathcal{H}_K}^2=K(r_{ii'}, r_{ii'})+K(r'_{ii'}, r'_{ii'})-2K(r_{ii'},r'_{ii'})$ for all $r_{ii'} = \|\bx_i-\bx_{i'}\|$, $r'_{ii'} = \|\bx'_i-\bx'_{i'}\|$, and $\bX,\bX' \in \mathbb{R}^{dN}$, and both $\mK$ and $\|\cdot\|$ are continuous. Hence given a function $g \in L^2(\mathbb{R}^{dN};\rho_{\bX};\mathbb{R}^{dN})$, the map 
$$\bX \rightarrow \frac{1}{N^2}\sum_{i=1,i'\neq i}^{N}\mK_{r_{ii'}}\langle \br_{ii'},g_{i}(\bX)\rangle$$ is measurable from $\mathbb{R}^{dN}$ to $\mathcal{H}_K$. Moreover,
$$\|\frac{1}{N^2}\sum_{i=1,i'\neq i}^{N}\mK_{r_{ii'}}\langle \br_{ii'},g_{i}(\bX)\rangle\|_{\mathcal{H}_K} \leq \frac{\kappa}{N^2}\sum_{i=1,i'\neq i}^{N} |\langle \br_{ii'},g_{i}(\bX)\rangle|. $$

By Lemma \ref{assumptionmeasure}, we have that both $ \br_{ii'}, g_{i}(\bX) \in L^2(\mathbb{R}^{dN};\rho_{\bX};\mathbb{R}^d)$. By H\"older's inequality (or Cauchy-Schwartz inequality), $\langle\br_{ii'},g_{i}(\bX)\rangle$ is in $L^1(\mathbb{R}^{dN};\rho_{\bX};\mathbb{R})$, and hence $\frac{1}{N^2}\sum_{i=1,i'\neq i}^{N}\mK_{r_{ii'}}\langle \br_{ii'},g_{i}(\bX)\rangle$ is integrable as a vector-valued map.

Finally, for any
$\psi \in \mH$,
\begin{align*}
\Rhoxinnerp{ A\psi}{g} &=\frac{1}{N} \sum_{i=1}^{N}\int_{\bX} \langle [\rhsfo_{\psi}(\bX)]_i, g_i(\bX) \rangle \,d\rho_{\bX}(\bX)\\&= \frac{1}{N^2} \sum_{i=1}^{N} \sum_{i'=1}^{N}\int_{\bX} \psi(r_{ii'})\langle \br_{ii'}, g_i(\bX)\rangle \, d\rho_{\bX}(\bX)\\
&=\frac{1}{N^2} \sum_{i=1}^{N} \sum_{i'=1}^{N}\int_{\bX} \langle \psi, \mK_{r_{ii'}}\rangle_{\mH}\langle \br_{ii'}, g_i(\bX)\rangle \, d\rho_{\bX}(\bX)\\
&= \langle \psi, \frac{1}{N^2} \sum_{i=1}^{N} \sum_{i'=1}^{N} \int_{\bX} \mK_{r_{ii'}}\langle \br_{ii'}, g_i(\bX)\rangle \,d\rho_{\bX}(\bX)\rangle_{\mH} =\langle \psi, A^*g\rangle_{\mH},
\end{align*} so by the uniqueness of the integral, \eqref{adjoint} holds. Equation \eqref{positive} follows from \eqref{adjoint} by direct calculations and the fact that the integral commutes with the scalar product.

We now prove that $B$ is a trace class operator, i.e. to show that $ \mathrm{Tr}(|B|)<\infty,$ where $|B| =\sqrt{B^*B}$. Since $B$ is positive, we have $|B| =B$. Therefore it is equivalent to show $   \mathrm{Tr}(B) <\infty$. 
\begin{align*}
  \mathrm{Tr}(B)=\mathrm{Tr}(A^*A)&=\sum_n \langle A^*Ae_n,e_n\rangle_{\mH}= \sum_n \langle Ae_n,Ae_n\rangle_{L^2(\rho_{\bX})}  \\ &=\sum_n \|\rhsfo_{e_n}(\bX)\|^2_{L^2(\rho_{\bX})}< \sum_n \|e_n\|^2_{L^2(\tilde{\rho}_T^L)}\\
&\leq R^2 \sum_n\|e_n\|^2_{L^2(\rhoL)}= R^2 \int \langle \mK_{r},\mK_{r}\rangle_{\mH} \,d\rhoL(r) \leq \kappa^2R^2,
\end{align*} where we used  Lemma \ref{lem1} to show the inequality in the second line and
\begin{align*}
\langle \mK_{r},\mK_{r}\rangle_{\mH}=\langle \sum_n \langle \mK_{r},e_n\rangle_{\mH}e_n, \mK_r \rangle_{\mH}=\langle \sum_n \langle \mK_{r},e_n\rangle_{\mH}e_n, \mK_r \rangle_{\mH}=\sum_n e_n^2(r). 
\end{align*}

Lastly, we show $B$ can be viewed as a bounded operator on $L^2(\tilde \rho_{T}^L)$. Assume that $\intkernelvar \in L^2(\tilde\rho_T^L)$, we have the identity that  $B\intkernelvar(r)=\langle \rhsfo_{\intkernelvar}(\bX),  \rhsfo_{K_{r}}(\bX) \rangle_{L^2(\rho_{\bX})}$. We obtain that 
\begin{align}
|B\intkernelvar(r)| &\leq \|\rhsfo_{\intkernelvar}(\bX)\|_{L^2(\rho_{\bX})} \|\rhsfo_{K_r}(\bX)\|_{L^2(\rho_{\bX})} \notag\\
&\leq \frac{N-1}{N} \|\intkernelvar\|_{{L^2(\tilde\rho_T^L)}}\|K_r\|_{{L^2(\tilde\rho_T^L)}}\notag\\
&\leq  \frac{N-1}{N} \|\intkernelvar\|_{{L^2(\tilde\rho_T^L)}}R\|K_r\|_{{L^2(\rho_T^L)}}\notag\\
&\leq \frac{N-1}{N} \|\intkernelvar\|_{{L^2(\tilde\rho_T^L)}}R\|K_r\|_{\infty}\notag\\
&\leq \frac{N-1}{N} \|\intkernelvar\|_{{L^2(\tilde\rho_T^L)}}\kappa R\|K_r\|_{\mH}\notag\\
&\leq \frac{N-1}{N} \|\intkernelvar\|_{{L^2(\tilde\rho_T^L)}}\kappa^2 R.
\label{opinequality2}
\end{align}  where the last inequality follows from $\|K_r\|_{\mH}=\sqrt{K(r,r)} \leq \kappa$. 

As a result, $B\intkernelvar\in L^2(\tilde\rho_T^L)$, and $B$ can be viewed as a bounded linear operator from $L^2(\tilde\rho_T^L)$ to $L^2(\tilde\rho_T^L)$ with $\|B\|_{L^2(\tilde\rho_T^L)}\leq \kappa^2 R^2$.
\end{proof}

When $M=\infty$,   our learning problem is then equivalent  to solving a linear operator equation
\begin{align}\label{lineareq}
A\intkernelvar =\rhsfo_{\intkernel}.
\end{align} and it is, therefore, a linear inverse problem over possibly infinite dimensional space. In particular, when  $L=1$, our learning problem becomes a standard statistical inverse problem with a random and noisy observation scheme \citep{blanchard2018optimal}.

 In the case of finite data, i.e., $M< \infty$, we introduce an empirical version of $A$, denoted by $A_M$, see also in Table \ref{tab:empirical}, to represent the learning problem. 

\begin{table}
\caption{Notations in the empirical version}
\vspace{1em}
\label{tab:empirical} 
\centering
{
\small{\begin{tabular}{ l l }
\toprule
Notation          & Definition \\

\midrule
$\bbX_M \in \mathbb{R}^{dNML}$ & {vectorization} of $\{\bX^{(m,l)})\}_{m,l=1}^{M,L}$\\
\midrule
$A_M: \mH \rightarrow \mathbb{R}^{dNML} $ & $A_{M}\intkernelvar=\rhsfo_{\intkernelvar}(\bbX_M)$\\
\midrule
$A_{M}^* : \mathbb{R}^{dNML} \rightarrow \mH$ & adjoint operator of $A_M$\\
\midrule
$B_M: \mH \rightarrow \mH$ &  $B_M = A_M^*A_M$\\
\midrule
$\mE^{\lambda,M}(\cdot)$ &  the regularized empirical risk functional  (see \eqref{empiricalerrappendix})\\
\midrule
$\phi_{\mH}^{\lambda,M}$ & minimizer of $\mE^{\lambda,M}(\cdot)$ in $\mH$\\
\bottomrule
\end{tabular}}  
}
\end{table}

\begin{proposition}\label{eoperator} Given the empirical noisy trajectory data with the vectorized notation $\bbY_{\sigma^2,M}=\{\bbX_M,\bbV_{\sigma^2,M}\}$, we define the sampling operator
$A_{M}: \mH \rightarrow \mathbb{R}^{dNML}$ by
\begin{align}\label{finiterank}
A_{M}\intkernelvar=\rhsfo_{\intkernelvar}(\bbX_M):=\mathrm{Vec}(\{\rhsfo_{\intkernelvar}(\bX^{(m,l)})\}_{m,l=1}^{M,L}),
\end{align} where $\mathbb{R}^{dNML}$ is equipped with the inner product defined in \eqref{winnerp}. The adjoint operator $A_{M}^*$ is a finite rank operator. For any $\mathbb{W}$ in $\mathbb{R}^{dNML}$, let $\mathbb{W}_{m,l,i} \in \mathbb{R}^d$ denote the $i$-th component of {the} $(m,l)$-th block of $\mathbb{W}$.Then we have 
$$A^*_{M}\mathbb{W}=\frac{1}{LM}\sum_{l,m=1}^{L,M}\sum_{i=1,i'\neq i}^{N}\frac{1}{N^2}K_{r_{ii'}^{(m,l)}} \langle \br_{ii'}^{(m,l)}, \mathbb{W}_{m,l,i}\rangle.$$
For any function $\intkernelvar \in \mathcal{H}_K$, we have that 
$$B_{M}\intkernelvar:=A^*_{M}A_{M}\intkernelvar=\frac{1}{LM}\sum_{l,m=1}^{L,M}\left(\sum_{i=1,i', i'' \neq i}^{N}\frac{1}{N^3}K_{r_{ii'}^{(m,l)}} \langle \intkernelvar ,K_{r_{ii''}^{(m,l)}} \rangle_{\mH} \langle \br_{ii'}^{(m,l)},\br_{ii''}^{(m,l)}\rangle\right).$$

\end{proposition}

\begin{proof}[Proof of Proposition \ref{eoperator}] The formula of $A_M^*$ can be derived by using the identity $\langle A_M \intkernelvar, \mbf{w}\rangle=\langle \intkernelvar, A_M^*\mbf{w} \rangle_{\mH}$. The direct calculations of the composition of two operators yields $B_M$. 
\end{proof}

\subsection{Recoverablity: a coercivity condition}

Since $\phi \in \mH$, $\phi$ is always a solution to the linear operator equation \eqref{lineareq}. However, this inverse problem may still be ill-posed. {This} happens when the solution is not unique or does not depend continuously on $\rhsfo_{\intkernele}$.

The uniqueness of the solution is not obvious. As explained above, we only observe an additive functional of {$\phi$} induced by the structure of the governing equation: 
\begin{equation}
    \dot {\bx_i}(t) = \sum_{i'=1}^{N} {\phi} (\|\bx_{i'}(t)-\bx_{i}(t)\|)(\bx_{i'}(t)-\bx_{i}(t)), \quad i=1,\cdots,N.
\end{equation}
Given $\bX(t)$ and $\dot\bX(t)$, one may attempt to solve the values of $ \{{\phi}(\|\bx_{i'}(t)-\bx_{i}(t)\|)\}_{i, i'=1}^{N,N}$ from the constraints imposed by ODEs. However, we have $dN$ equations but with {only} $\binom{N}{2}$ unknowns. In our numerical examples, $d=1$ or 2, {so} as long as $N>5$, the linear system is underdetermined. Even in the overdetermined case, there {are} no guarantees on the exact recovery of {$\phi_{}$} on the pairwise distances.

\paragraph{A coercivity condition} To ensure the well-posedness, we require {$\phi_{}$} to be the unique solution to \eqref{lineareq}. So $A$ has to be injective. Now we introduce a sufficient condition to guarantee the injectivity of the operator $A$. {Using} Lemma \ref{infbound}, $\mH$ can be naturally embedded as a subspace of $L^2([0,R];{\tilde\rho_T^L}\mres[0,R];\mathbb{R})$.

\begin{definition}[Coercivity condition]
 We say that the system \eqref{firstorder:homogeneous} satisfies the \textbf{coercivity condition} on $\mH$,  if $ \forall \intkernelvar \in \mH$, there exists $ c_{\mH}>0$ such that
\begin{align}\label{coercivity}
\|A\intkernelvar\|^2_{L^2(\rho_{\bX})}=\|\rhsfo_{\intkernelvar}\|^2_{L^2(\rho_{\bX})}\geq c_{\mH}\|\intkernelvar\|^2_{L^2( \tilde\rho_T^L)}. \end{align}
We choose the largest $c_{\mH}$ that satisfies \eqref{coercivity} and refer to it as the \textit{coercivity constant}. 
\end{definition}

Then if $A\intkernelvar=0$ for $\intkernelvar \in \mH$, we conclude that $\intkernelvar=0$ everywhere on $[0,R]$ due to non-degeneracy of $\rho_T^L$ on $[0,R]$ and the function $\intkernelvar$ is continuous. Therefore, $A$ is injective. Below, we show the coercivity condition links our learning problem with {a} 1-dimensional kernel ridge regression problem in Problem \ref{krr}: they are equivalent inverse problems.

\begin{problem}\label{krr}
 Consider learning $\intkernel_{} \in \mH$ from i.i.d noisy samples: 
\begin{align}\label{1dregression}
y_m=\intkernele_{}(r_m)+\epsilon_m, r_m\sim \tilde\rho_T^L, \epsilon_m\sim \mathcal{N}(0,\sigma^2), m=1,\cdots,M. 
\end{align}
\end{problem}

One may want to find an estimator in the RKHS spanned by a Mercer kernel $K$. In the limiting case $M=\infty$, this learning problem can also be treated as an inverse problem, where one looks for the solution of the linear operator equation 
\begin{align}\label{ips}
J_{\tilde \rho_T^L}\varphi =\intkernel_{}
\end{align}
and the operator $J_{\tilde \rho_T^L}:\mH \rightarrow L^2([0,R];{\tilde\rho_T^L};\mathbb{R})$ is called the canonical inclusion map 
$$J_{\tilde \rho_T^L}(\varphi)(r)=\langle \varphi, K_{r}\rangle_{\mH}.$$

In general, this inverse problem is ill-posed, as $\phi_{}$ may not be in the closure of $ \mathrm{Im}(J_{\tilde \rho_T^L})$. One then looks for a solution to the least square problem
\begin{align}\label{inverseproblem}
\argmin{\varphi \in \mH}\|\varphi-\intkernel_{}\|^2_{L^2(\tilde \rho_T^L)}.
\end{align}
  
Let $P$ denote the projection  mapping $L^2([0,R];{\tilde\rho_T^L};\mathbb{R})$ onto the closure of $\mathrm{Im}({J_{\tilde \rho_T^L}})$. According to the theory of inverse problems, a sufficient condition for the existence and uniqueness of a minimal norm solution to the problem \eqref{inverseproblem} is $P(\phi) \in \mathrm{Im}(J_{\tilde \rho_T^L}).$ In fact, such a solution is exactly the Moore-Penrose (or generalized) solution to \eqref{inverseproblem}, denoted by $\phi_{\mH}^{+}$, satisfying 
\begin{align}\label{psedoinverse}
J_{\tilde \rho_T^L}^* J_{\tilde \rho_T^L}\phi_{\mH}^{+}=J_{\tilde \rho_T^L}^* \intkernel_{},
\end{align}
where the adjoint operator $J_{\tilde \rho_T^L}^*$ is an integral operator with respect to the kernel $\mK$, i.e., for $\intkernelvar \in L^2([0,R];{\tilde\rho_T^L};\mathbb{R})$ and $r \in [0,R]$,
 $$(J_{\tilde \rho_T^L}^*\intkernelvar)(r)=\int_{0}^{R} \mK(r,r')\intkernelvar(r')d\tilde\rho_T^L(r').$$
 
We know from the classical KRR learning theory \citep{smale2007learning} that  $J_{\tilde \rho_T^L}^*J_{\tilde \rho_T^L}: \mH\rightarrow \mH$ is a compact and positive operator, which ensured the well-posedness of \eqref{ips}. Below, we show  $A^*A$  is  equivalent  to $J_{\tilde \rho_T^L}^*J_{\tilde \rho_T^L}$ as an operator: their eigenvalues have the same asymptotic behaviours.

\begin{proposition}\label{wellpos}
Let $\lambda _{k}^{{\downarrow }}(A^{*}A)$ and $ \lambda _{k}^{{\downarrow }}(J_{\tilde \rho_T^L}^*J_{\tilde \rho_T^L})$   denote the $k$-th eigenvalue of $A^*A$  and $J_{\tilde \rho_T^L}^*J_{\tilde \rho_T^L}$  {respectively} in decreasing order. {If} the coercivity condition \eqref{coercivity} holds, then    $$ c_{\mH} \lambda _{k}^{{\downarrow }}(J_{\tilde \rho_T^L}^*J_{\tilde \rho_T^L}) \leq \lambda _{k}^{{\downarrow }}(A^*A) \leq  \lambda _{k}^{{\downarrow }}(J_{\tilde \rho_T^L}^*J_{\tilde \rho_T^L}). $$
\end{proposition}
 
Therefore, the coercivity condition bridges the study of our inverse problem with \eqref{ips}. To prove Proposition \ref{wellpos}, we first show the following Proposition and Theorem.
 
\begin{proposition}\label{equivalenceeig}
The coercivity condition \eqref{coercivity} implies that \begin{align}\label{positivity}
 c_{\mathcal{H}_K}J_{\tilde \rho_T^L}^*J_{\tilde \rho_T^L} \leq A^*A\leq  J_{\tilde \rho_T^L}^*J_{\tilde \rho_T^L}.
 \end{align}
\end{proposition}

\begin{proof} It suffices to show that, for any $\varphi \in \mH$, we have that
\begin{align*}
 c_{\mathcal{H}_K} \langle J_{\tilde \rho_T^L}^*J_{\tilde \rho_T^L} \varphi, \varphi\rangle_{\mH} \leq \langle A^*A \varphi, \varphi\rangle_{\mH} \leq  \langle J_{\tilde \rho_T^L}^*J_{\tilde \rho_T^L} \varphi. \varphi\rangle_{\mH}
\end{align*} The above inequality follows from the coercivity condition \eqref{coercivity} and the identities 
\begin{align}
\langle J_{\tilde \rho_T^L}^*J_{\tilde \rho_T^L}\intkernelvar, \intkernelvar \rangle_{\mH}=\|\intkernelvar\|^2_{L^2( \tilde\rho_T^L)} \text{ and } 
 \langle A^*A\intkernelvar, \intkernelvar \rangle_{\mH}=\|A\intkernelvar\|^2_{L^2(\rho_{\bX})}.
\end{align}
\end{proof}

\begin{theorem}[Courant–Fischer–Weyl min-max principle, see \citep{bhatia2013matrix}] \label{minmax}Let $U$ be a compact, self-adjoint, positive operator on a Hilbert space $\mathcal{H}$, whose eigenvalues are listed in decreasing order $\lambda_1\geq\lambda_2\cdots$. Let $S_k\subset \mathcal{H}$ be a $k$-dimensional subspace. Then:
\begin{align}\max _{{S_{k}}}\min _{{x\in S_{k},\|x\|=1}}\langle Ux,x\rangle_{\mathcal{H}}&=\lambda _{k}^{{\downarrow }}(U),\label{maximin}\\\min _{{S_{{k-1}}}}\max _{{x\in S_{{k-1}}^{{\perp }},\|x\|=1}}\langle Ux,x\rangle_{\mathcal{H}} &=\lambda _{k}^{{\downarrow }}(U).\end{align} 
\end{theorem}

Now we are ready to present the proof. 

\begin{proof}[Proof of Proposition \ref{wellpos}] Let $\lambda _{k}^{{\downarrow }}(A^{*}A)$ denote the $k$th eigenvalue of $A^*A$ in decreasing order. First, we recall that for two positive operators $A_1$ and $A_2$ on $\mathcal{H}$, $A_1\leq A_2$ means that $ \langle A_1x, x\rangle_{\mathcal{H}} \leq \langle A_2x, x\rangle_{\mathcal{H}} $ for all $x \in \mathcal{H}$. The inequality \eqref{positivity} in Proposition \ref{equivalenceeig} together with the equality  \eqref{maximin} yield that
   \begin{align*}
  c_{\mH} \lambda _{k}^{{\downarrow }}(J_{\tilde \rho_T^L}^*J_{\tilde \rho_T^L}) = \max_{S_k} \min_{x \in S_k, \|x\|=1}
  \langle c_{\mH} J_{\tilde \rho_T^L}^*J_{\tilde \rho_T^L}x, x\rangle_{\mH} & \leq \lambda _{k}^{{\downarrow }}(A^{*}A) =\max_{S_k} \min_{x \in S_k, \|x\|=1}\langle A^{*}Ax, x\rangle_{\mH} \\ &\leq \max_{S_k} \min_{x \in S_k, \|x\|=1}
   \langle  J_{\tilde \rho_T^L}^*J_{\tilde \rho_T^L}x, x\rangle_{\mathcal{H}}\\&=\lambda _{k}^{{\downarrow }}(J_{\tilde \rho_T^L}^*J_{\tilde \rho_T^L}).
   \end{align*}
    
Therefore, 
   $$ c_{\mH} \lambda _{k}^{{\downarrow }}(J_{\tilde \rho_T^L}^*J_{\tilde \rho_T^L}) \leq \lambda _{k}^{{\downarrow }}(A^*A) \leq  \lambda _{k}^{{\downarrow }}(J_{\tilde \rho_T^L}^*J_{\tilde \rho_T^L}). $$

\end{proof}

Since the coercivity condition implies the injectivity of  $A$,  $\phi_{}$ is the unique generalized solution to the equation 
$$A^*A\phi^+=A^* \rhsfo_{\phi_{}}.$$

However, this generalized solution may not depend continuously on the datum $\rhsfo_{\intkernel}$, so that finding $\intkernel$ is again an ill-posed problem when the datum $\rhsfo_{\intkernel}$ is contaminated by noise. In the literature of the inverse problem, one way to overcome this issue is to introduce the Tikhonov regularization technique and consider a risk functional with a possible regularization term determined by $\lambda
\geq 0$:
 \begin{align}\label{exp}
\mE^{\lambda,\infty}{(\intkernelvar)}:&=\Rhoxnorm{A\intkernelvar- \rhsfo_{\intkernel_{}}}^2+\lambda \|\intkernelvar\|_{\mH}^2.
\end{align}

When the data is finite and noisy, it is impossible to achieve the exact recovery of $\phi_{}$. Similar to the case of infinite data, one may consider solving 
\begin{align}\label{empiricalerrappendix}
\phi_{\mH}^{\lambda,M}: & =\argmin{\intkernelvar\in \mH}\mE^{\lambda,M}(\intkernelvar)
\\
\mE^{\lambda,M}(\intkernelvar):&=\|A_M\intkernelvar-\bbV_{\sigma^2,M}\|^2+\lambda \|\intkernelvar\|_{\mH}^2 \label{krrest}
\end{align}

\eqref{krrest} provides an  alternative approach to learn $\phi$ from data. When $A$ is the identity, \eqref{empiricalerrappendix} is called the KRR estimator. In classical nonparametric regression problems such as Problem \ref{krr}, one can also model $\phi$ as a GP with a suitable prior and then approximate $\phi$ by the posterior mean estimator. There is a well-known connection between the posterior mean estimator of {the} GP approach with the KRR estimator. In our paper, we shall generalize this classical fact to our setting: we show that the posterior mean estimator \eqref{firstorder:pos} with a suitable prior coincides with $\phi_{\mH}^{\lambda,M}$.

The connection between our posterior mean estimator with $\phi_{\mH}^{\lambda,M}$ allows us to use the operator algebra framework to derive quantitative error analysis for our approach, since  $\phi_{\mH}^{\lambda,M}$ admits an operator representation.

\begin{proposition} Consider the expected risk $\mE^{\lambda,\infty}(\cdot)$ in \eqref{exp} as well as its empirical version $\mE^{\lambda,M}(\cdot)$ in \eqref{empiricalerrappendix}. Let $\phi_{\mH}^{\lambda,\infty}$ and $\phi_{\mH}^{\lambda,M}$ be their minimizers respectively.

\begin{itemize}
\item Case $\lambda=0$. The minimizer $\intkernel_{\mH}^{0,\infty}$ always exists and satisfies 
$$B\intkernel_{\mH}^{0,\infty}=A^{*}\rhsfo_{\intkernele_{}}, B=A^*A.$$ 
\item Case $\lambda>0$. Then $\phi_{\mH}^{\lambda,\infty}$ and $\phi_{\mH}^{\lambda,M}$ are unique minimizers and they are given by 
\begin{align}
\intkernel_{\mH}^{\lambda,\infty}:&=(B+\lambda)^{-1}A^{*}\rhsfo_{\intkernele_{}}.\\
\intkernel_{\mH}^{\lambda,M}:&=(B_M+\lambda)^{-1}A_{M}^{*}\bbV_{\sigma^2,M}, B_M=A_M^*A_M.\label{em10} 
\end{align}
\end{itemize}
\end{proposition}

The proof of {this} Proposition follows from solving the norm equation of the corresponding regularized least squares.

Below, we derive a Representer theorem for $\phi_{\mH}^{\lambda, M}$, which is key to establish the connection. It shows that $\phi_{\mH}^{\lambda, M}$ is, in fact, a linear combination of the kernel function $K_r$, where $r$ ranges in pairwise distances of agents coming from the observational data. 

\begin{theorem}[Representer theorem]\label{representerthm}
If $\lambda>0$, the minimizer of the regularized empirical risk functional $\mE^{\lambda,M}(\cdot)$ (see \eqref{krrest})
has the form
\begin{equation}
 \phi_{\mH}^{\lambda,M}= \sum_{r \in r_{\bbX_M}} \hat c_r K_{r},
\end{equation}
where $r_{\bbX_M} \in \mathbb{R}^{MLN^2}$ is the set which contains all the pair{wise} distances in $\bbX_{M}$, i.e. 
\begin{equation}
r_{\bbX_M} = \begin{bmatrix}r_{11}^{(1,1)},\dots,r_{1N}^{(1,1)},\dots,r_{N1}^{(1,1)},\dots,r_{NN}^{(1,1)}, \dots, r_{11}^{(M,L)},\dots,r_{1N}^{(M,L)},\dots, r_{N1}^{(M,L)},\dots,r_{NN}^{(M,L)}\end{bmatrix}^T.
\end{equation}
Moreover, denote by $\mathbf{\hat{c}}$ the vectorization of $\hat c_r$ for $r$ in $r_{\bbX_M}$, we have that
\begin{equation}\label{solution1}
 \mathbf{\hat c}= \frac{1}{N}\br_{\bbX_M}^T \cdot (K_{\rhsfo_{\intkernele}}(\bbX_M,\bbX_M) + \lambda N MLI)^{-1}\bbV_{\sigma^2,M},
\end{equation}
where the block-diagonal matrix $\br_{\bbX_M} = \mathrm{diag}(\br_{\bX^{(m,l)}}) \in \mathbb{R}^{MLdN \times MLN^2}$ and $\br_{\bX^{(m,l)}} \in \mathbb{R}^{dN\times N^2}$ defined by
\begin{equation}
\br_{\bX^{(m,l)}} = 
 \begin{bmatrix}
  \br_{11}^{(m,l)}, \dots, \br_{1N}^{(m,l)} & \mbf{0} & \cdots & \mbf{0}\\
  \mbf{0} & \br_{21}^{(m,l)}, \dots, \br_{2N}^{(m,l)} & \cdots & \mbf{0}\\
  \vdots & \vdots & \ddots & \vdots\\
 \mbf{ 0} & \mbf{0} & \cdots & \br_{N1}^{(m,l)}, \dots, \br_{NN}^{(m,l)}
 \end{bmatrix}
 \ .
\end{equation}

\end{theorem}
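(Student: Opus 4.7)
The plan is to combine the operator-theoretic characterization $\phi_{\mH}^{\lambda,M}=(B_M+\lambda)^{-1}A_M^{*}\bbV_{\sigma^2,M}$ from Proposition \ref{eoperator} with a standard orthogonality argument to first collapse the minimizer onto the finite-dimensional subspace $\mathcal{V}:=\mathrm{span}\{K_r:r\in r_{\bbX_M}\}$, and then to solve the resulting linear system via a structured ansatz to obtain the explicit formula \eqref{solution1}.

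First I would establish the kernel expansion. For any $\psi\in\mathcal{V}^{\perp}$, the reproducing property gives $\psi(r)=\langle\psi,K_r\rangle_{\mH}=0$ for every $r\in r_{\bbX_M}$, whence $\rhsfo_{\psi}(\bbX_M)=0$ and $B_M\psi=A_M^{*}A_M\psi=0$. Decomposing the candidate minimizer as $\phi=\phi_{\mathcal{V}}+\phi_{\mathcal{V}^{\perp}}$ and noting from Proposition \ref{eoperator} that $\mathrm{Im}(A_M^{*})\subset\mathcal{V}$, the normal equation $(B_M+\lambda)\phi=A_M^{*}\bbV_{\sigma^2,M}$ forces $\lambda\phi_{\mathcal{V}^{\perp}}\in\mathcal{V}\cap\mathcal{V}^{\perp}=\{0\}$; since $\lambda>0$, this yields $\phi_{\mH}^{\lambda,M}=\sum_{r\in r_{\bbX_M}}\hat c_r K_r$.

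Next I would substitute this form into the normal equation and reduce to a finite-dimensional linear system for $\hat c$. Setting $\mbf{K}:=\mK(r_{\bbX_M},r_{\bbX_M})$ and $\mbf{R}:=\br_{\bbX_M}$, the reproducing property provides the clean identities $\rhsfo_{\phi}(\bbX_M)=\tfrac{1}{N}\mbf{R}\mbf{K}\hat c$ and, as a direct consequence, $K_{\rhsfo_{\intkernel}}(\bbX_M,\bbX_M)=\tfrac{1}{N^{2}}\mbf{R}\mbf{K}\mbf{R}^{T}$. Applying the explicit formula for $A_M^{*}$ from Proposition \ref{eoperator}, the normal equation becomes $\tfrac{1}{N}\mbf{R}^{T}\mbf{R}\mbf{K}\hat c+\lambda NML\,\hat c=\mbf{R}^{T}\bbV_{\sigma^2,M}$. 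Making the ansatz $\hat c=\tfrac{1}{N}\mbf{R}^{T}\gamma$ and factoring $\mbf{R}^{T}$ on the left reduces this to $(\tfrac{1}{N^{2}}\mbf{R}\mbf{K}\mbf{R}^{T}+\lambda NML\, I)\gamma=\bbV_{\sigma^2,M}$, i.e.\ $(K_{\rhsfo_{\intkernel}}(\bbX_M,\bbX_M)+\lambda NMLI)\gamma=\bbV_{\sigma^2,M}$. Inverting and substituting back yields \eqref{solution1}.

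The main obstacle is the potential singularity of $\mbf{K}$ (pairwise distances in $r_{\bbX_M}$ may repeat), so $\hat c$ need not be uniquely determined by the function $\phi_{\mH}^{\lambda,M}$; the ansatz $\hat c=\tfrac{1}{N}\mbf{R}^{T}\gamma$ sidesteps this by exhibiting one valid coefficient vector, and uniqueness of the underlying minimizer is already guaranteed by Proposition \ref{eoperator}. As a cleaner alternative, one can bypass the orthogonal-projection step and directly verify that $\phi_{*}:=\sum\hat c_r K_r$ with $\hat c$ taken from \eqref{solution1} satisfies $A_M^{*}(A_M\phi_{*}-\bbV_{\sigma^2,M})+\lambda\phi_{*}=0$ by hand, exploiting $\mbf{R}^{T}\gamma=N\hat c$ which gives $A_M^{*}\gamma=\tfrac{1}{LMN}\phi_{*}$; uniqueness of the minimizer then identifies $\phi_{*}$ with $\phi_{\mH}^{\lambda,M}$.
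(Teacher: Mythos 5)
Your argument follows essentially the same route as the paper's: reduce to the span of $\{K_r : r\in r_{\bbX_M}\}$, write the normal equation in coefficient form, and verify the ansatz $\hat c=\frac{1}{N}\br_{\bbX_M}^T\gamma$ using the identity $\br_{\bbX_M}\mK(r_{\bbX_M},r_{\bbX_M})\br_{\bbX_M}^T=N^2K_{\rhsfo_{\intkernel}}(\bbX_M,\bbX_M)$. The only genuine difference is in the first step: the paper observes that $\mathcal{H}_{K,M}$ is invariant under $B_M$, hence (by self-adjointness and spectral theory) under $(B_M+\lambda I)^{-1}$, and reads the expansion off the formula $\phi_{\mH}^{\lambda,M}=(B_M+\lambda)^{-1}A_M^*\bbV_{\sigma^2,M}$; your orthogonality argument ($B_M\psi=0$ and $\mathrm{Im}(A_M^*)\subset\mathcal{V}$ force $\lambda\phi_{\mathcal{V}^\perp}=0$) is the more standard representer-theorem decomposition and is equally valid. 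Your closing remark about possible repetitions in $r_{\bbX_M}$ --- that the ansatz exhibits one valid coefficient vector rather than a unique one --- is a point the paper handles only implicitly ("one can verify that $\hat c$ is the solution"), and is worth making.

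One bookkeeping slip: your intermediate normal equation and your reduced equation are mutually inconsistent by a factor of $N$. Substituting $\hat c=\tfrac{1}{N}\mbf{R}^T\gamma$ into $\tfrac{1}{N}\mbf{R}^T\mbf{R}\mbf{K}\hat c+\lambda NML\,\hat c=\mbf{R}^T\bbV_{\sigma^2,M}$ and factoring $\mbf{R}^T$ yields $\bigl(\tfrac{1}{N^2}\mbf{R}\mbf{K}\mbf{R}^T+\lambda ML\,I\bigr)\gamma=\bbV_{\sigma^2,M}$, not $\lambda NML\,I$. To land on the stated formula \eqref{solution1} you need the normal equation in the paper's form,
\begin{equation*}
\mbf{R}^T\mbf{R}\mbf{K}\hat c+\lambda N^3ML\,\hat c=N\mbf{R}^T\bbV_{\sigma^2,M},
\end{equation*}
which follows once the data-fit term is measured in the weighted inner product $\langle\mbf{w},\mbf{z}\rangle=\tfrac{1}{MLN}\sum_{m,l,i}\langle\mbf{w}_{m,l,i},\mbf{z}_{m,l,i}\rangle$ of \eqref{winnerp} (so that $A_M^*\mathbb{W}$ carries a prefactor $\tfrac{1}{LMN^2}$, not $\tfrac{1}{LMN}$). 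The discrepancy traces to whether the empirical risk averages over $ML$ snapshots or over $MLN$ agent equations --- the main-text display of $\mE^{\lambda,M}$ and the appendix inner product differ by exactly this factor of $N$ --- so be explicit about which convention you adopt; with the appendix convention your computation goes through and matches \eqref{solution1}.
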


\begin{proof}[Proof of Theorem \ref{representerthm}] The proof is based on the operator representations of minimizers which allow us to use tools from the spectral theory of operator algebra. 

Let $\mathcal{H}_{K,M}$ be the subspace of $\mH$ spanned by the set of functions $\{K_{r}: r\in r_{\bbX_M}\}$. By Proposition \ref{eoperator}, we know that $B_M(\mathcal{H}_{K,M}) \subset \mathcal{H}_{K,M}$. Since $B_M$ is self-adjoint and compact, by {the} spectral theory of self-adjoint compact operators (see \citep{blank2008hilbert}), $\mathcal{H}_{K,M}$ is also an invariant subspace for the operator $(B_M+\lambda I)^{-1}$. Then by \eqref{em10}, there exists a vector 
$\hat c$ such that 
\begin{equation}\label{brep}
  \phiH^{\lambda,M} = \sum_{r \in r_{\bbX_M}} \hat c_r K_{r}.
\end{equation}


Then, multiplying $(B_M+\lambda I)$ on both sides of \eqref{em10} and plugging  \eqref{brep} into the identity, we can obtain 
\begin{align}\label{mtxid}
\big(\br_{\bbX_M}^T\br_{\bbX_M}K(r_{\bbX_M}, {r_{\bbX_M}})+\lambda N^3ML I \big)\mathbf{\hat c} =N\br_{\bbX_M}^T \bbV_{\sigma^2,M},
\end{align}
where we used the matrix representation of $(B_M+\lambda I)$ with respect to the spanning set $\{K_{r}: r\in r_{\bbX_M}\}$.

Recall that we have $K(r_{\bbX_M}, {r_{\bbX_M}})=(K(r_{ij}, r_{i'j'}))_{r_{ij},r_{i'j'} \in r_{\bbX_M}}$ and $K_{\rhsfo_\intkernele}(\bbX_M,\bbX_M)=$\\$\mathrm{Cov}(\rhsfo_{\intkernele}(\bbX_M), 
\rhsfo_{\intkernele}(\bbX_M))$. By the identity 
\begin{align}\label{id}
\br_{\bbX_M}K(r_{\bbX_M}, {r_{\bbX_M}}) \br_{\bbX_M}^T=N^2K_{\rhsfo_\intkernele}(\bbX_M,\bbX_M)
\end{align}
and the fact that the matrix in the left hand side of \eqref{mtxid} is invertible, one can verify that 
\begin{equation}
  \mathbf{\hat c} = \frac{1}{N}\br_{\bbX_M}^T \cdot (K_{\rhsfo_{\intkernele}}(\bbX_M,\bbX_M) +\lambda NML I)^{-1}\bbV_{\sigma^2,M}
\end{equation} 
is the solution. 
\end{proof}

 \subsection{Operator representations of posterior mean estimators and marginal variances}

Leveraging Theorem \ref{representerthm}, we derive operator representations for posterior mean estimators and marginal variances. Note that this result does not require the coercivity condition.

\begin{theorem}\label{maingp}  Suppose $\intkernel_{} \sim \mathcal{GP} (0, \tilde K)$ with $ \tilde K=\frac{\sigma^2 K} {MNL\lambda}$ for some $\lambda>0$.
\begin{itemize}
\item {The} posterior mean estimator $\bar\phi_M$ in \eqref{firstorder:pos} has an operator representation
\begin{align}
\bar\phi_M :&=(A_M^*A_M+\lambda)^{-1}A_{M}^{*}\bbV_{\sigma^2,M}\label{em} 
\end{align}   

\item {The} marginal posterior variance  \eqref{firstorder:var} can be written as 
{\small
\begin{align}
\mathrm{Var}(\phi_M(r_*) | \bbY_{\sigma^2,M})=\frac{\sigma^2}{ML\lambda N}[K_{r_*}(r_*)-K_{r_*}^{\lambda,M}(r_*)],
\end{align} 
}
where the function $K_{r_*}(\cdot):= K(r_*,\cdot)$, and $
K_{r_*}^{\lambda,M}:=(A_M^*A_M+\lambda)^{-1}A_{M}^{*}\rhsfo_{K_{r_*}}(\bbX_M)$.
\end{itemize}
\end{theorem}

\begin{proof} [Proof of Theorem \ref{maingp}]
Let $\tilde K=\frac{\sigma^2 K}{MNL\lambda}$. 

\begin{itemize}
\item Since $\intkernele \sim \mathcal{GP}(0,\tilde K)$, the posterior mean estimator \eqref{firstorder:pos} becomes 
\begin{align*}
\bar{\intkernel}_M(r^{\ast})&= \tilde K_{\intkernel,\rhsfo_{\intkernel}}(r^\ast,\bbX_M)(\tilde K_{\rhsfo_\intkernel}(\bbX_M,\bbX_M) + \sigma^2I)^{-1}\bbV_{\sigma^2,M}\\
&=\frac{1}{N}\tilde K_{r_{\bbX_M}^T}(r^{\ast})\br_{\bbX_M}^T(\tilde K_{\rhsfo_\intkernele}(\bbX_M,\bbX_M) + \sigma^2I)^{-1}\bbV_{\sigma^2,M}\\
&=\frac{1}{N} K_{r_{\bbX_M}^T}(r^{\ast})\br_{\bbX_M}^T( K_{\rhsfo_\intkernele}(\bbX_M,\bbX_M) + NML\lambda I)^{-1}\bbV_{\sigma^2,M}\\
&= K_{\intkernele,\rhsfo_{\intkernele}}(r^\ast,\bbX_M)( K_{\rhsfo_\intkernele}(\bbX_M,\bbX_M) + NML\lambda I)^{-1}\bbV_{\sigma^2,M}\\
&= \sum_{r \in r_{\bbX_M}} \hat c_r K_{r}, 
\end{align*} where $\hat c$ is defined in \eqref{solution1} and we use the identity 
$ K_{\intkernele,\rhsfo_{\intkernele}}(r^\ast,\bbX_M)=\frac{1}{N} K_{r_{\bbX_M}^T}(r^{\ast})\br_{\bbX_M}^T$ (also for $\tilde K$) in the proof.

\item  We replace the regression target $\intkernel$ with the function $K_{r^*}$, and use the same analysis to develop a representer theorem similar to \eqref{representerthm} for the empirical regularized risk functional \eqref{krrest}. Specifically, we have that 
$$K_{r_*}^{\lambda,M}(\cdot)=K_{\intkernele,\rhsfo_{\intkernele}}(\cdot,\bbX_M)(K_{\rhsfo_\intkernele}(\bbX_M,\bbX_M) + ML\lambda NI)^{-1}K_{\rhsfo_{\intkernele},\intkernele}(\bbX_M,r^\ast).$$
Since $\intkernele \sim \mathcal{GP}(0,\tilde K)$,  the marginal posterior  variance  in $\eqref{firstorder:var}$ will then become
{\small
\begin{align*}
&\mathrm{Var}(\phi_M(r_*) | \bbY_{\sigma^2,M})\\
&=\tilde K_{r^\ast} (r^\ast) - \tilde K_{\intkernele,\rhsfo_{\intkernele}}(r^\ast,\bbX_M)(\tilde K_{\rhsfo_\intkernele}(\bbX_M,\bbX_M) + \sigma^2I)^{-1}\tilde K_{\rhsfo_{\intkernele},\intkernele}(\bbX_M,r^\ast)\\
&= \frac{\sigma^2}{ML\lambda N} \bigg(K_{r^\ast}(r^\ast)- K_{\intkernele,\rhsfo_{\intkernele}}(r^\ast,\bbX_M)(\frac{\sigma^2}{ML\lambda N}  K_{\rhsfo_\intkernele}(\bbX_M,\bbX_M)+\sigma^2 I)^{-1}\frac{\sigma^2}{ML\lambda N} K_{\rhsfo_{\intkernele},\intkernele}(\bbX_M,r^\ast)\bigg)\\
&=\frac{\sigma^2}{ML\lambda N}[K(r^\ast,r^\ast)- K_{r_*}^{\lambda,M}(r^\ast)]
\end{align*}
}

\end{itemize}

\end{proof}

{Applying Theorem \ref{maingp}, the analysis of reconstruction error for our posterior mean estimator and marginal posterior variance can be performed equivalently on $\phi_{\mH}^{\lambda, M}$ and $K_{r_*}(r_*)-K_{r_*}^{\lambda,M}(r_*)$. We shall next develop estimates of the error $\phi_{\mH}^{\lambda, M}-\phi_{}$ and the marginal posterior variance can be analyzed similarly by replacing $\phi_{}$ with $K_{r^*}$.  }

\subsection{Finite sample analysis of errors}

In nonparametric regression and inverse problems, one of {the} fundamental problems to address is the convergence of estimators obtained from finite data.  Without constraints on the target function, we can always find a solution with convergence guarantees  but the convergence rates can be arbitrarily slow. This is called the ``no free lunch theorem'' in learning theory \citep{devroye2013probabilistic} and a similar kind of phenomenon occurs in the regularization of ill-posed inverse problems \citep{engle1996regularization}.


\paragraph{Source condition} 

In solving Problem \ref{krr}, a standard way to impose restrictions on target functions is to describe a prior on $\phi$ determined by smoothness conditions. One typically assumes \citep{smale2007learning}
\begin{align}\label{cond1}
\phi \in \tilde{\Omega}_{\alpha,S}=\{ & \varphi \in L^2(\tilde \rho_T^L):  \varphi = (J_{\tilde \rho_T^L} J_{\tilde \rho_T^L}^*)^{\alpha} \psi, \|\psi\|_{L^2(\tilde \rho_T^L)} \leq S\},
\end{align}
and $\alpha$ typically ranges from 0 to 1. When $\alpha=0$, this condition is equivalent to $\intkernel \in  L^2(\tilde \rho_T^L)$; as $\alpha$ increases, $\phi$ becomes more smooth. For example, we have $\phi \in \mH$ as long as $\alpha \geq \frac{1}{2}$. The value $S$ measures the complexity of $\varphi$. A function $\varphi$ with many oscillations will force $S$ to be large.

As noted \citep{de2005learning,caponnetto2005fast}, \eqref{cond1} corresponds to what is called source conditions in the context of solving linear  inverse problems \eqref{ips}. When $\phi \in \tilde{\Omega}_{\alpha,S}$ with $ \alpha >\frac{1}{2}$,  it is equivalent to consider the H\"older type source condition
$$\phi \in \Omega_{\gamma,S}=\{\varphi \in \mH: \varphi = (J_{\tilde \rho_T^L}^* J_{\tilde \rho_T^L})^{\gamma}\psi, \|\psi\|_{\mH}^2\leq S \},$$
where $\gamma=\alpha-\frac{1}{2}$.  One can refer to section 2.3 of \citep{bauer2007regularization} for more details.  Following inverse problem literature and the connection between $(J_{\tilde \rho_T^L}^* J_{\tilde \rho_T^L})$ and $A^*A$ established in Proposition \ref{wellpos}, we shall consider {the} standard H\"older type source condition for our inverse problem.
\begin{assumption} \label{sourceconditionappendix}
 $\phi \in \mathrm{Im}(B^{\gamma})$ with $\gamma \in (0,\frac{1}{2}]$, where $B=A^*A$.
\end{assumption}

\subsubsection{Decomposition of the reconstruction error}\label{appendix:finitesample}

Using the operator representations, we perform the  decomposition of  the reconstruction error  as the sum of  two types of errors: \begin{align*} 
& \phi^{\lambda,M}_{\mH}-\intkernele =\phi^{\lambda,M}_{\mH}-\phi^{\lambda,\infty}_{\mH}+\phi^{\lambda,\infty}_{\mH}-\phi\\
&= \underbrace{ (B_M+\lambda)^{-1}A_{M}^{*}\bbV_{\sigma^2,M}-(B+\lambda)^{-1}A^{*}\rhsfo_{\intkernel}}_{\text{Sample error}}+\underbrace{(B+\lambda)^{-1}A^{*}\rhsfo_{\intkernel}-\intkernel}_{\text{Approximation error}}.
\end{align*}

The sample error comes from two sources: one is from the randomness in the initial conditions of observed trajectories, and the second one is  the randomness in the noise term. We further decouple the sample error into the noise part and noise-free part; we have that: 
\begin{align*}
\phi^{\lambda,M}_{\mH}-\phi^{\lambda,\infty}_{\mH}&=(B_M+\lambda)^{-1}A_{M}^{*}\bbV_{\sigma^2,M}-\phi^{\lambda,\infty}_{\mH}\\
&=\underbrace{(B_M+\lambda)^{-1}B_M\intkernele-(B+\lambda)^{-1}B\intkernele}_{\tilde\intkernele_{\mH}^{\lambda,M}-\phi^{\lambda,\infty}_{\mH}}+\underbrace{(B_M+\lambda)^{-1}A_{M}^{*}\mathbb{W}_M}_{\text{Noise term}}
\end{align*} where $\tilde\intkernel_{\mH}^{\lambda,M}$ is the empirical minimizer of $\mE^{\lambda,M}(\cdot)$ for noise-free observations and $\mathbb{W}_M$ denotes the noise vector.

One of our key technical contributions is to provide  a detailed analysis of the operators $A$ ($B=A^*A$) and $A_{M}$ ($B_M=A_M^*A_M$), and prove the concentration inequalities for operators.

\paragraph{Analysis of sample error $\|\phi_{\mH}^{\lambda,M}-\phi_{\mH}^{\lambda,\infty}\|_{\mH}$.} We first provide non-asymptotic analysis of the sample error $$\|(B_M+\lambda)^{-1}B_M\intkernelvar-(B+\lambda)^{-1}B\intkernelvar\|_{\mH}$$ for any $\intkernelvar\in \mH$. Then we apply the bound to $\intkernele$ and obtain an error estimate of $\|\tilde\intkernele_{\mH}^{\lambda,M}-\phi^{\lambda,M}_{\mH}\|_{\mH}.$ We shall need the following lemmas. 

\begin{lemma}\label{varinequality} For any bounded function $\intkernelvar \in L^2(\tilde\rho_T^L)$ and any positive integer $M$, we have that
\begin{align}
\|B_{M}\intkernelvar\|_{\mH}&\leq \kappa R^2\|\intkernelvar\|_{\infty}, a.s.,\\
\E\|B_{M}\intkernelvar\|_{\mH}^2&\leq \|\intkernelvar\|_{L^2(\tilde\rho_{T}^L)}^2\kappa^2R^2.
\end{align}
\end{lemma}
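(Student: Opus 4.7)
The plan is to exploit the factorization $B_M=A_M^*A_M$ from Proposition~\ref{eoperator} and reduce both inequalities to a single master estimate on $\|A_M\intkernelvar\|$ in the Euclidean norm of $\mathbb{R}^{dNML}$ (with the weighted inner product \eqref{winnerp}). Starting from
\[
[A_M\intkernelvar]_{m,l,i}=\frac{1}{N}\sum_{i'\neq i}\intkernelvar(r_{ii'}^{(m,l)})\br_{ii'}^{(m,l)},
\]
the triangle inequality followed by Cauchy--Schwarz in the index $i'$ yields the master bound
\[
\|A_M\intkernelvar\|^2\;\leq\;\frac{N-1}{MLN^3}\sum_{m,l,i,i'\neq i}\intkernelvar(r_{ii'}^{(m,l)})^2\,r_{ii'}^{(m,l)\,2}.
\]
I will feed this single inequality into two different bounding routes: a pointwise route for the almost-sure bound and an expectation route for the $L^2(\tilde\rho_T^L)$ bound.

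For the almost-sure bound, I use that $\supp\intkernelvar\subset[0,R]$ so the sum only has contributions with $r_{ii'}^{(m,l)}\leq R$, giving $\intkernelvar(r_{ii'})^2 r_{ii'}^2\leq R^2\|\intkernelvar\|_\infty^2$. After counting the $MLN(N-1)$ terms, this delivers the deterministic estimate $\|A_M\intkernelvar\|^2\leq \tfrac{(N-1)^2}{N^2}R^2\|\intkernelvar\|_\infty^2$. Specializing to $\intkernelvar=\psi$ with $\|\psi\|_{\mH}=1$ and invoking the reproducing-kernel bound $\|\psi\|_\infty\leq\kappa\|\psi\|_{\mH}$ gives the operator-norm estimate
\[
\|A_M^*\|_{\mathrm{op}}=\|A_M\|_{\mathrm{op}}\leq \tfrac{N-1}{N}\kappa R\leq\kappa R.
\]
Combining with $\|A_M\intkernelvar\|\leq R\|\intkernelvar\|_\infty$ through $\|B_M\intkernelvar\|_{\mH}\leq\|A_M^*\|_{\mathrm{op}}\|A_M\intkernelvar\|$ then yields the first claimed inequality $\|B_M\intkernelvar\|_{\mH}\leq\kappa R^2\|\intkernelvar\|_\infty$ almost surely.

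For the second claim, I take expectation in the master inequality. Directly from the definition of $\tilde\rho_T^L=r^2\rho_T^L\mres[0,R]$ together with $\rho_T^L=\binom{N}{2}^{-1}\sum_l\sum_{i<i'}\E_{\mu_0}[\delta_{r_{ii'}(t_l)}]$, one has for every nonnegative $g$ the identity
\[
\E\!\left[\frac{1}{MLN(N-1)}\sum_{m,l,i,i'\neq i}g(r_{ii'}^{(m,l)})\,r_{ii'}^{(m,l)\,2}\right]=\int g(r)\,d\tilde\rho_T^L(r).
\]
Applying this with $g=\intkernelvar^2$ gives $\E\|A_M\intkernelvar\|^2\leq\tfrac{(N-1)^2}{N^2}\|\intkernelvar\|_{L^2(\tilde\rho_T^L)}^2$. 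Finally, since $\|A_M^*\|_{\mathrm{op}}$ is bounded deterministically by $\kappa R$, I conclude
\[
\E\|B_M\intkernelvar\|_{\mH}^2\leq\|A_M^*\|_{\mathrm{op}}^2\,\E\|A_M\intkernelvar\|^2\leq\kappa^2R^2\|\intkernelvar\|_{L^2(\tilde\rho_T^L)}^2.
\]

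There is no serious obstacle: the argument is just Cauchy--Schwarz together with the definitional identity for $\tilde\rho_T^L$. The only book-keeping care needed is to keep the $\tfrac{1}{MLN}$ normalization in the Euclidean inner product \eqref{winnerp} aligned with the $\tfrac{1}{\binom{N}{2}L}$ normalization in $\rho_T^L$, and to make sure the $r^2$ factor in $\tilde\rho_T^L$ is exactly the $\|\br_{ii'}\|^2$ factor produced by Cauchy--Schwarz on $\sum_{i'}\intkernelvar(r_{ii'})\br_{ii'}$.
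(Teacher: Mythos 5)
Your proof is correct, and while the first inequality is essentially the paper's argument in a different packaging, your route to the second inequality is genuinely different and worth comparing. For the almost-sure bound, the paper applies the triangle inequality termwise to the explicit double sum defining $B_M\intkernelvar$, bounding $\|\mK_{r_{ii'}}\|_{\mH}\leq \kappa$, $|\intkernelvar(r_{ii''})|\leq\|\intkernelvar\|_\infty$ and $|\langle \br_{ii'},\br_{ii''}\rangle|\leq R^2$; your factorization $\|B_M\intkernelvar\|_{\mH}\leq\|A_M^*\|_{\mathrm{op}}\|A_M\intkernelvar\|$ with $\|A_M^*\|_{\mathrm{op}}\leq\kappa R$ and $\|A_M\intkernelvar\|\leq R\|\intkernelvar\|_\infty$ assembles the same three ingredients, so the content is identical. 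For the expectation bound the paper works at the level of the population operators, writing $\E\|B_M\intkernelvar\|_{\mH}^2=\langle A^*A\intkernelvar,B\intkernelvar\rangle_{\mH}$ and then chaining Cauchy--Schwarz with Lemma \ref{lem1} and the bound $\|B\|_{L^2(\tilde\rho_T^L)}\leq\kappa^2R^2$; note that the first step there is really an identification of $\E\|B_M\intkernelvar\|_{\mH}^2$ with $\|B\intkernelvar\|_{\mH}^2$, which by Jensen only holds as a lower bound in general, so the paper's chain requires the single-sample variance term to be handled separately (as it implicitly is when the lemma is invoked in Theorem \ref{decomp}). Your argument sidesteps this entirely: the deterministic bound $\|B_M\intkernelvar\|_{\mH}^2\leq\kappa^2R^2\|A_M\intkernelvar\|^2$ holds realization by realization, and $\E\|A_M\intkernelvar\|^2$ is then computed directly from the definitional identity relating empirical pairwise-distance averages to $\tilde\rho_T^L$ --- in effect you reprove Lemma \ref{lem1} (with the slightly sharper constant $(N-1)^2/N^2$ in place of $(N-1)/N$) rather than citing it. What your route buys is a self-contained and unambiguous proof valid for every $M$; what the paper's route buys is reuse of the already-established population-level estimates. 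Two harmless book-keeping points: your sampling identity presumes the intended normalization of $\rho_T^L$ as a probability measure (the paper's displayed definition is missing a factor $1/L$ relative to the one used elsewhere), and discarding the terms with $r_{ii'}^{(m,l)}>R$ relies on $\supp\intkernelvar\subset[0,R]$, which is the standing convention here and is also used implicitly in the paper's own proof.
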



\begin{lemma} \label{decomp}For a bounded function $\intkernelvar \in L^2(\tilde \rho_T^L)$ and $0< \delta <1$, with probability at least $1-\delta$, there holds 
\begin{align}
\|B_M\intkernelvar-B\intkernelvar\|_{\mH} \leq \frac{4\kappa R^2 \|\intkernelvar\|_{\infty} \log(2/\delta)}{M}+ \kappa R \rhotnorm{\intkernelvar}
\sqrt{ \frac{2\log(2/\delta)}{M}}.
\end{align}
\end{lemma}

\begin{lemma}\label{sampleerror}For a bounded function $\intkernelvar \in L^2(\tilde\rho_T^L)$ and $0< \delta <1$, with probability at least $1-\delta$, there holds 
$$\|(B_M+\lambda)^{-1}B_M\intkernelvar-(B+\lambda)^{-1}B\intkernelvar\|_{\mH} \leq \frac{\kappa R^2\|\intkernelvar\|_{\infty}\sqrt{2\log(4/\delta)}}{\sqrt{M}\lambda} \left(C_{\kappa,{\mH}}+\frac{C_{\kappa,R,\lambda}\sqrt{2\log(4/\delta)}}{\sqrt{M\lambda}} \right),$$
where $C_{\kappa,{\mH}}=(\kappa+1)\sqrt{\frac{2}{c_{\mH}}}$ and $C_{\kappa,R,\lambda}=\kappa R +\sqrt{\lambda}$.
\end{lemma}

Finally, we can analyze the perturbation $(B_M+\lambda)^{-1}A_{M}^{*}\mathbb{W}_M$ caused by the noise and obtain a bound for the sample error.

\begin{theorem}[Sample error bound]\label{hbound} For any $\delta \in (0,1)$, it holds with  probability at least $1-\delta$ that 
$$\|\intkernel_{\mH}^{\lambda,M}-\intkernel_{\mH}^{\lambda,\infty}\|_{\mH} \lesssim \frac{\kappa R^2\|\intkernele\|_{\infty}\sqrt{2\log(8/\delta)}}{\sqrt{M}\lambda}\left(C_{\kappa,{\mH}}+\frac{C_{\kappa,R,\lambda}\sqrt{2\log(8/\delta)}}{\sqrt{M\lambda}}\right) + \frac{2\kappa R\sigma \log(8/\delta)}{\sqrt{c}\lambda d \sqrt{MLN}},$$ where $c$ is an absolute constant appearing in the Hanson-Wright inequality (Theorem \ref{HAnson}), $C_{\kappa,{\mH}}=(\kappa+1)\sqrt{\frac{2}{c_{\mH}}}$ and $C_{\kappa,R,\lambda}=\kappa R +\sqrt{\lambda}$.
\end{theorem}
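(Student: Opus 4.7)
The plan is to exploit the already-set-up operator representations so that the trajectory randomness and the Gaussian observation noise decouple cleanly. Writing $\bbV_{\sigma^2,M} = A_M\intkernele + \mathbb{W}_M$ with $\mathbb{W}_M \sim \mathcal{N}(\mbf{0}, \sigma^2 I_{dNML})$, I would start from the decomposition already displayed immediately before Lemma \ref{varinequality},
\begin{equation*}
\intkernel_{\mH}^{\lambda,M} - \intkernel_{\mH}^{\lambda,\infty}
= \bigl[(B_M+\lambda)^{-1}B_M\intkernele - (B+\lambda)^{-1}B\intkernele\bigr]
+ (B_M+\lambda)^{-1}A_M^{*}\mathbb{W}_M,
\end{equation*}
and bound the two summands on separate good events of probability at least $1-\delta/2$; a union bound then produces the stated $1-\delta$ estimate.

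First I would handle the noise-free term by specializing Theorem \ref{sampleerror} to $\intkernelvar = \intkernele \in \mH$. Since the admissible interaction kernel is bounded, the hypotheses are satisfied, and with $\delta$ replaced by $\delta/2$ in that result I obtain, with probability at least $1-\delta/2$, exactly the first summand of the claimed bound, with the logarithmic factor $\sqrt{2\log(8/\delta)}$ and the constants $C_{\kappa,\mH}$ and $C_{\kappa,R,\lambda}$ appearing unchanged.

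Next I would attack the pure-noise term by contraction: $\|(B_M+\lambda)^{-1}\|_{\mH}\leq 1/\lambda$ reduces the task to bounding $\|A_M^{*}\mathbb{W}_M\|_{\mH}$. Using the adjoint relation together with the normalized inner product \eqref{winnerp}, I can write
\begin{equation*}
\|A_M^{*}\mathbb{W}_M\|_{\mH}^2 = \langle \mathbb{W}_M, A_MA_M^{*}\mathbb{W}_M\rangle,
\end{equation*}
which, conditionally on $\bbX_M$, is a quadratic form in the Gaussian vector $\mathbb{W}_M$. The two inputs needed for the Hanson-Wright inequality (Theorem \ref{HAnson}) are the trace $\mathbb{E}\|A_M^{*}\mathbb{W}_M\|_{\mH}^2$, which I would bound by $\sigma^2\kappa^2R^2/(dMLN)$ using the same trace-class computation that proved $\mathrm{tr}(B)\leq \kappa^2R^2$ in Proposition \ref{propertyA}, and the operator norm $\|A_MA_M^{*}\|_{\mathrm{op}}$, which is controlled by $\kappa^2R^2$ via Lemma \ref{varinequality}. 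Hanson-Wright then yields, with probability at least $1-\delta/2$, the bound $\|A_M^{*}\mathbb{W}_M\|_{\mH} \lesssim 2\kappa R \sigma \log(8/\delta)/(\sqrt{c}\, d\sqrt{MLN})$, and dividing by $\lambda$ gives the second summand.

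The main obstacle is the Hanson-Wright step. The Gaussian noise lives in $\mathbb{R}^{dNML}$ while the inner product we chose for the sampling operator normalizes by $1/(MLN)$ but not by $1/d$; tracking this mismatch is what produces the unusual $1/d$ factor, and it requires splitting each $d$-dimensional block $\mathbb{W}_{m,l,i}$ and using that the kernel-induced operator $A_MA_M^{*}$ acts scalarly along the coordinate axis of $\mathbb{R}^d$. A secondary but crucial point is that Hanson-Wright must be applied \emph{conditionally} on $\bbX_M$, so that the two good events are independent enough for a clean union bound with the event produced by Theorem \ref{sampleerror}.
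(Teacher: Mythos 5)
Your proposal follows essentially the same route as the paper: the identical decomposition into a noise-free sample error (handled by specializing Theorem \ref{sampleerror} to $\intkernelvar=\intkernele$ at confidence level $\delta/2$) plus a pure-noise term written as a Gaussian quadratic form and controlled by the Hanson--Wright inequality conditionally on $\bbX_M$, with a union bound producing the $\log(8/\delta)$ factors. The only (cosmetic) difference is that you extract $\|(B_M+\lambda)^{-1}\|_{\mH}\le 1/\lambda$ before applying Hanson--Wright to $\langle \mathbb{W}_M, A_MA_M^*\mathbb{W}_M\rangle$, whereas the paper keeps $(B_M+\lambda)^{-2}$ inside the matrix $\Sigma_M$ and bounds the resolvents inside the trace --- the two yield the same estimate, up to the same $d$-bookkeeping ambiguity already present in the paper's own $\Sigma_M$.
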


The detailed proofs of the above lemmas and theorem are shown in Appendix \ref{proofsappendix}.

\paragraph{ Analysis of approximation error $\| \intkernel_{\mH}^{\lambda,\infty}-\intkernele\|_{\mH}.$} 

Under the standard source condition (Assumption \ref{sourceconditionappendix}), the analysis of $\| \intkernel_{\mH}^{\lambda,\infty}-\intkernele\|_{\mH}$ follows the routine in the literature of Tikhonov regularization (see section 5 in \citep{caponnetto2005fast}). For the sake of being self-contained, we still present the analysis here.

Recall that $B=A^*A$ is a positive compact operator. Let $B=\sum_{n=1}^{N}\lambda_n\langle \cdot, e_n\rangle e_n$ (possibly $N=\infty$) be the spectral decomposition of $B$ with $0<\lambda_{n+1}<\lambda_{n}$ and $\{e_n\}_{n=1}^{N}$ be an orthonormal basis of $\mH$. 
 Then
\begin{align}\label{eq: approx error}
\| \intkernel_{\mH}^{\lambda,\infty}-\intkernele\|_{\mH}^2&=\|(B+\lambda)^{-1}B\intkernele-\intkernele\|_{\mH}^2 \nonumber \\& =\|\lambda (B+\lambda)^{-1}\intkernele\|_{\mH}^2 \nonumber \\&=\sum_{n=1}^{N}(\frac{\lambda}{\lambda_n+\lambda})^2|\langle \intkernele, e_n\rangle_{\mH}|^2. 
\end{align}
Assume now that $\intkernele \in \mathrm{Im}(B^{\gamma})$ with $0< \gamma \leq \frac{1}{2}$. Since the function $x^\gamma$ is concave on $[0,\infty]
$, $\frac{\lambda}{\lambda_n+\lambda}\leq \frac{\lambda^\gamma}{\lambda_n^\gamma}$. 
Then we have $\| \intkernel_{\mH}^{\lambda,\infty}-\intkernele\|_{\mH} \leq \lambda^{\gamma}\|B^{-\gamma}\intkernele\|_{\mH}$ where $B^{-\gamma}\intkernele$ represents the pre-image of $\intkernele$.

\begin{theorem}[\text{Convergence rate for posterior mean estimator}] \label{Maintheorem}Suppose that $\phi \in \mathrm{Im}(B^{\gamma})$ for some $\gamma \in (0,\frac{1}{2}]$. If we choose $\lambda \asymp M^{-\frac{1}{2\gamma+2}}$, then for any $\delta \in (0,1)$, it holds with probability at least $1-\delta$ that 
$$
\|\intkernel_{\mH}^{\lambda,M}-\intkernel\|_{\mH} \lesssim C( \intkernel, \kappa, R, c_{\mH}, \sigma)\log(\frac{8}{\delta}) M^{-\frac{\gamma}{2\gamma+2}},
 $$
 where $C =\max\{\frac{\kappa R^2 \|\intkernel\|_{\infty}}{\sqrt{c_{\mH}}}, \frac{2\kappa R\sigma}{\sqrt{LN}d}, \|g \|_{\mH}\}$, with $g$ satisfying $(A^*A)^{\gamma}(g)=\intkernel.$
\label{convergence}
\end{theorem}

\begin{proof} Without loss of generality, let $\lambda=M^{-\frac{1}{2\gamma+2}}$. By Theorem \ref{hbound} and approximation error \eqref{eq: approx error}, with a probability at least $1-\delta$, we have that 
\begin{align*}
 &\|\intkernel_{\mH}^{\lambda,M}-\intkernele\|_{\mH} \leq \|\intkernel_{\mH}^{\lambda,M}- \intkernel_{\mH}^{\lambda,\infty}\|_{\mH}+ \|\intkernel_{\mH}^{\lambda,\infty}-\intkernele\|_{\mH}\\
 \leq &\frac{\kappa R^2\|\intkernele\|_{\infty}\sqrt{2\log(8/\delta)}}{\sqrt{M}\lambda}\left(C_{\kappa,{\mH}}+\frac{C_{\kappa,R,\lambda}\sqrt{2\log(8/\delta)}}{\sqrt{M\lambda}} \right) + \frac{2\kappa R\sigma \log(8/\delta)}{\sqrt{c}\lambda d \sqrt{MLN}}+\lambda^{\gamma}\|B^{-\gamma}\intkernele\|_{\mH}\\
 \lesssim& C_1 M^{-\frac{\gamma}{2\gamma+2}}\sqrt{\log(8/\delta)} +C_2 M^{-\frac{\gamma}{2\gamma+2}} M^{-\frac{1+2\gamma}{4+4\gamma}}{\log(8/\delta)}+C_3 M^{-\frac{\gamma}{2\gamma+2}}\\
\leq &C\log(\frac{8}{\delta}) M^{-\frac{\gamma}{2\gamma+2}},
 \end{align*}where $C=\max\{\frac{\kappa^2 R^2 \|\intkernele\|_{\infty}}{\sqrt{c_{\mH}}}, \frac{2\kappa R\sigma}{\sqrt{cLN}d},\|B^{-\gamma}\intkernele \|_{\mH}\}$, and the symbol $\lesssim$ means that the inequality holds up to a multiplicative constant that  is independent of the listed parameters.
\end{proof}

Finally, we provide  an \text{ $L^{\infty}$ error analysis for marginal posterior variance \eqref{firstorder:var}}.

\begin{theorem}\label{marginalpos} For any $\delta \in (0,1)$, it holds with probability at least $1-\delta$ that 
\begin{align*}
|\mathrm{Var}(\bar\intkernele(r_*)|\mathbb{Y}_{M})|& \leq   \frac{\kappa \sigma^2}{ML\lambda N} \bigg(\kappa+\frac{\kappa R^2\|K_{r^\ast}\|_{\infty}\sqrt{2\log(4/\delta)}}{\sqrt{M}\lambda}\bigg(C_{\kappa,{\mH}}+\frac{C_{\kappa,R,\lambda}\sqrt{2\log(4/\delta)}}{\sqrt{M\lambda}} \bigg)\bigg),
\end{align*} where $C_{\kappa,{\mH}}=(\kappa+1)\sqrt{\frac{2}{c_{\mH}}}$ and $C_{\kappa,R,\lambda}=\kappa R +\sqrt{\lambda}$.
\end{theorem}
\begin{proof} Note that $K_{r^\ast}^{\lambda,M}=(B_M+\lambda)^{-1}B_MK_{r^\ast} $. Then 
\begin{align*}
K_{r^\ast}^{\lambda,M}-K_{r^\ast}&=(B_M+\lambda)^{-1}B_MK_{r^\ast} -(B+\lambda)^{-1}BK_{r^\ast}+(B+\lambda)^{-1}BK_{r^\ast}-K_{r^\ast} \\
&=(B_M+\lambda)^{-1}B_MK_{r^\ast} -(B+\lambda)^{-1}BK_{r^\ast}+\lambda (B+\lambda)^{-1}K_{r^\ast}.
\end{align*}
Applying Theorem \ref{sampleerror} to $K_{r^\ast}$, we know that,  for any $0< \delta <1$, with probability at least $1-\delta$, there holds 
{\small
$$\|(B_M+\lambda)^{-1}B_MK_{r^\ast} -(B+\lambda)^{-1}BK_{r^\ast}\|_{\mH} \leq \frac{\kappa R^2\|K_{r^\ast}\|_{\infty}\sqrt{2\log(4/\delta)}}{\sqrt{M}\lambda}\left(C_{\kappa,{\mH}}+\frac{C_{\kappa,R,\lambda}\sqrt{2\log(4/\delta)}}{\sqrt{M\lambda}} \right).$$
}
On the other hand,
$$\|\lambda (B+\lambda)^{-1}K_{r^\ast}\|_{\mH} \leq \|K_{r^\ast}\|_{\mH}.$$

Therefore,  for any $0< \delta <1$, with probability at least $1-\delta$, 
\begin{align*}
|\mathrm{Var}(\bar\intkernele(r_{*})|\mathbb{Y}_{M})|& \leq \frac{\sigma^2}{ML\lambda N}\|K_{r^\ast}^{\lambda,M}-K_{r^\ast}\|_{\infty} \\&\leq \frac{\kappa \sigma^2}{ML\lambda N}  \|K_{r^\ast}^{\lambda,M}-K_{r^\ast}\|_{\mH} 
\\&\leq   \frac{\kappa \sigma^2}{ML\lambda N} \bigg(\kappa+\frac{\kappa R^2\|K_{r^\ast}\|_{\infty}\sqrt{2\log(4/\delta)}}{\sqrt{M}\lambda}\bigg(C_{\kappa,{\mH}}+\frac{C_{\kappa,R,\lambda}\sqrt{2\log(4/\delta)}}{\sqrt{M\lambda}} \bigg)\bigg).
\end{align*} The conclusion follows.

\end{proof}

\textbf{Discussions} 
\begin{itemize}

\item The coercivity constant $c_{\mH}$ in fact depends on $N, L$, $\mu_0$ and $\mH$. In our paper, we consider both $N$ and $L$ fixed. We can prove that $c_{\mH}\geq \frac{N-1}{N^2}$ and can even be independent of $N$ for the case of $L=1$ for certain initial distributions (see section \ref{coedissucsion}). If $L$ changes, the measure $\tilde \rho_T^L$ will also change. It is not clear if $c_{\mH}$ would increase as $L$ increases. We defer more detailed discussions to section \ref{coedissucsion}. In addition, our theoretical framework suggests that it is possible to use a part of equations $(N_1 << N)$ for learning, as long as a form of coercivity condition is satisfied. We leave it for future investigation. 

\item We show that a parametric learning rate in terms of $M$ for marginal posterior variance can be obtained. In particular, from our analysis of the coercivity condition, it is possible to show the bound is proportional to $\frac{1}{N}$ (number of particles) for the case of $L=1$, when $c_{\mathcal{H}_K}$ is independent of $N$ (see an example  in section \ref{coedissucsion}) due to the independence of noise. For higher $L > 1$, the dependence of $c_{\mathcal{H}_K}$ on $N$ is unclear. For the reconstruction error, the current rate only sees $M$ as the effective sample size, i.e. number of random samples. Obtaining
this rate is satisfactory because we do not observe the values of $\intkernel$ and the pairwise distances are in general correlated. The convergence rate in $M$ coincides with the optimal minimax rate achieved in the classical 1-dimensional KRR problem, Problem \ref{krr}, for the set of functions $\Omega_{\gamma,S}$, see the summary in the second column of Table 1 in \citep{blanchard2018optimal} and their associated references. Our convergence rate is done for all Mercer kernels, where one can refer to $s=0$ (reconstruction error) and $b \rightarrow 1^{+}$  in the main result of \citep{blanchard2018optimal}. Using our framework as the bridge, we believe we can obtain more refined rates and bounds if we know, for example, the decay of eigenvalues of $A$. This opens many future questions to investigate.

\item Recall in Remark \ref{smoothness}, if 
$K \in C^{2s+\epsilon}([0,R]\times [0,R])$ with $0<\epsilon<2$, we have 
$\|\varphi\|_{C^{s}}\leq 4^s \|K\|_{C^{2s}}^{\frac{1}{2}}\|\varphi\|_{\mH}, \forall \varphi \in \mH.$
Therefore, we obtain the convergence rate in terms of $C^s$ norm, which is a stronger norm than the previous $L^2$ convergence \citep{lu2019nonparametric}.

\end{itemize}

\subsection{Discussion on the coercivity constant}\label{coedissucsion}
The coercivity condition was proposed in \citep{lu2019nonparametric}, where a least square approach was proposed to learn $\intkernele$ over a suitably chosen hypothesis function space with complexity adaptive to data. The coercivity condition \eqref{coercivity} in this paper can be viewed as a special instance when the hypothesis space is set to be $\mH$. We review below briefly the recent study on the coercivity condition.

When the initial distributions of the agents are exchangeable, the coercivity condition is closely related to the positiveness of integral operators that arise in \eqref{coercivity}:
\begin{align*}
\kappa_{\mH}\|\intkernelvar\|^2_{L^2( \tilde\rho_T^L)} \! & \leq 
\frac{1}{L}\sum_{l=1}^{L} \E_{\bX(0)\sim\mu_0}[\varphi(|\br_{12}(t_l)|)\varphi(|\br_{13}(t_l)|)\innerp{\br_{12}(t_l)}{\br_{13}(t_l)}] \\
& =\int_0^\infty \int_0^\infty \varphi(r)\varphi(s) \overline{K}(r,s)drds, \forall \intkernelvar \in \mH
\end{align*}
where the integral kernel $\overline{K}:\R^+\times \R^+ \to \R$ is defined as
 \begin{equation} \label{kernelK}
\overline{K}(r,s) := (rs)^{d} \int_{S^{d-1}}\int_{S^{d-1}}\innerp{\xi}{\eta}\frac{1}{L}\sum_{l=1}^{L} p_{t_l}(r\xi,s\eta) d\xi d\eta,
 \end{equation} with $p_{t_l}(u,v)$ denoting the joint density function of the random vector $(\br_{12}(t_l), \br_{13}(t_l))$ and $\mathbb{S}^{d-1}$ denoting the unit sphere in $\R^d$. The coercivity constant satisfies that $$c_{\mH}=\frac{N-1}{N^2}+\frac{(N-1)(N-2)}{N^2}\kappa_{\mH}.$$ Therefore, if the integral kernel $\overline{K}$ is positive definite, i.e., $\kappa_{\hypspace}\geq 0$, then the coercivity condition holds on $\mH$ with $c_{\mH}\geq \frac{N-1}{N^2}$.
 
In the case of $L=1$ where the initial distributions of the agents are exchangeable Gaussian, it is proven in \citep{lu2021learning} that $\kappa_{\mH}>0$ provided $\mH$ can be compactly embedded into $L^2([0,R];\tilde\rho_T^1;\mathbb{R})$. Back to our setting, the previous results indicate that the coercivity condition \eqref{coercivity} is satisfied with the coercivity constant independent of $N$ if

 \begin{itemize}
 \item $\mH$ is finite dimensional
 \item $\mH$ is a subspace of Sobolev space $W_2^{s}([0,R])$ for $s>\frac{1}{2}$ (see def in \eqref{sobolev}). 
 \item the kernel $\mK$ is a $C^{\infty}$ Mercer kernel. 
 \end{itemize}
 
The space $\mH$ in the last two examples can be  embedded compactly into $C([0,R])$\citep{Cucker02onthe} and therefore into the space $L^2([0,R];\tilde\rho_T^1;\mathbb{R})$. In these scenarios, the condition number of the inverse problem is uniformly bounded below and is independent of the number of agents in the system.

 We also prove that $\overline{K}(r,s)$ is positive definite if the initial distribution of each agent is drawn i.i.d according to a probability measure on $\mathbb{R}^d$. This indicates that the coercivity constant $c_{\mathcal{H}_K} \geq \frac{N-1}{N^2}$ as {long} as $\mH$ is a subspace of $L^2([0,R];\tilde \rho_T^1;\mathbb{R})$. The generalization for the case $L>1$ is difficult for deterministic systems, due to the implicit solutions to the systems and the richness of collective behaviors which cause grand challenges to analyzing distributions in a unified way.

 However, as the numerical results and relevant discussions in \citep{lu2019nonparametric,lu2021learning}, we believe that the coercivity condition is ``generally'' satisfied for various systems and initial distributions for the case $L>1$. We also refer the reader to \citep{lu2020learning} for the study of coercivity conditions in stochastic systems.

\subsection{Analysis of computational complexity}\label{computation}

 To compute the posterior mean and variance, the direct construction of the covariance matrix $K_{\phi}$ requires $\mathcal{O} (N^4M^2L^2d)$ operations and the  direct inversion of the covariance matrix requires $\mathcal{O}((NMLd)^3)$ operations.  Theoretically, we prove the scalability in $M$: for example, when $\gamma=\frac{1}{2}$, for the accuracy $\epsilon$, it is of the order $\mathcal{O}((\frac{1}{\epsilon})^6)$,  independent of the ambient dimension $dN$. \textcolor{black}{ In our numerical experiments, we used direct inversion of kernel matrices as we focused on the scarce and noisy data regime. In our numerical sections, we test our approach on systems with dimensions ranging from 10 to 60. }

 The most expensive computational part of the full GP model is on constructing the covariance matrix $K_{\phi}$ and inverting it. This is a well-known limitation of the GP approach. There are many possible ways of overcoming the computational bottleneck. Currently, we are investigating the  
 use of a sparse conjugate gradient method (CG) to solve the linear system which does not require {assembling} the covariance matrix and uses an iterative method to get the estimator.  Our Representer theorem implies that the covariance matrix $K_{\phi}$ has a special sparse structure depending on the covariance kernel we use, which allows us to efficiently compute the matrix-vector multiplication used in each iteration of CG, similar to the ideas used in the Kalman filter. The total cost can be reduced to $\mathcal{O}(N^2MLdp)$ where $p$ is the number of total iterations (usually a few hundred steps). We refer the reader to \citep{gu2022scalable} for the preliminary  investigation  in first-order systems.

\section{Numerical Examples}
\label{sec:numericalresult}

\paragraph{Numerical setup.} We simulate the trajectory data on the time interval $[0,T]$ with given i.i.d initial conditions generated  from the probability measures specified for each system. For the training data sets, we generate $M$ trajectories and observe each trajectory at $L$ equidistant times $0 = t_1 < t_2 < \cdots < t_L = T$.  All ODE systems are evolved using \textrm{ode$15$s} in MATLAB\textsuperscript{\textregistered}2020a with a relative tolerance at $10^{-5}$ and absolute tolerance at $10^{-6}$. We apply the \textit{minimize} function in the GPML package\footnote{Carl Edward Rasmussen \& Hannes Nickisch (http://gaussianprocess.org/gpml/code)} to train the parameters using conjugate gradient optimization with the partial derivatives shown in Proposition \ref{prop: derivs}, and set the maximum number of function evaluations to 600.

\paragraph{Choice of the covariance function.} We choose the Mat\'{e}rn covariance function restricted on $[0,R] \times [0,R]$ for the Gaussian process priors in our numerical experiments, i.e.,
\begin{equation}
    K_\theta(r,r')=s_\intkernel^2 \frac{2^{1-\nu}}{\Gamma(\nu)}(\frac{\sqrt{2\nu}|r-r'|}{\omega_{\intkernel}})^\nu B_\nu(\frac{\sqrt{2\nu}|r-r'|}{\omega_{\intkernel}}),
\end{equation}
where the parameter $\nu > 0$ determines the smoothness; $\Gamma(\nu)$ is the Gamma function; $B_\nu$ is the modified Bessel function of {the} second kind; and the hyper-parameters $\theta = \{s_\intkernel^2, \omega_{\intkernel}\}$ quantify the amplitude and scales. 

The Reproducing Kernel Hilbert Space (RKHS), $\mathcal{H}_{\mathrm{Mat\acute{e}rn}}$, associated with this Mat\'{e}rn kernel is norm-equivalent to the Sobolev space $W_2^{\nu + 1/2}([0,R])$ defined by 
\begin{equation}\label{sobolev}
    W_2^{\nu + 1/2}([0,R]) : = \Big\{ f \in L^2([0,R]): \|f\|^2_{W_2^{\nu + 1/2} } := \sum_{\beta \in \mathbb{N}_0^1:|\beta|\leq \nu + 1/2} \|D^\beta f\|_{L_2}^2 < \infty  \Big\}.
\end{equation}
That is to say,  $\mathcal{H}_{\mathrm{Mat\acute{e}rn}} = W_2^s([0,R]) $ as a set of functions, and there exist constants $c_1,c_2>0$ such that
\begin{equation}
    c_1\|f\|_{W_2^{\nu+\frac{1}{2}}} \leq \|f\|_{\mathcal{H}_{\mathrm{Mat\acute{e}rn}}} \leq c_2||f||_{W_2^{\nu+\frac{1}{2}}}, \quad \forall f \in \mathcal{H}_{\mathrm{Mat\acute{e}rn}}.
\end{equation}
In other words, $\mathcal{H}_{\mathrm{Mat\acute{e}rn}}$ consists of functions that are differentiable up to order $\nu$ and weak differentiable up to order $s = \nu + \frac
{1}{2}$.

\paragraph{Baseline comparisons}

We perform comparisons with approaches that learn the right-hand side function of \eqref{firstorder:force} directly from trajectory data: the first one is SINDy  \citep{brunton2016discovering}, which aims at finding a sparse representation for each row of governing equations in a (typically large) dictionary; the second one is regression using Feed-Forward Neural networks (FNN), for which we use the MATLAB\textsuperscript{\textregistered} 2021a Deep Learning Toolbox\textsuperscript{\texttrademark}. To evaluate the performance, we compare the trajectory prediction errors of the estimators. We also perform a comparison with the previous least square approach for learning $\phi$, see Table \ref{tab:FM_C_traj}.

\begin{table}
\caption{Short Notations}
\vspace{1em}
\label{tab:2ndOrder_notationdef} 
\centering
{
\small{\begin{tabular}{ l l }
\toprule
Notation          & Definition \\

\midrule
GPs & Gaussian Processes\\
\midrule
GPR & Gaussian Process Regression\\
\midrule
RKHS & Reproducing Kernel Hilbert Space\\
\midrule
KRR & Kernel Ridge Regression\\
\midrule
IC & Initial Condition\\
\bottomrule
\end{tabular}}  
}
\end{table}

\subsection*{Overview of the numerical results}
\begin{itemize}
\item The proposed algorithm performs \textit{simultaneous} accurate estimations of $\mbf{\alpha}$ in the non-collective force functions and $\phi$ from a \textit{small} amount of \textit{noisy} trajectory data, even in the cases where $\mbf{F}$ depends \textit{nonlinearly} on $\mbf{\alpha}$ (See Example \ref{ex: OD}). Although we are dealing with  non-convex optimization in our training step, our algorithm works well in tuning the hyper-parameters when the  initialization is within an appropriate range.  We find that learning $\phi$ is more challenging, since finding $\alpha$ is a parameter estimation problem with a small number of unknowns (2 to 5 unknowns) while finding $\phi$  is a nonparametric inference problem and suffers from the possible ill-posedness of the inverse problem. 
We show in Example \ref{ex: OD} (Table \ref{tab:OD_appendix})  the existence of  outliers in learning $\phi$ from noise-free data,  while we do not observe this phenomenon for  $\mbf{\alpha}$ and the cases using noisy training data. It suggests that regularization is needed in the noise-free case.

  \item Our numerical results show that the estimation errors for both $\mbf{\alpha}$ and $\intkernel$ decrease as the size of training data increases. It remains elusive regarding the role of $N$, $M$, and $L$ in determining the size of ``effective'' samples, which serves as the core challenge in learning complex systems.  In addition, we would like to point out that the hyper-parameters for the Mat\'{e}rn  kernel are known to be unidentifiable \citep{zhang2004inconsistent, tang2021identifiability}. Our learning theory treats  $M$ as the effective sample size and $N$, $L$ fixed. We address how to choose the prior as $M\rightarrow \infty$ so as to achieve the optimal convergence of estimators.  We leave all other regimes for future work.

\item The marginal posterior variances obtained in our learning approach quantify the reliability of estimated kernels and are fairly small in the region well-explored by the training data. We also observe that the estimators can extrapolate well in the regions which are not explored by the training data. We impute this extrapolation property to the powerful training procedure of GPR, which learns a covariance kernel function that achieves an automatic trade-off between data-fit and model complexity. For the fixed sample size, the width of the uncertainty band increases as the noise level increases. We remark that this can also serve as a sign of the model mismatch error if the system is applied to fit a real dataset. 

 \item Predicting 
 long term behavior of complex systems is known to be very challenging. Our estimators are shown to have good performance in prediction and generalization.  The occasional large prediction errors that happened in a larger time interval may be caused by the propagation of estimation errors. We still think the performance is satisfactory since we only have very limited and noisy training data. Even in cases where the prediction errors are relatively large, the estimators can predict remarkably accurate collective behaviors of the agents, e.g. the consensus in the opinion dynamics, the flocking behavior in the Cuker-Smale dynamics, and the milling pattern in the fish milling dynamics.
 
 \item Besides the SINDy and FNN models, we also conduct a comparison with the learning approach proposed in previous work \citep{lu2019nonparametric,zhong2020data} when $\phi$ is the only unknown term in the governing equation (see Example \ref{ex: FM}.)
\end{itemize}

\subsection{Example 1: Opinion dynamics (OD) with stubborn agents}
\label{ex: OD}

We consider the Taylor model \citep{taylor1968towards}, which models the collective dynamics of continuous opinion exchange in the presence of stubborn agents. It is a first-order system of $N$ interacting agents, and each agent $i$ is characterized by a continuous opinion variable $x_i \in \mathbb{R}$. The dynamics of opinion exchange are governed by the following first-order equation,
\begin{eqnarray}
  \dot\bx_i &=&\force_i(\bx_i, \mbf{\alpha}) + \sum_{i'=1}^N \frac{1}{N} \intkernele (\norm{\bx_{i'} - \bx_i})(\bx_{i'} - \bx_i),
  \label{eq:1storderOD}
\end{eqnarray}
where
\begin{equation}
  \intkernele(r) = 
  \begin{cases}
  2.5 r & \textrm{if } 0 \leq r < 0.4\\
  1 & \textrm{if } 0.4 \leq r < 0.6\\
  2.5 - 2.5r & \textrm{if } 0.6 \leq r < 1\\
  0 & \textrm{if } r \geq 1
  \end{cases}
\label{eq:ODS_phi}
\end{equation}
and
\begin{equation}
  \force_i(\bx_i, \mbf{\alpha}) = 
  \begin{cases}
  - \kappa(\bx_i - P_i) & \textrm{if agent $i$ is stubborn with bias $P_i$}\\
  0 & \textrm{otherwise}
  \end{cases}
\end{equation}
The interaction kernel $\intkernele$ encodes the non-repulsive interactions between agents: all agents aim to align their opinions to their connected neighbors according to distanced-based attractive influences. The non-collective force $\force(\bx_i, \mbf{\alpha})$ describes the additional influence induced by the stubbornness: the stubborn agents have strong desires to follow their biases $P_i$, and $\kappa$ controls the rate of convergence towards {their} biases. The stubborn agents may cause a major effect on the collective opinion formation process. If $\kappa=0$, then stubborn agents do not follow their biases and behave as regular agents. 

We are interested in learning the parameters $\mbf{\alpha} = (P_1,P_2,P_3,\kappa)$ and interaction kernel $\intkernele$ from trajectory data. 
Note that this first-order system is a special case of the second-order system \eqref{2ndodes} 
with $m_i = 0$ for all $i$, and $\force_i(\bx_i, \dot\bx_i, \mbf{\alpha}) = -\dot\bx_i + \force_i(\bx_i, \mbf{\alpha})$. In this example,  the unknown scalar parameters in $\balpha$ are nonlinear with respect to the non-collective force function and the interaction kernel $\intkernel \notin \mathcal{H}_{\mathrm{Mat\acute{e}rn}}$.

The training data $(\bbX_M, \bbV_{\sigma^2,M})$ is generated with parameters shown in Table \ref{tab:ex_ODS_info}, and the observations are made {in} the time interval $[0,15]$ with different size of observations $\{M,L\}$, and different noise level $\sigma$. 

\begin{table}[!htb]
\caption{System parameters in the opinion dynamics}
\label{tab:ex_ODS_info} 
\centering
\small{
\small{\begin{tabular}{ ccccc}
\toprule
 $d$ & $N$ & $[0; T; T_f]$ & $\alpha = (P_1,P_2,P_3,\kappa)$ & $\mu_0$\\
\midrule
1 & 10 & $[0,15,20]$ & $(1,0,-1,10)$ & $\mathrm{Unif}([-1,1])$\\
\bottomrule
\end{tabular}}  
}
\end{table}

 We initialize the parameters $ (\sigma,P_1,P_2,P_3,\kappa) = (1/2,1/2,1/2,1/2,1/2)$. Table \ref{tab:OD_appendix} shows the errors of the estimations for $\balpha$ and $\intkernel(r)$ in 10 independent trials of experiments. The results demonstrate that our algorithm can produce an accurate estimation of the parameters $(\sigma, P_1,P_2,P_3,\kappa)$ from both noise-free and noisy training data. For the estimation of $\intkernel(r)$, {even though} $\intkernel$ is not in the RKHS generated by the Mat\'{e}rn kernel, our algorithm still provides us with faithful prediction in the region  the training data covers, see Figure \ref{fig:ex_OD} (a). At the region around $r = 0$, we see the approximation is not as good as in other regions. We impute this phenomenon to the fact that $\intkernel(r)$ is weighted by $\mbf{r}$ in the model \eqref{eq:1storderOD}, thus we lose the information of $\intkernel$ when $\mbf{r}$ is close to zero. However, we expect that our estimators will produce accurate trajectories since they are generated by $\intkernel(r)\mbf{r}$, and Figure \ref{fig:ex_OD} (b)  supports this intuition.



\begin{table}[!htb]
\begin{center}
\caption{Means and standard deviations of the  errors of $\hat\balpha$ (including $\hat \sigma$ when noise exists) and $\hat{\intkernel}$ for different settings} \label{tab:OD_appendix}
\vspace{0.1in}
\begin{tabular}{ccc}
\toprule[.05cm]
$\{N,M,L,\sigma\}$ & $\| \hat{\balpha} - \balpha\|_{\infty}$  & $\| \hat{\intkernel} - \intkernel\|_{\infty}$ \\
\cmidrule(lr){1-1}\cmidrule(lr){2-3}
$\{10,3,4,0\}$ \footnotemark[1] & \textbf{$5.1\cdot 10^{-3} \pm 8.2\cdot 10^{-3}$ } & $1.0\cdot 10^{-1}\pm 2.2\cdot 10^{-2}$  \\
 & & (\textcolor{blue}{$6.0\cdot 10^{-1}\pm 2.2\cdot 10^{-2}$}) \\
\cmidrule(lr){1-1}\cmidrule(lr){2-3}
$\{10,3,8,0\}$ & $4.2 \cdot 10^{-4}\pm 5.4\cdot 10^{-3}$ & $7.5\cdot 10^{-2}\pm 2.5\cdot 10^{-2}$ \\
\cmidrule(lr){1-1}\cmidrule(lr){2-3}
$\{10,6,4,0\}$  \footnotemark[2] & $2.8 \cdot 10^{-3} \pm 9.8\cdot 10^{-3}$  & $7.6\cdot 10^{-2}\pm 2.2\cdot 10^{-2}$\\
 &  & (\textcolor{blue}{$1.7\cdot 10^{-1}\pm 3.0\cdot 10^{-1}$}) \\
\cmidrule(lr){1-1}\cmidrule(lr){2-3}
$\{10,6,4,0.01\}$ & $1.6\cdot 10^{-3} \pm 4.7\cdot 10^{-3}$ & $5.9\cdot 10^{-2}\pm 2.8\cdot 10^{-2}$ \\
\cmidrule(lr){1-1}\cmidrule(lr){2-3}
$\{10,6,4,0.03\}$ & $ 4.1\cdot 10^{-3} \pm 1.1\cdot 10^{-2}$ & $1.4\cdot 10^{-1}\pm 8.3\cdot 10^{-2}$\\
\cmidrule(lr){1-1}\cmidrule(lr){2-3}
$\{10,6,4,0.05\}$ & $7.2\cdot 10^{-3} \pm 2.2\cdot 10^{-2}$ & $1.9\cdot 10^{-1}\pm 1.3\cdot 10^{-1}$ \\
\bottomrule
\end{tabular} 
\end{center}
\end{table}

\footnotetext[1]{omit 2 trails in the 10 independent learning trails for errors in $\intkernel$ and corresponding trajectories, the result with all 10 trials are shown in the brackets below}
\footnotetext[2]{omit 1 trail in the 10 independent learning trails for errors in $\intkernel$ and corresponding trajectories, the result with all 10 trials are shown in the brackets below}

\begin{figure}[!htb]
\centering
\subfigure[OD:  vs learned kernel]{
\includegraphics[width=0.45\linewidth]{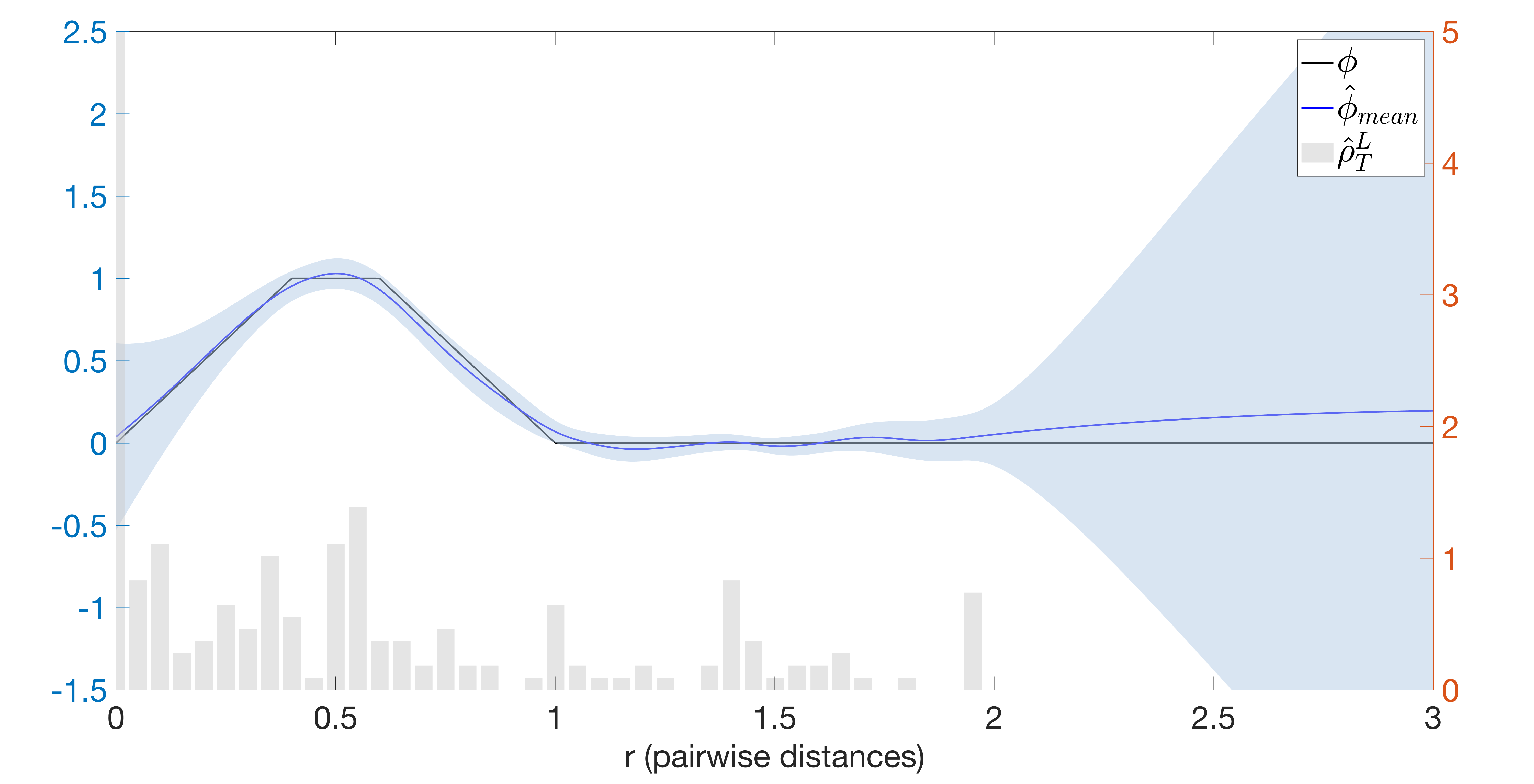}
}
\subfigure[ versus predicted model]{
\includegraphics[width=0.47\linewidth]{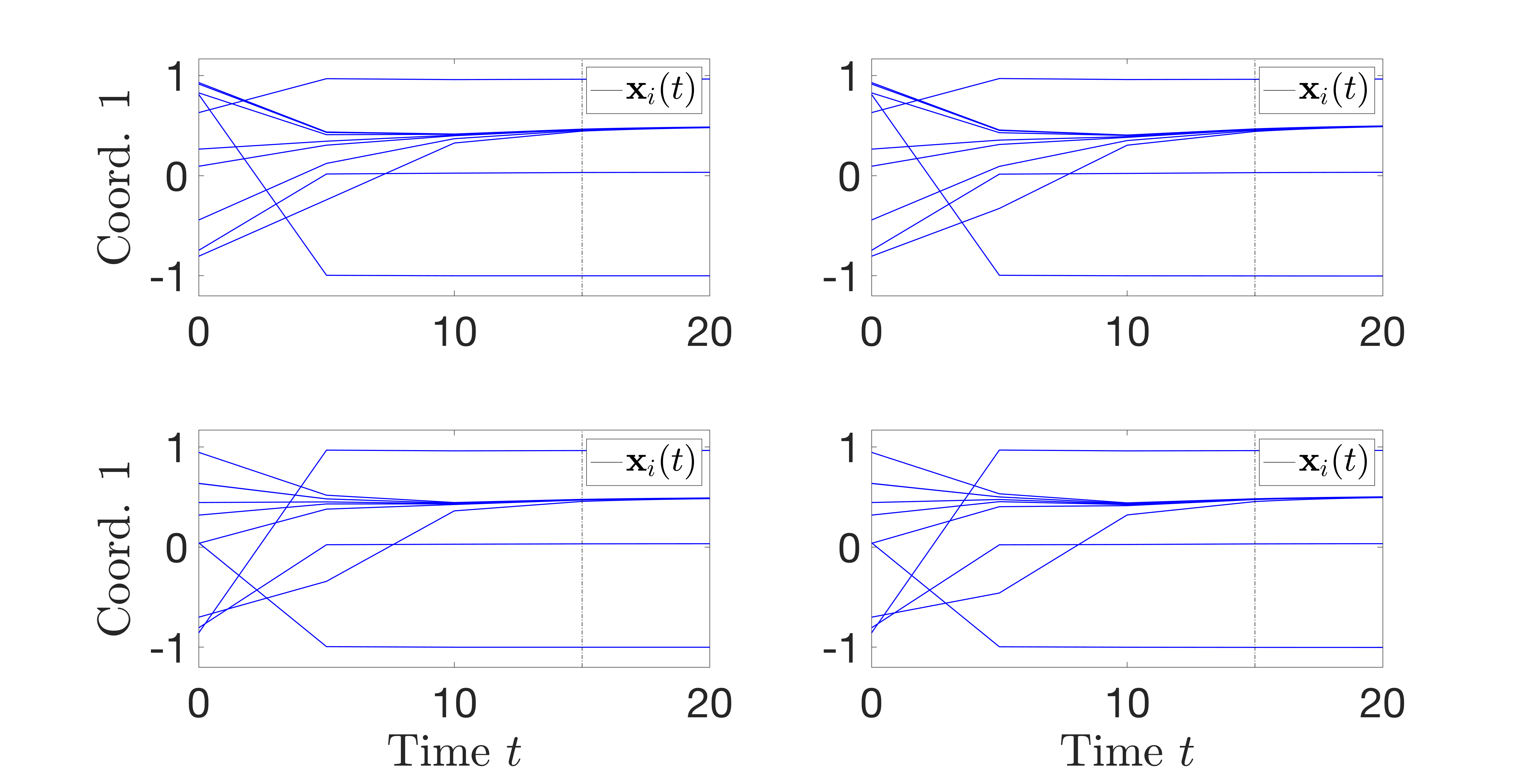}
}
\caption{Learning OD ($\{N,M,L,\sigma\} = \{10,6,4,0.05\}$) using the Matérn kernel. (a): predictive mean $\hat{\intkernel}$ of the  kernel, and two-standard-deviation band (light blue color) around the mean. The grey bars represent the empirical density of the $\rho_T^L$. (b): the {true} (left)  versus predicted (right) trajectories using $\hat{\balpha}$ and $\hat{\intkernel}$  with initial conditions of training data (top) and testing data (bottom).  }
\label{fig:ex_OD}
\end{figure}

The comparison between the trajectories generated by the parameters $\balpha$ { and} interaction kernel $\intkernel$, and the estimated parameters $\hat{\balpha}$ {and} interaction kernel $\hat{\intkernel}$, is shown in Table \ref{tab:OD_traj_appendix}. We can see that in both {the} training time interval $[0,15]$ and future time interval $[15,20]$, the estimators can produce accurate approximations of the  trajectories and the performance becomes better when we increase the size of training data ($M$ or $L$).


\begin{table}[!htb]
\begin{center}
\caption{The trajectory prediction errors for different settings.} \label{tab:OD_traj_appendix}
\vspace{0.1in}
\resizebox{\linewidth}{!}{%
\begin{tabular}{ccccc}
\toprule[.05cm]
$\{N,M,L,\sigma\}$ & Training IC $[0,15]$ & Training IC $[15,20]$ & New IC $[0,15]$ & New IC $[15,20]$\\
\cmidrule(lr){1-1}\cmidrule(lr){2-5}
$\{10,3,4,0\}$ & $1.2\cdot 10^{-2}\pm 9.3\cdot 10^{-3}$ & $2.3\cdot 10^{-2}\pm 4.2\cdot 10^{-2}$ & $1.4\cdot 10^{-2}\pm 7.1\cdot 10^{-3}$ & $2.1\cdot 10^{-2}\pm 3.2\cdot 10^{-2}$\\
\cmidrule(lr){1-1}\cmidrule(lr){2-5}
$\{10,3,8,0\}$ & $8.1\cdot 10^{-3}\pm 3.4\cdot 10^{-3}$ & $7.0\cdot 10^{-3}\pm 5.4\cdot 10^{-3}$ & $8.3\cdot 10^{-3}\pm 5.3\cdot 10^{-3}$ & $5.9\cdot 10^{-3}\pm 4.4\cdot 10^{-3}$\\
\cmidrule(lr){1-1}\cmidrule(lr){2-5}
$\{10,6,4,0\}$ & $1.1\cdot 10^{-2}\pm 6.2\cdot 10^{-3}$ & $8.0\cdot 10^{-3}\pm 7.0\cdot 10^{-3}$ & $1.8\cdot 10^{-2}\pm 1.8\cdot 10^{-2}$ & $6.9\cdot 10^{-3}\pm 4.4\cdot 10^{-3}$ \\
\cmidrule(lr){1-1}\cmidrule(lr){2-5}
$\{10,6,4,0.01\}$ & $3.4\cdot 10^{-2}\pm 2.0\cdot 10^{-2}$ & $2.7\cdot 10^{-2}\pm 2.0\cdot 10^{-2}$ & $4.2\cdot 10^{-2}\pm 2.0\cdot 10^{-2}$ & $4.1\cdot 10^{-2}\pm 4.4\cdot 10^{-2}$\\
\cmidrule(lr){1-1}\cmidrule(lr){2-5}
$\{10,6,4,0.03\}$ & $6.6\cdot 10^{-2}\pm 3.3\cdot 10^{-2}$ & $5.7\cdot 10^{-2}\pm 4.2\cdot 10^{-2}$ & $7.1\cdot 10^{-2}\pm 3.1\cdot 10^{-2}$ & $3.9\cdot 10^{-2}\pm 1.7\cdot 10^{-2}$\\
\cmidrule(lr){1-1}\cmidrule(lr){2-5}
$\{10,6,4,0.05\}$ & $1.1\cdot 10^{-1}\pm 4.2\cdot 10^{-2}$ & $8.0\cdot 10^{-2}\pm 6.4\cdot 10^{-2}$ & $1.2\cdot 10^{-1}\pm 5.9\cdot 10^{-2}$ & $6.9\cdot 10^{-2}\pm 4.1\cdot 10^{-2}$\\
\bottomrule
\end{tabular} 
}
\end{center}
\end{table}

\subsection{Example 2: Fish-Milling (FM) dynamics with
friction force}  \label{ex: FM}
We consider the D'orsogma model \citep{d2006self,chuang2007state} which describes the motion of $N$ self-propelled particles powered by biological or mechanical motors under frictional forces: for $i=1,\cdots,N$,
\begin{align}
\label{motionequation}
m_i\ddot\bx_i =\force_i(\bx_i, \dot\bx_i, \mbf{\alpha}) + \sum_{i'=1}^N \frac{1}{N} \intkernel(\norm{\bx_{i'} - \bx_i})(\bx_{i'} - \bx_i),
\end{align}

\begin{align}
\label{eq:Fishmilling}
\force_i(\bx_i, \dot\bx_i, \mbf{\alpha}) =(\gamma-\beta |\dot\bx_i|^2)\dot\bx_i.
\end{align}

The form of \eqref{motionequation} is derived using Newton’s law  with the right hand side of \eqref{motionequation} describing the three forces acting on each agent: self-propulsion with strength $\gamma$, nonlinear drag with strength $\beta$, and social interactions determined by $\intkernel$. This system can produce a rich variety of collective patterns: in our numerical example, we consider the interaction kernel that is derived from the Morse-type potential
\begin{equation}
\intkernel(r) = \frac{1}{r}\bigg[-\frac{C_{rp}}{l_{rp} }e^{-\frac{r}{l_{rp}}}+\frac{C_a}{l_a}e^{-\frac{r}{l_a}}\bigg],
\end{equation} where $l_a, l_{rp}$ represent the attractive and repulsive potential ranges and $C_a, C_{rp}$ represent the respective amplitudes. Since this kernel is singular at $r=0$, we truncate it at $r_0=0.05$ with a function of the form $ae^{-br}$ to ensure that the new function has a continuous derivative. We assume that we {do not have} knowledge of the parametric form of $\intkernel$, $\gamma$, and $\beta$, and our goal is to learn them from the trajectory data.


As mentioned above, the training data $(\bbY_M, \bbZ_{\sigma^2,M})$ is generated with different numbers of agents $N$ and the parameters shown in Table \ref{tab:ex_FM_info}, and the observations are made in the time interval $[0,5]$ with different sizes $\{M, L\}$ and different amounts of additive noise $\sigma$.

\begin{table}[!htb]
\caption{System parameters in the fish milling dynamics}
\label{tab:ex_FM_info} 
\centering
\vspace{0.1in}
\small{
\small{\begin{tabular}{cccccccc}
\toprule
 $d$ & $m_i$ & $[0; T; T_f]$ & $\mbf{\alpha} = (\gamma, \beta)$ & $(C_{rp},l_{rp})$ & $(C_a,l_a)$ & $\mu_0^\bx$ & $\mu_0^\bv$ \\
\midrule
2 & 1& $[0;5;10]$ & $(1.5, 0.5)$ & (0.5,0.5) & (4,4) & $\mathcal{U}([-0.5,0.5]^2)$ & $(0,0)$\\
\bottomrule
\end{tabular}}  
}
\end{table}

We initialize the parameters $(\gamma, \beta) = (1,1)$, and $\sigma = 1$ for the cases with noisy data. The errors of the estimations for $\mbf{\alpha}$ after our training procedure and the learned $\intkernel$ are shown in Table \ref{tab:FM}. In this model, $\intkernel$ is in the RKHS generated by the chosen Mat\'{e}rn kernel. We can see that our estimators produced faithful approximations to the  kernel based on the results, see Figure \ref{fig:ex_FM_traj} (a). We also compare the discrepancy between the  trajectories (evolved using $\alpha$, $\intkernel$) and predicted trajectories (evolved using $\hat{\alpha}$, $\hat{\intkernel}$) on both the training time interval $[0, T]$ and on the future time interval $[T, T_f]$, over two different sets of initial conditions (IC) – one taken from the training data, and one consisting of new samples from the same initial distribution, see the results of different cases in Table \ref{tab:FM_traj} and Figure \ref{fig:ex_FM_traj} (b)-(c).

\begin{table}[!htb]
\begin{center}
\caption{Means and standard deviations of the  errors of $\hat\balpha$ (including $\hat \sigma$ when noise exists) and $\hat{\intkernel}$ for different settings
} \label{tab:FM}
\vspace{0.1in}
\begin{tabular}{ccc}
\toprule[.05cm]
$\{N,M,L,\sigma\}$ & $\| \hat{\balpha} - \balpha\|_{\infty}$  & $\| \hat{\intkernel} - \intkernel\|_{\infty}$ \\
\cmidrule(lr){1-1}\cmidrule(lr){2-3}
$\{10,1,3,0\}$ & $3.1\cdot 10^{-4}\pm 1.5\cdot 10^{-4}$ & $ 2.6\cdot 10^{-2} \pm 4.3\cdot 10^{-3}$ \\
\cmidrule(lr){1-1}\cmidrule(lr){2-3}
$\{10,1,9,0\}$ & $1.2\cdot 10^{-4} \pm 1.8\cdot 10^{-4}$ & $ 2.3\cdot 10^{-2} \pm 3.1\cdot 10^{-3}$ \\
\cmidrule(lr){1-1}\cmidrule(lr){2-3}
$\{10,3,3,0\}$ & $2.3\cdot 10^{-4} \pm 1.7\cdot 10^{-4}$  & $ 2.5\cdot 10^{-2} \pm 3.4\cdot 10^{-3}$\\
\cmidrule(lr){1-1}\cmidrule(lr){2-3}
$\{5,3,3,0\}$ & $3.0\cdot 10^{-4} \pm 1.6\cdot 10^{-4}$ & $ 2.4\cdot 10^{-2} \pm 3.3\cdot 10^{-3}$\\
\cmidrule(lr){1-1}\cmidrule(lr){2-3}
$\{10,3,3,0.01\}$ & $8.5\cdot 10^{-4} \pm 2.6\cdot 10^{-3}$  & $ 2.5\cdot 10^{-2} \pm 5.0\cdot 10^{-3}$ \\
\cmidrule(lr){1-1}\cmidrule(lr){2-3}
$\{10,3,3,0.05\}$ & $3.0 \cdot 10^{-3} \pm 1.3\cdot 10^{-2}$  & $1.7\cdot 10^{-2} \pm 8.7\cdot 10^{-3}$\\
\cmidrule(lr){1-1}\cmidrule(lr){2-3}
$\{10,3,3,0.1\}$ & $5.7\cdot 10^{-3} \pm 2.7\cdot 10^{-2}$  & $3.0 \cdot 10^{-2} \pm 1.7\cdot 10^{-2}$\\
\bottomrule
\end{tabular} 
\end{center}
\end{table}

Even if the trajectory prediction errors can go up to $O(10^{-1})$ with the presence of a relatively large noise for the systems with $N=10$, our estimators provided faithful predictions to most of the agents in the system, and the milling pattern as shown in Figure \ref{fig:ex_FM_traj}(c). 




\begin{table}[!htb]
\begin{center}
\caption{The trajectory prediction errors for different settings of FM dynamics.} \label{tab:FM_traj}
\vspace{0.1in}
\resizebox{\linewidth}{!}{%
\begin{tabular}{ccccc}
\toprule[.05cm]
$\{N,M,L,\sigma\}$ & Training IC $[0,5]$ & Training IC $[5,10]$ & New IC $[0,5]$ & New IC $[5,10]$\\
\cmidrule(lr){1-1}\cmidrule(lr){2-5}
$\{10,1,3,0\}$ & $2.6\cdot 10^{-2} \pm 9.7 \cdot 10^{-3}$ &  $6.9\cdot 10^{-2}\pm 3.3\cdot 10^{-2}$  & $2.7\cdot 10^{-2} \pm 7.7 \cdot 10^{-3}$ & $1.1\cdot 10^{-1}\pm 1.0\cdot 10^{-1}$\\
\cmidrule(lr){1-1}\cmidrule(lr){2-5}
$\{10,1,9,0\}$ & $1.6\cdot 10^{-2} \pm 8.1\cdot 10^{-3}$ & $4.2\cdot 10^{-2} \pm 2.1\cdot 10^{-2}$  &  $1.4\cdot 10^{-2} \pm 3.7\cdot 10^{-3}$ & $3.7\cdot 10^{-2} \pm 2.2\cdot 10^{-2}$\\
\cmidrule(lr){1-1}\cmidrule(lr){2-5}
$\{10,3,3,0\}$ & $1.4 \cdot 10^{-2} \pm 9.1 \cdot 10^{-3}$ & $4.4\cdot 10^{-2} \pm 3.5\cdot 10^{-2}$ & $1.3\cdot 10^{-2}\pm 9.4\cdot 10^{-3}$ & $4.8\cdot 10^{-2} \pm 3.2\cdot 10^{-2}$\\
\cmidrule(lr){1-1}\cmidrule(lr){2-5}
$\{5,5,6,0\}$ & $1.8 \cdot 10^{-3} \pm 6.1 \cdot 10^{-3}$ & $4.3 \cdot 10^{-2} \pm 3.0\cdot 10^{-1}$ & $1.5 \cdot 10^{-3} \pm 2.8 \cdot 10^{-3}$ & $2.3 \cdot 10^{-2} \pm 5.8\cdot 10^{-1}$\\
\cmidrule(lr){1-1}\cmidrule(lr){2-5}
$\{5,3,3,0\}$ & $2.7\cdot 10^{-3} \pm 2.4\cdot 10^{-3}$ & $2.3\cdot 10^{-2} \pm 2.5\cdot 10^{-2}$ & $2.5\cdot 10^{-3} \pm 1.5\cdot 10^{-3}$ & $9.7\cdot 10^{-2} \pm 1.5\cdot 10^{-1}$\\
\cmidrule(lr){1-1}\cmidrule(lr){2-5}
$\{10,3,3,0.01\}$ & $2.6\cdot 10^{-2} \pm 8.5\cdot 10^{-3}$ & $7.2\cdot 10^{-2} \pm 3.7\cdot 10^{-2}$ & $2.7\cdot 10^{-2} \pm 1.1\cdot 10^{-2}$ & $7.9\cdot 10^{-2} \pm 3.8\cdot 10^{-2}$ \\
\cmidrule(lr){1-1}\cmidrule(lr){2-5}
$\{10,3,3,0.05\}$ & $1.3\cdot 10^{-1} \pm 4.3\cdot 10^{-2}$ & $3.4\cdot 10^{-1} \pm 1.8\cdot 10^{-1}$ & $1.2\cdot 10^{-1} \pm 4.6\cdot 10^{-2}$ & $3.2\cdot 10^{-1} \pm 1.1\cdot 10^{-1}$\\
\cmidrule(lr){1-1}\cmidrule(lr){2-5}
$\{10,3,3,0.1\}$ & $2.6\cdot 10^{-1} \pm 1.0\cdot 10^{-1}$ & $7.0\cdot 10^{-1} \pm 3.7\cdot 10^{-1}$ & $2.2\cdot 10^{-1} \pm 1.1\cdot 10^{-1}$ & $5.8\cdot 10^{-1} \pm 2.4\cdot 10^{-1}$\\
\bottomrule
\end{tabular} 
}
\end{center}
\end{table}


\begin{figure}[!hbt]
\centering
\subfigure[FM:  vs learned kernel]{
\includegraphics[width=0.5\linewidth]{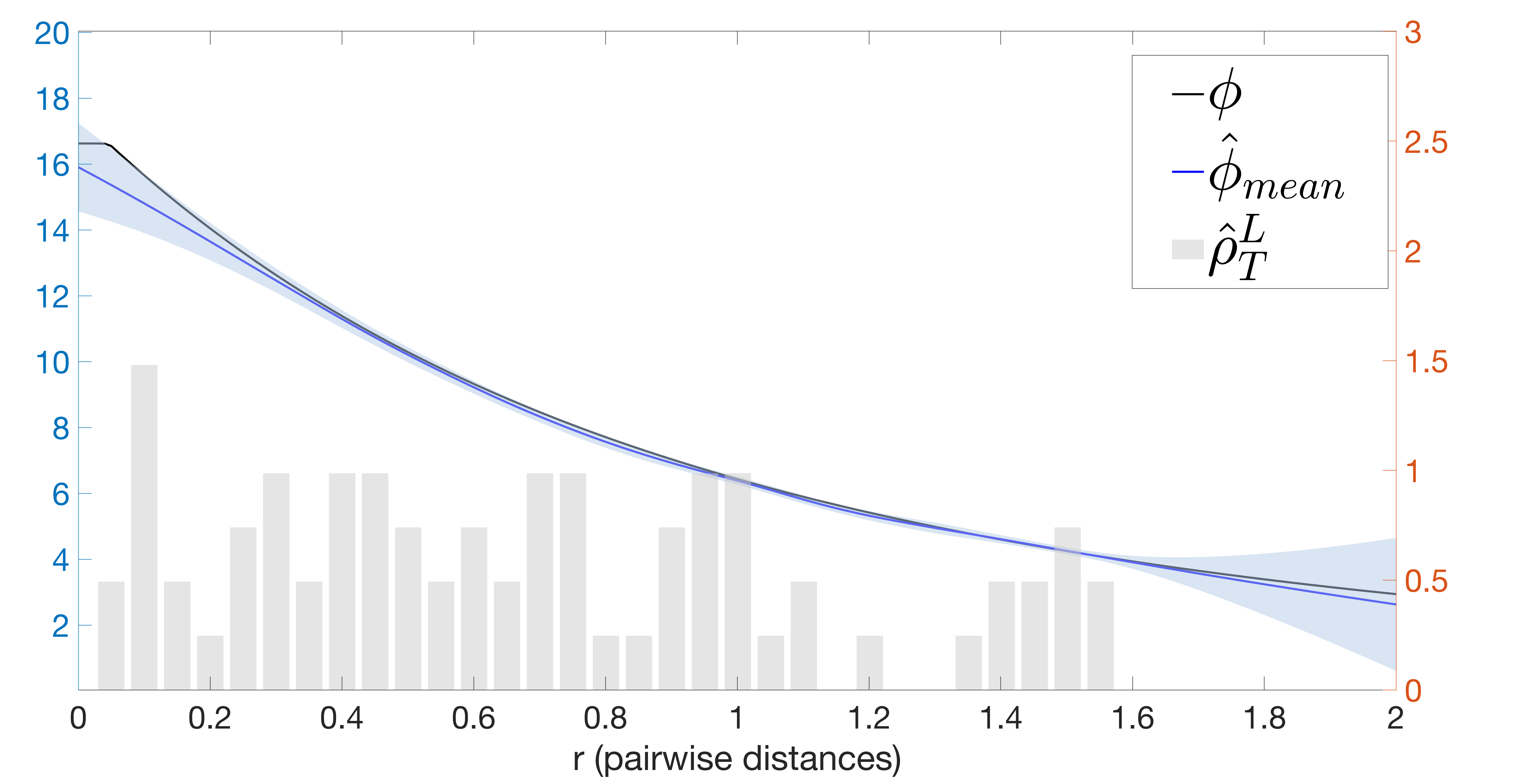}
}
\subfigure[$\{N,M,L,\sigma\} = \{5,5,6,0\}$]{
\includegraphics[width=0.47\linewidth]{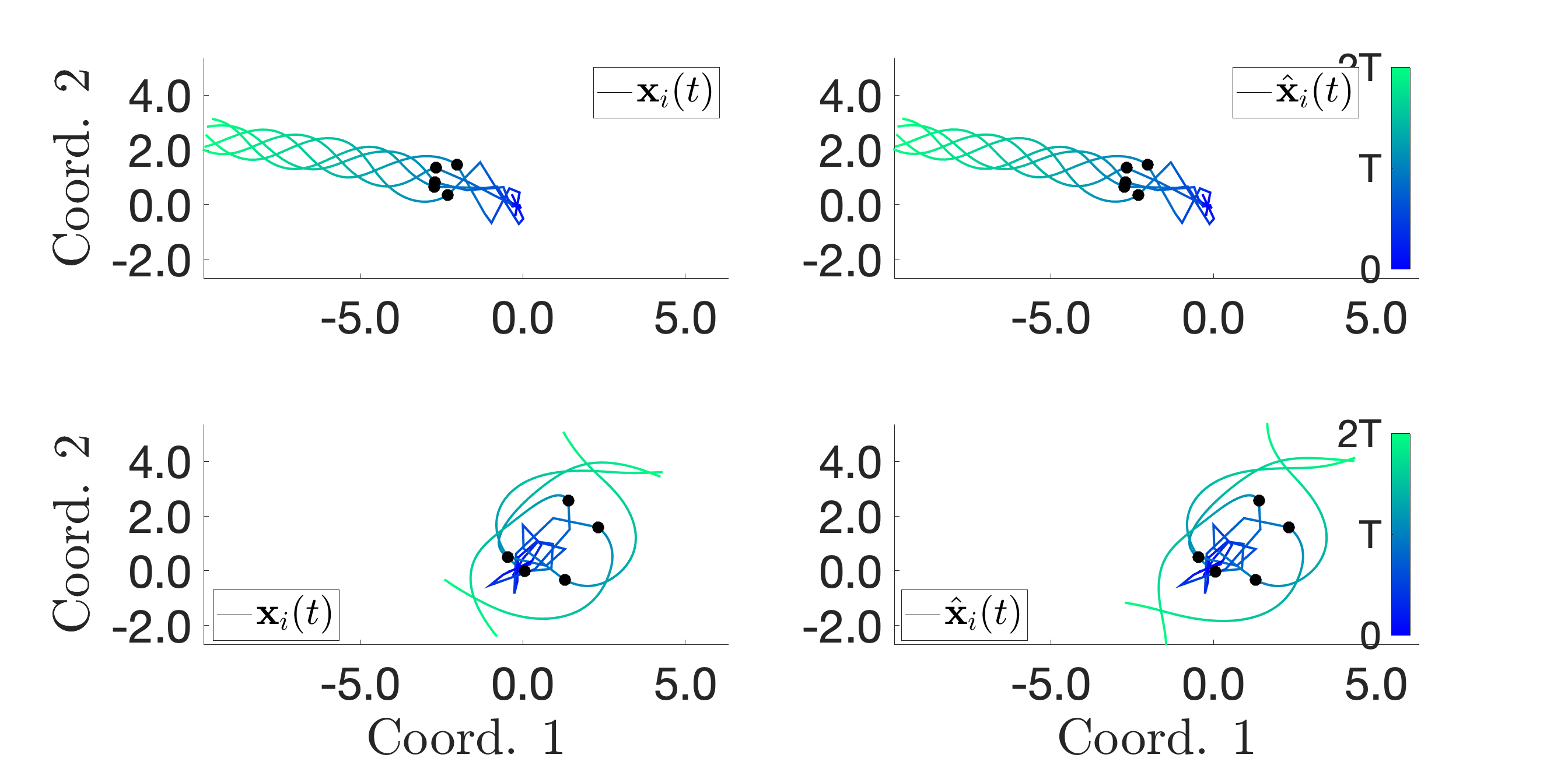}
}
\subfigure[$\{N,M,L,\sigma\} = \{10,3,3,0.1\}$]{
\includegraphics[width=0.47\linewidth]{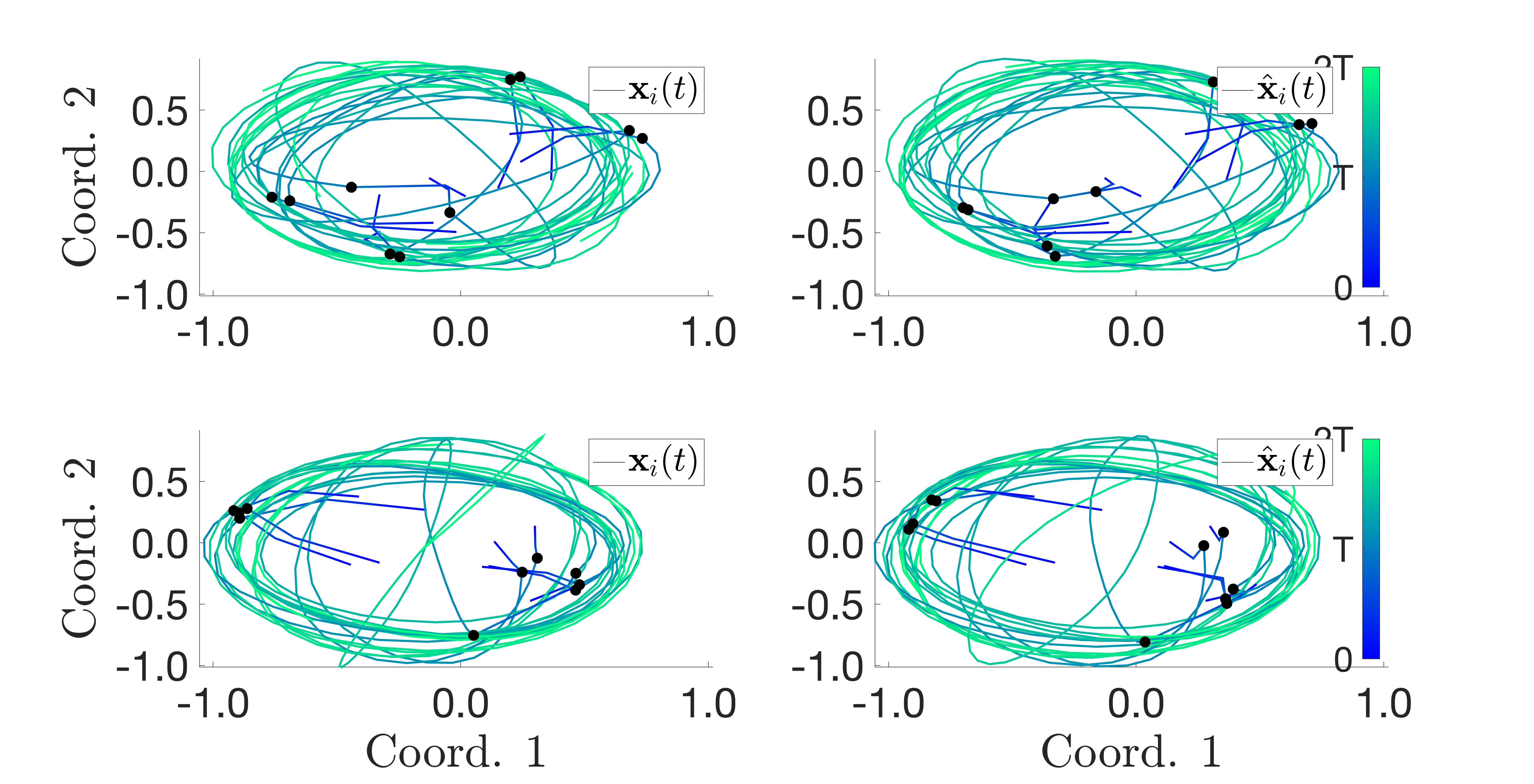}
}
\caption{Learning Fish Milling (FM) using the Matérn kernel. (a): $\phi$ versus the posterior mean $\hat{\intkernel}_{mean}$, with a two-standard-deviation band (light blue color) around the mean. The grey bars represent the histogram of pairwise distances, with density value shown in the orange axis. (b),(c): The true (left) versus predicted (right) trajectories using $\hat{\balpha}$ and $\hat{\intkernel}$  with initial conditions of training data (top) and testing data (bottom).}
\label{fig:ex_FM_traj}
\end{figure}

\paragraph{Baseline Comparisons}
For the case where $\{N,M,L,\sigma\}=\{5,1,9,0\}$, we compare our results with the SINDy model and the FNN models: for the SINDy model, we apply a reasonably large dictionary of monomials up to order 2 and sines and cosines of frequencies $\{k\}_{k=1}^{10}$, and fit the system $\dot{\bY} = [\bV, \bZ]^T = \tilde{\rhsfo}_{\intkernel}(\bY)$ with the same training data $\{\bbY_M, \bbZ_{\sigma^2,M}\}$; for the FNN model, we consider a three-layer FNN with $[40, 40, 20]$ hidden units. The results are shown in Figure \ref{fig:trajs_appendix} (a),(b) and Table \ref{tab:FM_traj_compare}. Since we only train the models with a small amount of data (9 observations, 0:0.625:5 in the training time interval [0,5]), both SINDy and FNN fail to provide accurate trajectory predictions on the training time interval and perform even worse in the testing time interval [5,10], since they do not include the physical information of the fish milling system in contrast to our method.

\begin{figure*}[t]
\centering
\subfigure[True versus Our Model in FM dynamics]{
\includegraphics[width=0.45\textwidth]{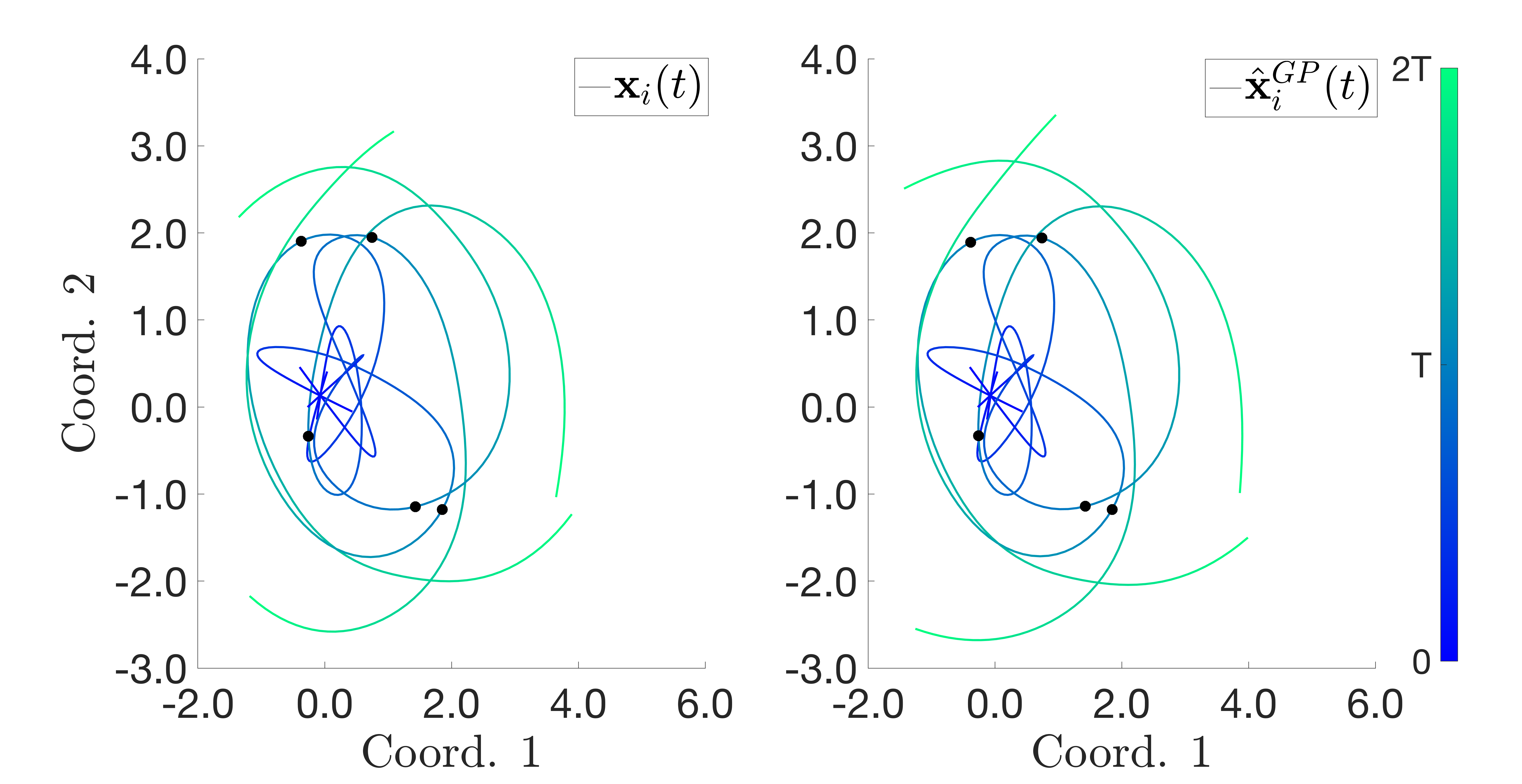}
}
\subfigure[Comparison with SINDy and FNN in FM dynamics]{
\includegraphics[width=0.45\textwidth]{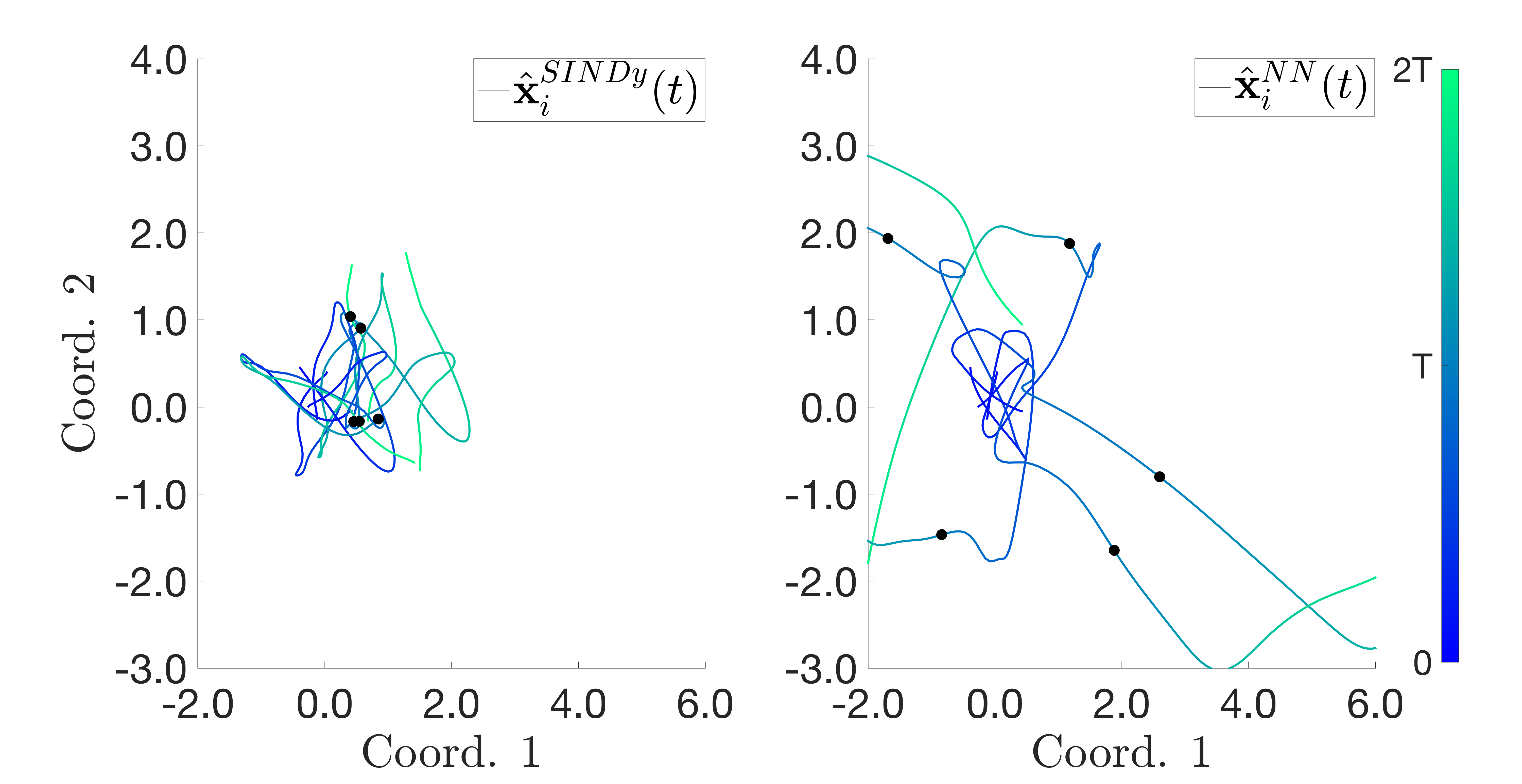}
}
\caption{Top: Learning FM dynamics from training data $\{N,M,L,\sigma\} = \{5,1,9,0.1\}$. (a): the true trajectory versus the prediction from our GP model. (b): the SINDy model and the FNN model trajectories.}
\label{fig:trajs_appendix}
\end{figure*}

\begin{table}[!htb]
\begin{center}
\caption{Baseline comparison. The relative trajectory prediction errors.} \label{tab:FM_traj_compare}
\vspace{0.1in}
\begin{tabular}{c c c}
\toprule
Approach & Training time interval &  Testing time interval\\
\cmidrule(lr){1-1}\cmidrule(lr){2-3}
GPs  & $\mathbf{3.6\cdot 10^{-3} \pm 2.5\cdot 10^{-3}}$ & $\mathbf{2.4\cdot 10^{-1} \pm 3.1\cdot 10^{-1}}$ \\
\cmidrule(lr){1-1}\cmidrule(lr){2-3}
SINDy  & $9.4\cdot 10^{-1} \pm 3.8\cdot 10^{-1}$ & $1.2\cdot 10^{0} \pm 4.7\cdot 10^{-1}$\\
\cmidrule(lr){1-1}\cmidrule(lr){2-3}
FNN  & $2.2\cdot 10^{0} \pm 1.3\cdot 10^{0}$ & $3.1\cdot 10^{0} \pm 1.7\cdot 10^{0}$\\
\bottomrule
\end{tabular} 
\end{center}
\end{table}

\paragraph{Comparison with the previous methods}  Here, we also provide another comparison with one recently proposed method on the data-driven discovery of interacting particle system \citep{lu2019nonparametric,zhong2020data}, where they also incorporate the information of the physical structures of the model but only assume $\intkernel$ is unknown in the system, and $\balpha$ is given. Therefore, to compare with this previous method, we fixed the parameters $(\balpha, \theta)$ in our model with the true parameters $\balpha$ and a guessed parameter for $\theta$, and we predict $\intkernel$ with the fixed parameters $(\balpha, \theta)$ without training procedure.

 We consider the case where $\{N,M,L\} = \{5,5,6\}$ and $\sigma = 0$ or  $0.01$. Using the previous method, we apply piecewise linear polynomials with $n = 18$ basis functions to approximate $\intkernel$ on the support $[0,R] = [0,4.66]$ by solving a least square problem, while we consider the function space $\mathcal{H}_{\mathrm{Mat\acute{e}rn}}$ with $\theta = (10^2,0.1)$ using our new method. By Theorem 14, the posterior mean estimator obtained is  approximately the least-square solution when the noise is very small. Noise in fact serves a role of regularization in our method. We compare the errors of $\hat{\intkernel}$ using the {supremum} norm (post-smoothing techniques are applied to piecewise linear estimators) and present the relative trajectory prediction errors for both estimators. 

\begin{table}[!htb]
\begin{center}
\caption{The trajectory prediction errors for different models of FM dynamics $\{N,M,L\} = \{5,5,6\}$.} \label{tab:FM_C_traj}
\vspace{0.1in}
\resizebox{\linewidth}{!}{%
\begin{tabular}{ccccccc}
\toprule[.05cm]
Method & $\sigma$ & $\| \hat{\intkernel} - \intkernel\|_{\infty}$ & Training IC $[0,5]$ & Training IC $[5,10]$ & New IC $[0,5]$ & New IC $[5,10]$\\
\cmidrule(lr){1-2}\cmidrule(lr){3-7}
previous & 0 & $1.9\cdot 10^{-1} \pm 2.9\cdot 10^{-2}$ & $1.3\cdot 10^{-1} \pm 2.8\cdot 10^{-2}$ & $2.7\cdot 10^{-1} \pm 1.1\cdot 10^{-1}$  & $1.4\cdot 10^{-1} \pm 3.8\cdot 10^{-2}$ & $3.5\cdot 10^{-1} \pm 1.5\cdot 10^{-1}$\\
\cmidrule(lr){1-2}\cmidrule(lr){3-7}
now & 0 & $\mathbf{3.4\cdot 10^{-2} \pm 5.5 \cdot 10^{-3}}$ & $\mathbf{2.4\cdot 10^{-3} \pm 2.5\cdot 10^{-3}}$ &  $\mathbf{1.5\cdot 10^{-1} \pm 5.6\cdot 10^{-1}}$ & $\mathbf{2.0\cdot 10^{-3} \pm 1.4\cdot 10^{-3}}$ & $\mathbf{1.9\cdot 10^{-1} \pm 4.9\cdot 10^{-1}}$\\
\cmidrule(lr){1-2}\cmidrule(lr){3-7}
previous & 0.01 & $1.9\cdot 10^{-1} \pm 3.5\cdot 10^{-2}$ & $1.5\cdot 10^{-1} \pm 3.6\cdot 10^{-2}$ & $2.8\cdot 10^{-1} \pm 9.4\cdot 10^{-2}$ & $1.3\cdot 10^{-1} \pm 3.4\cdot 10^{-2}$ & $3.1\cdot 10^{-1} \pm 1.5\cdot 10^{-1}$\\
\cmidrule(lr){1-2}\cmidrule(lr){3-7}
now & 0.01 & $\mathbf{2.8\cdot 10^{-2} \pm 1.6\cdot 10^{-2}}$ & $\mathbf{6.9\cdot 10^{-3} \pm 2.5\cdot 10^{-2}}$ & $\mathbf{1.1\cdot 10^{-1} \pm 8.3\cdot 10^{-1}}$ & $\mathbf{3.6\cdot 10^{-3} \pm 2.6\cdot 10^{-3}}$ & $\mathbf{1.5\cdot 10^{-1} \pm 7.2\cdot 10^{-1}}$\\
\bottomrule
\end{tabular} 
}
\end{center}
\end{table}

From the results shown in Table \ref{tab:FM_C_traj}, we can see that our new approach has better performance than the previous approach, given the limited size of training data. One drawback of the previous approach lies in selecting the optimal number of bases to minimize the error. We have tried different $n$ (up to $100$) for the previous approach, but none of them can provide a significantly better result than what we have shown here.  In contrast, the training step of our approach automatically chooses a basis by updating the prior. For noisy data, our approach  is equivalent to regularized least squares. The numerical results also show that regularization improves prediction accuracy.

\paragraph{Other collective patterns.} The FM system can also display other collective patterns such as double ring and symmetric escape dynamics. Below, we also display the learning results to show our approach can faithfully learn and predict the ground truth from a small set of noisy data in different scenarios and for systems with larger dimensions.


\begin{figure}[!hbt]
\centering
\subfigure[$\{N,M,L,\sigma\} = \{30,2,6,0.1\}$. Double Ring Pattern.]{
\includegraphics[width=0.45\linewidth]{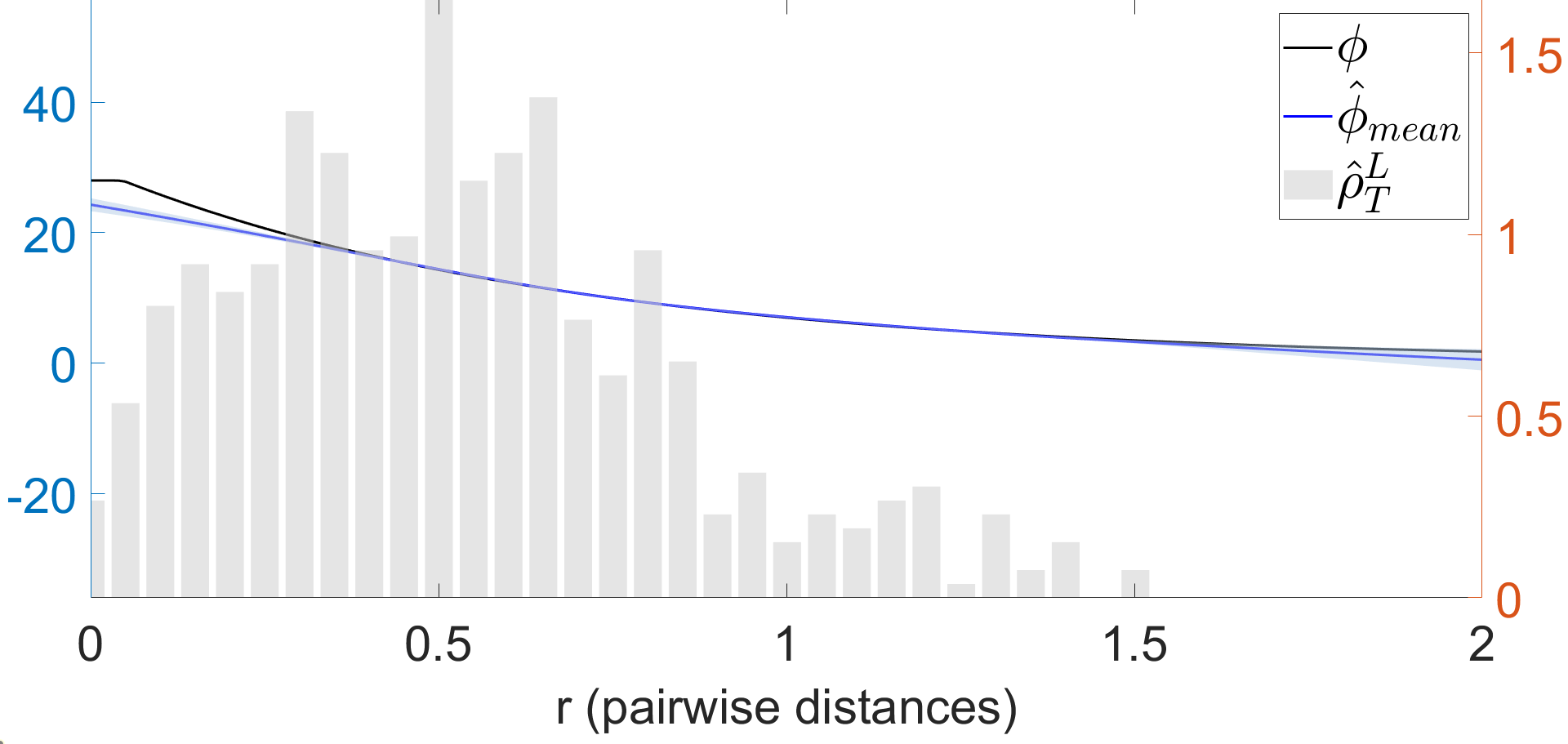}
\hspace{0.3cm}
\includegraphics[width=0.47\linewidth]{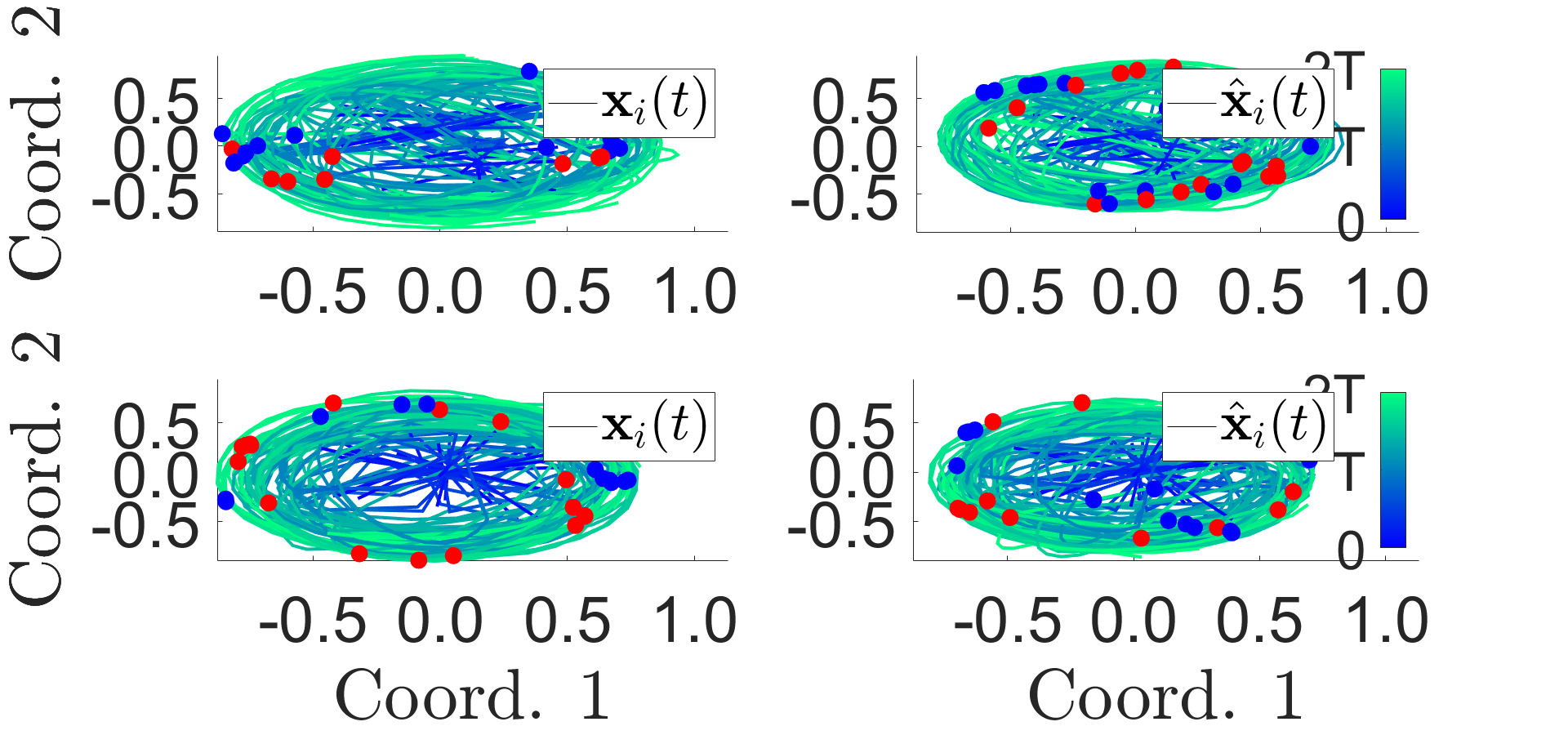}
}

\subfigure[$\{N,M,L,\sigma\} = \{30,2,6,0.1\}$. Symmetric Escape Pattern.]{
\includegraphics[width=0.45\linewidth]{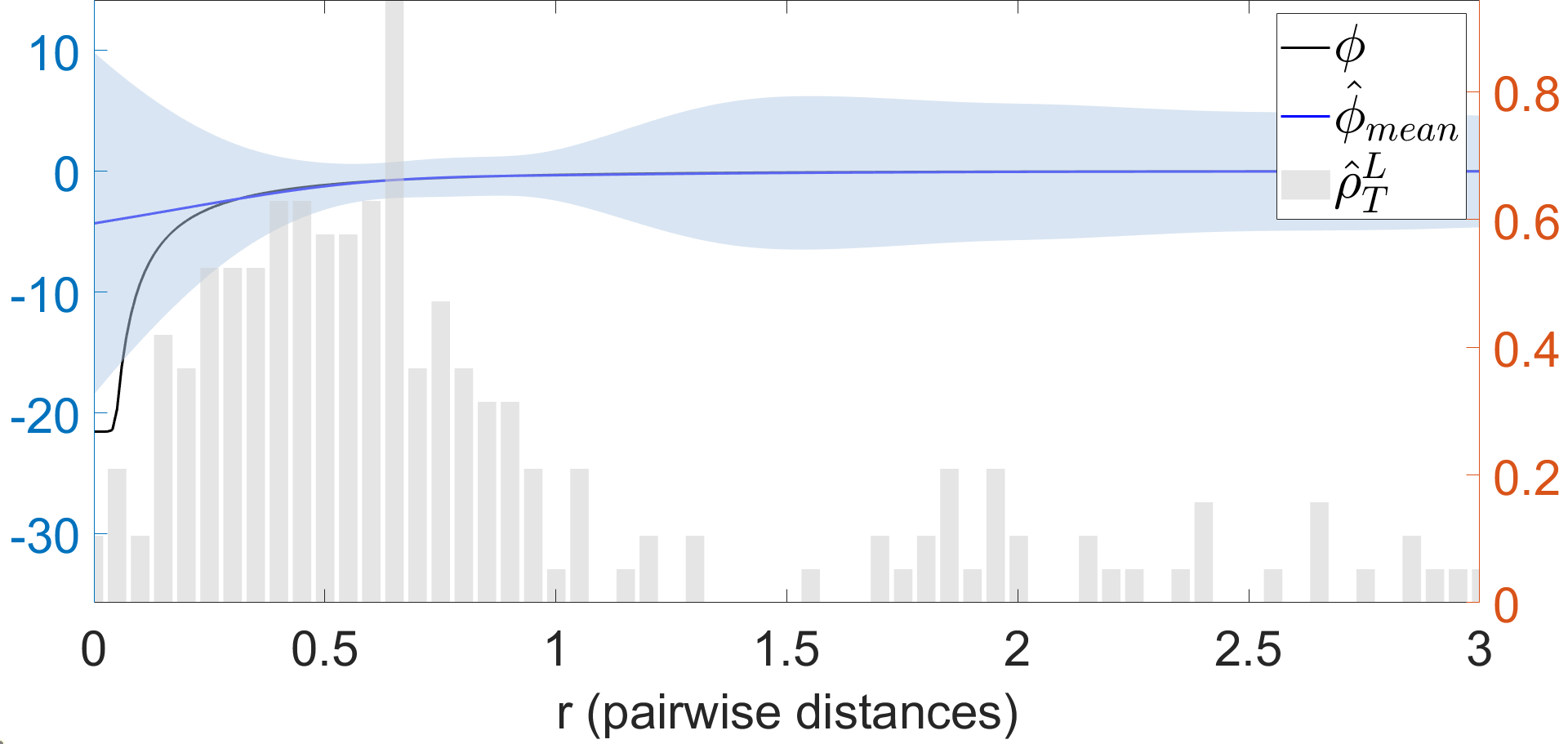}
\hspace{0.3cm}
\includegraphics[width=0.47\linewidth]{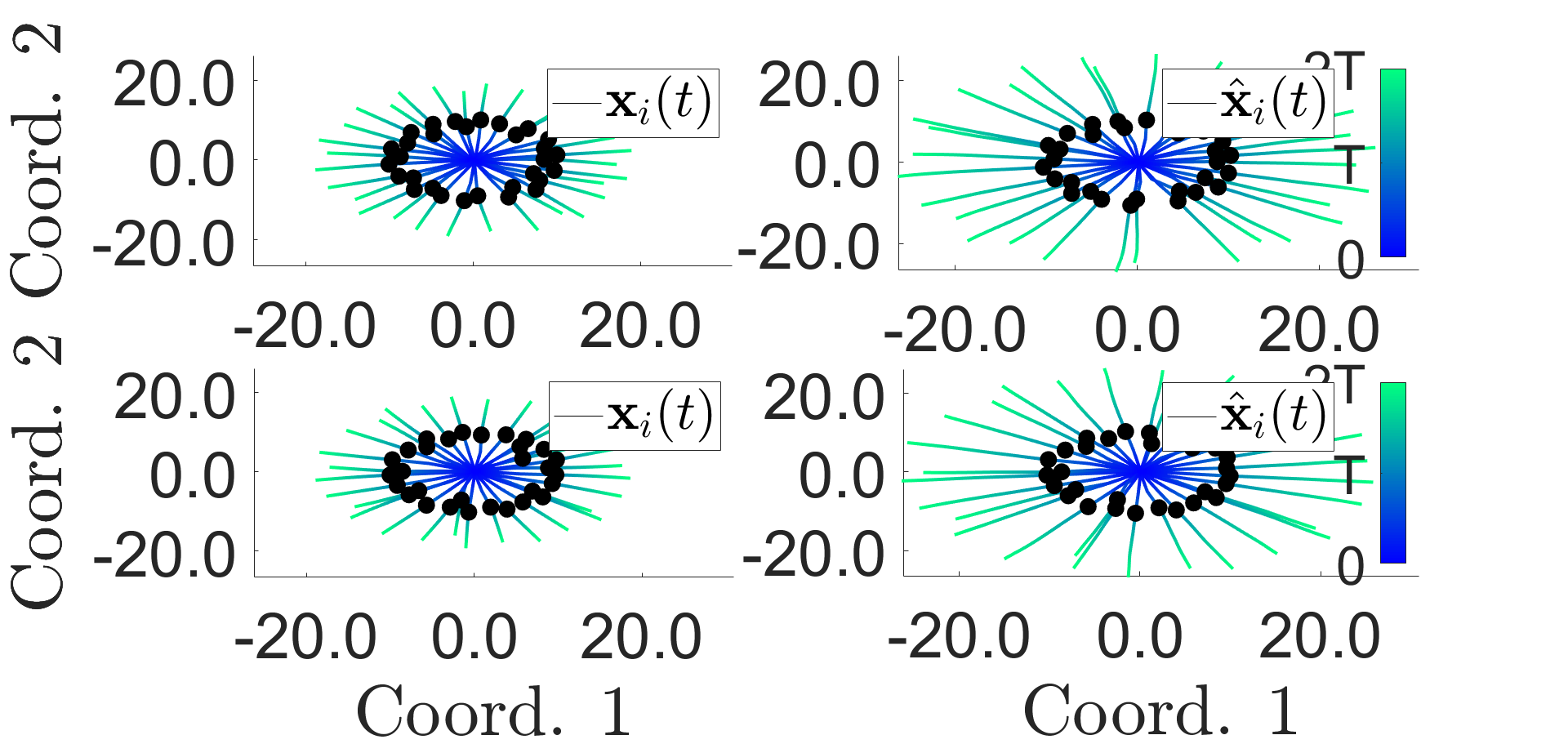}
}


\caption{Learning Fish Milling (FM) using the Matérn kernel for systems with $N = 30$. The true versus predicted kernel (left), and the true versus predicted trajectories (right). Each shows unique collective behavior.}
\label{fig:FM_traj_escape_1}
\end{figure}

Our method faithfully captures the behavior of the system and is capable of robust prediction. While we expect the error in our approximation of the true interaction force, our predictions reflect well the general dynamics and preserve the critical topological properties of each pattern. The learning errors are summarized in Table \ref{tab:FMHD}.

In the double ring pattern, we color counterclockwise orbiting agents as red and clockwise orbiting agents as blue. We can see the mixture of directions of orbit characteristic of the double ring pattern. While prediction errors occur in the exact position of the agents, our method has great success in faithfully predicting the orbit type. With very small amounts of data ($M = 2$), our method predicts only one orbit direction incorrectly among all predictions.

In the symmetric escape pattern, we have a repulsive force under which the agents escape outward with straight trajectories. Our method captures this behavior with very little error, despite the vanishing of learning information as $r$ becomes close to $0$.

\begin{table}[!htb]
\begin{center}
\caption{Means and standard deviations of the error of $\hat\balpha$ and $\hat\phi$ for above patterns} \label{tab:FMHD}
\vspace{0.1in}
\begin{tabular}{cccc}
\toprule[.05cm]
Pattern & $\{C_{rp},l_{rp}, C_a,l_a\}$  & $\| \hat{\balpha} - \balpha\|_{\infty}$  & $\| \hat{\intkernel} - \intkernel\|_{\infty}$ \\
\cmidrule(lr){1-2}\cmidrule(lr){3-4}
Double Ring & $\{0.5,0.5,1,1\}$ & $3.17 \cdot 10^{-1} \pm 4.7 \cdot 10^{-1}$ & $2.30 \cdot 10^{-1} \pm 8.8 \cdot 10^{-2}$ \\
\cmidrule(lr){1-2}\cmidrule(lr){3-4}
Symmetric Escape & $\{2,0.9,1,1\}$ & $8.50 \cdot 10^{-1} \pm 8.1 \cdot 10^{-2}$ & $ 7.15 \cdot 10^{-1} \pm 3.3 \cdot 10^{-2}$ \\
\bottomrule
\end{tabular} 
\end{center}
\end{table}

The above shows the kernel learning errors. For this section, we only use $20$ iterations of the maximum likelihood method, converging very quickly to a close approximation. This exhibits the power of even very few iterations of this methodology.

\section{Conclusion}

We have considered the inverse problem in 
a particular yet widely used set of interacting agent systems.  We provide a GP based approach that converges optimally and avoids the
curse of dimensionality, by exploiting the multiple symmetries in these systems. Extensions of our learning approach to more general systems, such as heterogeneous systems with multiple types of agents and external potentials and stochastic systems may be considered. Another direction is to consider the collective inference problem when only distributions of trajectory data are provided. 

   

\appendix

\section{Detailed proofs for lemmas and theorems}\label{proofsappendix}

\begin{proof}[Proof of Lemma \ref{varinequality}]
Note that $\big\|K_{r}\big\|_{\mH} \leq \kappa$ for any $r\in [0,R]$, we have that  
\begin{align*}
\|B_{M}\intkernelvar\|_{\mH} &\leq \frac{1}{LM}\sum_{l,m=1}^{L,M}\sum_{i=1,i', i'' \neq i}^{N}\frac{1}{N^3}\big\|K_{r_{ii'}^{(m,l)}}\big\|_{\mH} \|\intkernelvar\|_{\infty} R^2\\&\leq \kappa \|\intkernelvar\|_{\infty} R^2, a.s.
\end{align*}
For the second inequality, we have that 
\begin{align*}
\E\|B_{M}\intkernelvar\|_{\mH}^2&=\E \langle A^*_{M}A_{M}\intkernelvar, A^*_{M}A_{M}\intkernelvar\rangle_{\mH}= \langle A^*A\intkernelvar, B\intkernelvar\rangle_{\mH}=\langle A\intkernelvar, AB\intkernelvar\rangle_{L^2(\mbf{\rho}_X)}\\
&\leq \| A\intkernelvar\|_{L^2(\mbf{\rho}_X)}\| AB\intkernelvar\|_{L^2(\mbf{\rho}_X)}\leq \| \intkernelvar\|_{L^2(\tilde \rho_T^L)} \| B\intkernelvar\|_{L^2(\tilde \rho_T^L)}\\
&\leq \|B\|_{L^2(\tilde \rho_T^L)}\| \intkernelvar\|_{L^2(\tilde \rho_T^L)} ^2 \leq \kappa^2 R^2\| \intkernelvar\|_{L^2(\tilde \rho_T^L)} ^2,
\end{align*} where we use Lemma \ref{lem1} and Equation \eqref{opinequality2}. 
\end{proof}

\begin{proof}[Proof of Lemma \ref{decomp} ]Define the $\mH$-valued random variable $$\xi^{(m)}=\frac{1}{L}\sum_{l=1}^{L}\sum_{i=1,i', i'' \neq i}^{N}\frac{1}{N^3}K_{r_{ii'}^{(m,l)}} \langle \intkernelvar ,K_{r_{ii''}^{(m,l)}} \rangle_{\mH} \langle \br_{ii'}^{(m,l)},\br_{ii''}^{(m,l)}\rangle.$$ Then the random variables $\{\xi^{(m)}\}_{m=1}^{M}$ are i.i.d. According to Lemma \ref{varinequality}, we have that 
\begin{align*}
\|\xi^{(m)}\|_{\mH}&\leq \kappa R^2 \|\intkernelvar\|_{\infty},\\
\E\|\xi^{(m)}\|_{\mH}^2&\leq \kappa^2 R^2\rhotnorm{\intkernelvar}.
\end{align*} Note that $B_M\intkernelvar-B\intkernelvar=\frac{1}{M}\sum_{m=1}^M (\xi^{(m)}-\E(\xi^{(m)})).$ The conclusion follows by applying Lemma \ref{coninequality2} to $\{\xi^{(m)}\}_{m=1}^{M}$. 
\end{proof}

\begin{proof} [Proof of Lemma \ref{sampleerror}] We introduce an intermediate quantity $(B_M+\lambda)^{-1}B\intkernelvar$ and decompose 
\begin{align*}
&(B_M+\lambda)^{-1}B_M\intkernelvar-(B+\lambda)^{-1}B\intkernelvar \\
=&(B_M+\lambda)^{-1}B_M\intkernelvar-(B_M+\lambda)^{-1}B\intkernelvar+(B_M+\lambda)^{-1}B\intkernelvar- (B+\lambda)^{-1}B\intkernelvar.
\end{align*}
Since $\|(B_M+\lambda)^{-1}\|_{\mH}\leq \frac{1}{\lambda}$, we have that 
\begin{align*}
\|(B_M+\lambda)^{-1}B_M\intkernelvar-(B_M+\lambda)^{-1}B\intkernelvar\|_{\mH}\leq \frac{1}{\lambda} \|B_M\intkernelvar-B\intkernelvar\|_{\mH}.
\end{align*}

Applying Lemma \ref{decomp} to $B_M\intkernelvar-B\intkernelvar$, we obtain with probability at least $1-\delta/2$
\begin{align*}
\frac{1}{\lambda}\|B_M\intkernelvar-B\intkernelvar\|_{\mH} &\leq \frac{4\kappa R^2 \|\intkernelvar\|_{\infty} \log(4/\delta)}{\lambda M}+ \kappa R \rhotnorm{\intkernelvar} \sqrt{ \frac{2\log(4/\delta)}{\lambda^2 M}}\\ &\leq \frac{4\kappa R^2 \|\intkernelvar\|_{\infty} \log(4/\delta)}{\lambda M}+ \kappa R^2 \|\intkernelvar\|_{\infty} \sqrt{ \frac{2\log(4/\delta)}{\lambda^2 M}}.
\end{align*}

On the other hand, we have
\begin{align*}
\|(B_M+\lambda)^{-1}B\intkernelvar- (B+\lambda)^{-1}B\intkernelvar\|_{\mH} &= \|(B_M+\lambda)^{-1}(B-B_M)(B+\lambda)^{-1}B\intkernelvar\|_{\mH}\\ 
&\leq \frac{1}{\lambda}\|(B-B_M)(B+\lambda)^{-1}B\intkernelvar\|_{\mH}.
\end{align*}

Since $\intkernelvar^{\lambda,\infty}_{\mH}=(B+\lambda)^{-1}B\intkernelvar$ is the unique minimizer of the expected risk functional $ \mE(\psi)=\|A{\psi}-A{\intkernelvar}\|^2_{L^2(\rho_{\bX})}+\lambda\|\psi\|_{\mH}^2,$ plugging $\psi=0$, we obtain that 
$$\|A{\intkernelvar^{\lambda,\infty}_{\mH}}-A{\intkernelvar}\|^2_{L^2(\rho_{\bX})}+\lambda\|\intkernelvar^{\lambda,\infty}_{\mH}\|_{\mH}^2 <\| A{\intkernelvar}\|^2_{L^2(\rho_{\bX})},$$ which implies that 
\begin{align}
\label{eq1}\|\intkernelvar^{\lambda,\infty}_{\mH}\|_{\mH}&\leq \frac{1}{\sqrt{\lambda}}\| A{\intkernelvar}\|_{L^2(\rho_{\bX})},\\
\|A{\intkernelvar^{\lambda,\infty}_{\mH}}\|^2_{L^2(\rho_{\bX})} &\leq 2\|A{\intkernelvar}\|^2_{L^2(\rho_{\bX})}.
\end{align}

By Lemma \ref{infbound} and \eqref{eq1}, it follows that
\begin{equation}
\label{eq2}
  \infnorm{\intkernelvar^{\lambda,\infty}_{\mH}}\leq \kappa \|\intkernelvar^{\lambda,\infty}_{\mH}\|_{\mH} \leq \frac{\kappa}{\sqrt{\lambda}}\|A\intkernelvar\|_{L^2(\rho_{\bX})}.
\end{equation}

Suppose the coercivity condition \eqref{coercivity} holds. We have that \begin{equation}
\label{eq3}
  \rhotnorm{\intkernelvar^{\lambda,\infty}_{\mH}}^2\leq \frac{1}{c_{\mH}}\|A{\intkernelvar^{\lambda,\infty}_{\mH}}\|_{L^2(\rho_{\bX})}^2 \leq \frac{2}{c_{\mH}}\|A{\intkernelvar}\|_{L^2(\rho_{\bX})}^2.
\end{equation}
Note that $ \Rhoxnorm{A\intkernelvar}^2< R^2\|\intkernelvar\|_{\infty}^2$ (see \eqref{opinequality}).
Applying Lemma \ref{decomp} to $\intkernelvar^{\lambda,\infty}_{\mH}=(B+\lambda)^{-1}B\intkernelvar$, and using \eqref{eq2} and \eqref{eq3} , we obtain that with probability at least $1-\delta/2$,
\begin{align*}
\frac{1}{\lambda}\|(B-B_M)(B+\lambda)^{-1}B\intkernelvar\|_{\mH} & \leq \frac{4\kappa R^2 \|\intkernelvar^{\lambda,\infty}_{\mH}\|_{\infty} \log(4/\delta)}{\lambda M}+ \kappa R \rhotnorm{\intkernelvar^{\lambda,\infty}_{\mH}}
\sqrt{ \frac{2\log(4/\delta)}{\lambda^2 M}}\\
&\leq \frac{4\kappa^2 R^3 \|\intkernelvar\|_{\infty} \log(4/\delta)}{\lambda^{\frac{3}{2}} M}+ \frac{\sqrt{2}}{\sqrt{c_{\mH}}}\kappa^2 R \rhotnorm{\intkernelvar} \sqrt{ \frac{2\log(4/\delta)}{\lambda^2 M}}\\ &\leq \frac{4\kappa^2 R^3 \|\intkernelvar\|_{\infty} \log(4/\delta)}{\lambda^{\frac{3}{2}} M}+ \frac{\sqrt{2}}{\sqrt{c_{\mH}}} \kappa^2 R^2 \|\intkernelvar\|_{\infty} \sqrt{ \frac{2\log(4/\delta)}{\lambda^2 M}}.
\end{align*}

Finally, by combining two bounds, we obtain that with a probability at least $1-\delta$
\begin{align*}
&\|(B_M+\lambda)^{-1}B_M\intkernelvar-(B_M+\lambda)^{-1}B\intkernelvar\|_{\mH} \\ &\leq \frac{\kappa R^2\|\intkernelvar\|_{\infty}\sqrt{2\log(4/\delta)}}{\sqrt{M}\lambda}\bigg[ (\kappa+1)\sqrt{\frac{2}{c_{\mH}}}+ \frac{(\kappa R+\sqrt{\lambda})\sqrt{2\log(4/\delta)}}{\sqrt{M\lambda}} \bigg]\\&\leq \frac{\kappa R^2\|\intkernelvar\|_{\infty}\sqrt{2\log(4/\delta)}}{\sqrt{M}\lambda} (C_{\kappa,{\mH}}+\frac{C_{\kappa,R,\lambda}\sqrt{2\log(4/\delta)}}{\sqrt{M\lambda}} ).
\end{align*} where $C_{\kappa,{\mH}}=(\kappa+1)\sqrt{\frac{2}{c_{\mH}}}$ and $C_{\kappa,R,\lambda}=\kappa R +\sqrt{\lambda}$.
\end{proof}

\begin{proof}[Proof of Theorem \ref{hbound}] We decompose $\intkernel_{\mH}^{\lambda,M}-\intkernel_{\mH}^{\lambda,\infty}=\intkernel_{\mH}^{\lambda,M}-\tilde\intkernel_{\mH}^{\lambda,M}+\tilde\intkernel_{\mH}^{\lambda,M}-\intkernel_{\mH}^{\lambda,\infty}$ where $\tilde\intkernel_{\mH}^{\lambda,M}$ is the empirical minimizer for noise-free observations. Then applying Lemma \ref{decomp} to the term $\tilde\intkernel_{\mH}^{\lambda,M}-\intkernel_{\mH}^{\lambda,\infty}$, we obtain that with probability at least $1-\delta$,
\begin{align}\label{trian1}
\|\tilde\intkernel_{\mH}^{\lambda,M}-\intkernel_{\mH}^{\lambda,\infty}\|_{\mH} \leq \frac{\kappa R^2\|\intkernele\|_{\infty}\sqrt{2\log(4/\delta)}}{\sqrt{M}\lambda}(C_{\kappa,{\mH}}+\frac{C_{\kappa,R,\lambda}\sqrt{2\log(4/\delta)}}{\sqrt{M\lambda}} ).
\end{align}
We now just need to estimate the ``noise part'' $\intkernel_{\mH}^{\lambda,M}-\tilde\intkernel_{\mH}^{\lambda,M}$. According to \eqref{em}, 
\begin{align}\label{em1}
\tilde\intkernel_{\mH}^{\lambda,M}-\intkernel_{\mH}^{\lambda,M}=(B_M+\lambda)^{-1}A_{M}^{*}\mathbb{W}_M,
\end{align} where the noise vector $\mathbb{W}_M$ follows a multivariate Gaussian distribution with zero mean and variance $\sigma^2I_{dNML}$. Note that
\begin{align*}
\|\tilde\intkernel_{\mH}^{\lambda,M}-\intkernel_{\mH}^{\lambda,M}\|_{\mH}^2 &= \langle \mathbb{W}_M, A_M(B_M+\lambda)^{-2}A_M^*\mathbb{W}_M\rangle\\
&= \mathbb{W}_M^T\Sigma_M \mathbb{W}_M,
\end{align*} where the matrix $$\Sigma_M= (\mK_{\rhsfo_\intkernele}(\bbX_M,\bbX_M) +\lambda NdML I)^{-1}\mK_{\rhsfo_\intkernele}(\bbX_M,\bbX_M) (\mK_{\rhsfo_\intkernele}(\bbX_M,\bbX_M) +\lambda dNML I)^{-1}.$$  

The matrix $\Sigma_M$ is the matrix form of the operator $A_M(B_M+\lambda)^{-2}A_M^*$, as is derived from \eqref{em}, \eqref{solution1} and \eqref{id}. We have that
\begin{align*}
\mathrm{Tr}(\Sigma_M)&\leq \frac{1}{\lambda^2(MLNd)^2}\mathrm{Tr}(\mK_{\rhsfo_\intkernele}(\bbX_M,\bbX_M) )\\
&= \frac{1}{\lambda^2(MLNd)^2} \sum_{m=1,l=1,i=1}^{M,L,N} \frac{1}{N^2}\sum_{k\neq i,k'\neq i}\mK(r_{ik}^{(m,l)},r_{ik'}^{(m,l)})(\br_{ik'}^{(m,l)})^T \br_{ik}^{(m,l)}\\
&\leq \frac{1}{\lambda^2d^2MLN}\kappa^2R^2, a.s.
\end{align*}
and 
\begin{align*}
\mathrm{Tr}(\Sigma_M^2)&\leq \frac{1}{\lambda^4(MLNd)^4}\mathrm{Tr}(\mK_{\rhsfo_\intkernele}(\bbX_M,\bbX_M)^2 )\\
&= \frac{1}{\lambda^4(MLNd)^4} \sum_{m,m'=1,l,l'=1,i,i'=1}^{M,L,N} \bigg\| \frac{1}{N^2}\sum_{k\neq i,k'\neq i'}\mK(r_{ik}^{(m,l)},r_{i'k'}^{(m',l')}) \br_{ik}^{(m,l)}(\br_{i'k'}^{(m',l')})^T\bigg\|_F^2\\
&\leq \frac{\kappa^4R^4}{\lambda^4d^4(MLN)^2}, a.s.
\end{align*}

Now we apply the Hanson-Wright inequality (Theorem \ref{HAnson}) for the Gaussian random vector $\mathbb{W}_M$ with $S_0=\sigma^2$. Note that for any $\epsilon>0$, 
 \begin{align*}
 \min \bigg\{ \frac{\epsilon^2}{\sigma^4\|\Sigma_M\|_{\mathrm{HS}}^2}, \frac{\epsilon}{\sigma^2\|\Sigma_M\|}\bigg\} &\geq \min \bigg\{ \frac{\epsilon^2}{\sigma^4\mathrm{Tr}(\Sigma_M^2)}, \frac{\epsilon}{\sigma^2 \mathrm{Tr}(\Sigma_M)}\bigg\},
\end{align*} 
we obtain that, with a probability at least $1-e^{-t^2}$,  
 \begin{align*}
\mathbb{W}_M^T\Sigma_M \mathbb{W}_M &\leq \frac{1}{c}\sigma^2\max\{\mathrm{Tr}(\Sigma_M),\sqrt{\mathrm{Tr}(\Sigma_M^2)}\}(1+2t+t^2)\\
 &\leq \frac{\kappa^2R^2\sigma^2}{c\lambda^2 d^2 {MLN} }(1+2t+t^2)
 \end{align*} for any $t>0$, where $c$ is an absolute positive constant appearing in the Hanson-Wright inequality. Therefore, with a probability at least $1-\delta$, there holds 
 \begin{align}\label{trian2}
 \|\tilde\intkernel_{\mH}^{\lambda,M}-\intkernel_{\mH}^{\lambda,M}\|_{\mH} \leq \frac{\kappa R\sigma(\log(1/\delta)+1)}{\sqrt{c}\lambda d \sqrt{MLN}}<\frac{2\kappa R\sigma \log(4/\delta)}{\sqrt{c}\lambda d \sqrt{MLN}}.
 \end{align} 
 
 Now combining \eqref{trian1} and \eqref{trian2}, we obtain that with probability at least $1-\delta$,
 $$\|\intkernel_{\mH}^{\lambda,M}-\intkernel_{\mH}^{\lambda,\infty}\|_{\mH} \leq  \frac{\kappa R^2\|\intkernele\|_{\infty}\sqrt{2\log(8/\delta)}}{\sqrt{M}\lambda}(C_{\kappa,{\mH}}+\frac{C_{\kappa,R,\lambda}\sqrt{2\log(8/\delta)}}{\sqrt{M\lambda}} ) + \frac{2\kappa R\sigma \log(8/\delta)}{\sqrt{c}\lambda d \sqrt{MLN}}.$$
\end{proof}

\section{Auxiliary lemmas and theorems}

\begin{lemma}\label{lemma: conditioning Gaussian}
Let $\bx$ and $\by$ be jointly Gaussian random vectors
 \begin{equation}
 \begin{bmatrix}
 \bx\\ \by
 \end{bmatrix}
 \sim \mathcal{N} (
 \begin{bmatrix}
 \mu_{\bx}\\ \mu_{\by}
 \end{bmatrix}
 , 
 \begin{bmatrix}
 A & C\\
 C^T & B
 \end{bmatrix}
 ),
\end{equation}
then the marginal distribution of $\bx$ and the conditional distribution of $\bx$ given $\by$ are
\begin{equation}
 \bx \sim \mathcal{N}(\mu_{\bx},A), \quad \textrm{and } \bx|\by \sim \mathcal{N}(\mu_{\bx} + CB^{-1}(\by - \mu_{\by}), A - CB^{-1}C^T).
\end{equation}
\end{lemma}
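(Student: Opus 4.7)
The marginal statement $\bx \sim \mathcal{N}(\mu_{\bx}, A)$ follows immediately from the definition of a joint Gaussian: linear (in particular, coordinate-projection) transformations of a Gaussian vector are Gaussian, and reading off the mean and covariance of the first block gives the claim. So the real content is the conditional formula, and the plan is to prove it using the ``decorrelation'' trick rather than grinding through the joint density with the Schur complement.

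The main idea is to construct an affine function of $\bx$ that is independent of $\by$, then recover the conditional law from that independence. Specifically, I would define the residual
\begin{equation}
\bz := \bx - \mu_{\bx} - CB^{-1}(\by - \mu_{\by}),
\end{equation}
and verify two things. First, $(\bz, \by)$ is jointly Gaussian, because it is obtained from $(\bx, \by)$ by an affine map. Second, $\bz$ and $\by$ are uncorrelated:
\begin{equation}
\mathrm{Cov}(\bz, \by) = \mathrm{Cov}(\bx, \by) - CB^{-1}\mathrm{Cov}(\by, \by) = C - CB^{-1} B = 0.
\end{equation}
For jointly Gaussian vectors, uncorrelated implies independent, so $\bz \perp \by$ and hence $\bz \mid \by \stackrel{d}{=} \bz$. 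A short direct calculation then gives $\E[\bz] = 0$ and
\begin{equation}
\mathrm{Cov}(\bz) = A - CB^{-1}C^T - C(B^{-1})^T C^T + CB^{-1} B B^{-1} C^T = A - CB^{-1}C^T,
\end{equation}
using symmetry of $B$. Rearranging $\bx = \bz + \mu_{\bx} + CB^{-1}(\by - \mu_{\by})$ and conditioning on $\by$ (which fixes the additive term while leaving $\bz$ with its unconditional distribution) yields the stated conditional Gaussian law.

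The only genuinely delicate point is ensuring $B^{-1}$ makes sense, i.e.\ that $B$ is strictly positive definite; if $B$ is merely positive semidefinite one needs the Moore--Penrose pseudo-inverse and an argument that the final formulas are invariant to the choice of generalized inverse. In the setting of the paper $B$ is the covariance of the observed data block (which includes an additive $\sigma^2 I$ term after noise is incorporated), so invertibility is automatic and this subtlety does not arise. The remaining steps are routine linear algebra; the only small care needed is to track that the cross-covariance term $C$ does not need to be symmetric while $A, B$ do.
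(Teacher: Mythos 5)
Your proposal is correct. Note that the paper does not actually prove this lemma at all --- its ``proof'' is a one-line citation to Appendix A of Rasmussen and Williams, so there is no argument to compare against step by step. Your decorrelation argument (defining $\bz = \bx - \mu_{\bx} - CB^{-1}(\by-\mu_{\by})$, checking $\mathrm{Cov}(\bz,\by)=0$, invoking that uncorrelated jointly Gaussian vectors are independent, and reading off the conditional law) is one of the two standard proofs of this fact, the other being the direct Schur-complement computation on the joint density; the decorrelation route is cleaner because it avoids manipulating determinants and completing the square in the exponent. Your covariance computation for $\bz$ is right (the three cross/quadratic terms collapse to a single $-CB^{-1}C^T$ using symmetry of $B$), and your remark about invertibility of $B$ is apt: in the paper's application $B$ is always $K_{\rhsfo_{\intkernel}}(\bbX,\bbX)+\sigma^2 I$, which is strictly positive definite, so the pseudo-inverse subtlety indeed never arises.
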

\begin{proof}
See, e.g. \citep{williams2006gaussian}, Appendix A.
\end{proof}

 \begin{lemma}\label{lem1} For any function $\intkernelvar \in L^2(\tilde \rho_T^L)$, we have that 
\begin{align} 
\Rhoxnorm{\rhsfo_{\intkernelvar}}^2 \leq \frac{N-1}{N} \rhotnorm{\intkernelvar}^2.
\end{align}
\end{lemma}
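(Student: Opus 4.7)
The plan is a straightforward Cauchy–Schwarz plus a counting/reindexing argument that converts the double sum over pairs into an integral against $\tilde\rho_T^L$. The only subtlety is keeping the several normalization conventions (the $1/N$ in the $L^2(\rho_{\bX})$ inner product, the $1/L$ in $\rho_{\bX}$, and the $1/(L\binom{N}{2})$ in $\rho_T^L$) consistent so the constants line up.

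First, unfold the norm using the convention established in the proof of Proposition \ref{propertyA}, namely $\Rhoxnorm{\rhsfo_{\intkernelvar}}^2 = \frac{1}{N}\int \sum_{i=1}^N \|[\rhsfo_{\intkernelvar}(\bX)]_i\|^2 \,d\rho_{\bX}$, where $[\rhsfo_{\intkernelvar}(\bX)]_i = \frac{1}{N}\sum_{i'\neq i} \intkernelvar(r_{ii'}) \br_{ii'}$. For each fixed $\bX$ and $i$, apply Cauchy–Schwarz to the $N-1$ vector terms inside the norm:
\begin{equation*}
\bigg\|\frac{1}{N}\sum_{i'\neq i}\intkernelvar(r_{ii'})\br_{ii'}\bigg\|^2 \;\leq\; \frac{N-1}{N^2}\sum_{i'\neq i}\intkernelvar^2(r_{ii'})\,r_{ii'}^2.
\end{equation*}
Summing this estimate over $i$ gives
\begin{equation*}
\Rhoxnorm{\rhsfo_{\intkernelvar}}^2 \;\leq\; \frac{N-1}{N^3}\int \sum_{i\neq i'}\intkernelvar^2(r_{ii'})\,r_{ii'}^2 \, d\rho_{\bX}.
\end{equation*}

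Second, convert this expectation into one against $\tilde\rho_T^L$. By the definition $\rho_{\bX}=\mathbb{E}[\frac{1}{L}\sum_l \delta_{\bX(t_l)}]$ and the definition of $\rho_T^L$ as the (normalized) marginal on pairwise distances, for any measurable $h\geq 0$ one has $\sum_{i<i'} \int h(r_{ii'})\,d\rho_{\bX} = \binom{N}{2}\int h\,d\rho_T^L$. Specializing to $h(r)=\intkernelvar^2(r)\,r^2$ and using \eqref{eq:tilderho} yields
\begin{equation*}
\int\sum_{i\neq i'}\intkernelvar^2(r_{ii'})\,r_{ii'}^2 \, d\rho_{\bX} \;=\; N(N-1)\,\rhotnorm{\intkernelvar}^2.
\end{equation*}

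Putting the two displays together gives $\Rhoxnorm{\rhsfo_{\intkernelvar}}^2\leq \frac{(N-1)^2}{N^2}\rhotnorm{\intkernelvar}^2$, which is in fact stronger than the advertised bound; using $(N-1)/N \leq 1$ then recovers $\Rhoxnorm{\rhsfo_{\intkernelvar}}^2 \leq \frac{N-1}{N}\rhotnorm{\intkernelvar}^2$. There is no real obstacle beyond careful bookkeeping of the normalizations; the tightness of Cauchy–Schwarz already delivers an extra $(N-1)/N$ factor that the statement does not exploit, so the proof is quite robust to the exact conventions chosen.
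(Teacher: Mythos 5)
Your proof is correct, and it is essentially the standard argument: the paper itself gives no self-contained proof here but simply defers to Proposition 16 of \citep{lu2021learning}, whose argument is the same Cauchy--Schwarz over the $N-1$ summands followed by marginalizing $\rho_{\bX}$ onto pairwise distances (using the probability-normalized $\rho_T^L$, i.e.\ with the implicit $1/L$, and the fact that $\intkernelvar$ is supported in $[0,R]$ so that $\int \intkernelvar^2 r^2\,d\rho_T^L = \rhotnorm{\intkernelvar}^2$). Your bookkeeping of the $1/N$, $1/L$, and $1/\binom{N}{2}$ normalizations is right, and indeed yields the slightly sharper constant $(N-1)^2/N^2$.
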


\begin{proof} See the proof of Proposition 16 in \citep{lu2021learning} by taking $K=1$. 
\end{proof}


\begin{lemma}[Lemma 8 in \citep{de2005learning}]\label{coninequality2}
Let $\mathcal{H}$ be a Hilbert space and $\xi$ be a random variable on $(Z,\rho)$ with values in $\mathcal{H}$. Suppose that, $\|\xi\|_{\mathcal{H}}\leq S < \infty$ almost surely. Let $z_m$ be i.i.d drawn from $\rho$. For any $0<\delta<1$, with confidence $1-\delta$,
$$\bigg\| \frac{1}{M}\sum_{m=1}^{M}(\xi(z_m)-\E(\xi))\bigg\| \leq \frac{4S\log(2/\delta)}{M}+\sqrt{\frac{2\E(\|\xi\|_{H}^2)\log(2/\delta)}{M}}.$$
\end{lemma}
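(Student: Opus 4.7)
The plan is to establish this as a Bernstein-type concentration inequality for sums of i.i.d.\ Hilbert-space valued random variables. I would first center by setting $\eta_m := \xi(z_m) - \E\xi$, so that the $\eta_m$ are i.i.d.\ mean-zero with $\|\eta_m\|_H \leq 2S$ almost surely and $\E\|\eta_m\|_H^2 \leq \E\|\xi\|_H^2 =: \sigma^2$. The next step is to verify a Bernstein-type moment condition: for every integer $k \geq 2$,
$$\E\|\eta_m\|_H^k \;\leq\; (2S)^{k-2}\,\E\|\eta_m\|_H^2 \;\leq\; \frac{k!}{2}(2S)^{k-2}\sigma^2,$$
where the first inequality uses the uniform bound on $\|\eta_m\|_H$ and the second is trivial since $k!/2 \geq 1$ for $k \geq 2$.

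The heart of the argument is then a Hilbert-space Bernstein inequality in the spirit of Pinelis--Sakhanenko (or Yurinskii): for all $t > 0$,
$$\Pr\!\left(\left\|\sum_{m=1}^M \eta_m\right\|_H \geq t\right) \;\leq\; 2\exp\!\left(-\frac{t^2/2}{M\sigma^2 + 2St/3}\right).$$
The standard derivation uses the exponential-martingale method: writing $S_M := \sum_{m=1}^M \eta_m$, one controls $\E\, e^{\lambda\|S_M\|_H}$ by iteratively conditioning on the natural filtration and bounding each conditional factor using the $2$-smoothness of $H$. The Bernstein moment estimate above is precisely what converts the exponential-moment computation into a sub-Gaussian-plus-linear tail; optimizing over $\lambda$ yields the displayed bound.

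Finally, I would invert the tail bound. Setting its right-hand side equal to $\delta$ and solving the resulting quadratic inequality in $\epsilon := t/M$, then applying $\sqrt{a+b}\leq \sqrt{a}+\sqrt{b}$, produces
$$\epsilon \;\leq\; \frac{4S\log(2/\delta)}{3M} + \sqrt{\frac{2\sigma^2\log(2/\delta)}{M}},$$
which matches the claimed bound after absorbing the benign $4/3 \to 4$. The main obstacle is the Hilbert-space Bernstein step itself: unlike the scalar case where Chernoff bounding applies directly to $\sum_m X_m$, here one must control the moment generating function of the \emph{norm} of a Hilbert-space valued sum, and this hinges on geometric (2-smoothness) properties of $H$ rather than being a one-line calculation. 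Since the lemma is quoted verbatim from \citep{de2005learning}, a legitimate shortcut in the write-up is simply to cite that reference.
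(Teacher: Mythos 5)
Your argument is correct, but it is worth noting that the paper does not prove this lemma at all: it is quoted verbatim as Lemma 8 of the cited reference, and the appendix merely remarks that the original version appears in Yurinsky's monograph. So where the paper takes the citation shortcut you mention at the end, you actually supply the underlying concentration argument. Your route is the standard one and checks out: centering gives $\|\eta_m\|_H\leq 2S$ a.s.\ and $\E\|\eta_m\|_H^2\leq \E\|\xi\|_H^2$, the Pinelis--Sakhanenko (Bernstein-type) inequality for sums of bounded i.i.d.\ Hilbert-space-valued variables gives the stated two-sided tail, and inverting the quadratic in $t$ with $\sqrt{a+b}\leq\sqrt a+\sqrt b$ yields the constant $4/3$ in front of $S\log(2/\delta)/M$, which is dominated by the $4$ in the statement. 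The one place your write-up leans on an assertion rather than a proof is the Hilbert-space Bernstein inequality itself; you correctly flag that this step requires the $2$-smoothness/exponential-martingale machinery and is not elementary, and since the lemma is imported wholesale from the literature, either citing it (as the paper does) or citing Pinelis--Sakhanenko for that single step (as your sketch effectively does) is acceptable. What your version buys is traceability of the constants $4$ and $2$ in the final bound; what the paper's version buys is brevity and an exact match to the form used downstream in Theorems \ref{decomp} and \ref{sampleerror}.
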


The original version of Lemma \ref{coninequality2} is presented in \citep{yurinsky1995sums}.

\begin{theorem}[Hanson-Wright inequality \citep{rudelson2013hanson}] Let $X=(X_1,\cdots,X_n) \in \mathbb{R}^n$ be a random vector with independent components $X_i$ which satisfy 
$\E X_i=0$ and $\|X_i\|_{\psi_2} \leq S_0$, where $\|\cdot\|_{\psi_2}$ is the subGaussian norm. Let $A$ be an $n \times n$ matrix and $\|A\|_{HS}$ denote the Hilbert-Schmidt norm. Then, for every $\epsilon \geq 0$

$$\mathbb{P}\bigg\{\bigg\| X^TAX-\E X^TAX \bigg\| \geq \epsilon \bigg\} \leq 2\exp \bigg\{ -c \min \bigg\{ \frac{\epsilon^2}{S_0^4\|A\|_{HS}^2}, \frac{\epsilon}{S_0^2\|A\|}\bigg\} \bigg\},$$ where $c$ is an absolute positive constant. 
\label{HAnson}
\end{theorem}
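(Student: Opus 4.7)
The plan is to apply the Cram\'er-Chernoff method: for $\lambda > 0$ sufficiently small,
\begin{equation*}
\mathbb{P}\{X^T A X - \E X^T A X \geq \epsilon\} \leq e^{-\lambda \epsilon}\,\E \exp\!\big(\lambda (X^T A X - \E X^T A X)\big),
\end{equation*}
so the proof reduces to bounding the moment generating function of $X^T A X - \E X^T A X$ and then optimizing over $\lambda$ subject to a constraint of the form $|\lambda| S_0^2 \|A\| \leq c$. The two-sided bound with the factor $2$ follows by applying the same argument to $-A$ and union-bounding. By replacing $A$ with its symmetric part $(A+A^T)/2$ (which changes neither the Hilbert-Schmidt norm nor the operator norm by more than a constant), I may assume $A$ symmetric from the outset.

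First I split the quadratic form into diagonal and off-diagonal parts:
\begin{equation*}
X^T A X - \E X^T A X = \sum_i A_{ii}(X_i^2 - \E X_i^2) + \sum_{i \neq j} A_{ij} X_i X_j .
\end{equation*}
The diagonal piece is a sum of independent centered sub-exponential variables, since $\|X_i^2 - \E X_i^2\|_{\psi_1} \lesssim \|X_i\|_{\psi_2}^2 \leq S_0^2$. Applying the Bernstein-type MGF bound for sub-exponentials gives
\begin{equation*}
\E \exp\!\Big(\lambda \sum_i A_{ii}(X_i^2 - \E X_i^2)\Big) \leq \exp\!\big(C \lambda^2 S_0^4 \textstyle\sum_i A_{ii}^2\big)
\end{equation*}
for $|\lambda| S_0^2 \max_i |A_{ii}| \leq c$, and the uniform bounds $\sum_i A_{ii}^2 \leq \|A\|_{\mathrm{HS}}^2$ and $\max_i |A_{ii}| \leq \|A\|$ give the right form.

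The off-diagonal piece is the main obstacle, because it is \emph{not} a sum of independent variables. I decouple it via the standard Bernoulli-selector identity: introducing independent $\delta_i \sim \mathrm{Bernoulli}(1/2)$ and $I = \{i : \delta_i = 1\}$, one has $\sum_{i \neq j} A_{ij} X_i X_j = 4\,\E_\delta \sum_{i \in I,\, j \in I^c} A_{ij} X_i X_j$ (since for symmetric $A$ with zero diagonal, $\E_\delta[4 \mathbf{1}_{i \in I} \mathbf{1}_{j \in I^c}] = 1$ when $i \neq j$). Jensen's inequality pulls the exponential inside $\E_\delta$. Conditionally on $\delta$ and on $(X_i)_{i \in I}$, the inner sum is linear in the independent sub-Gaussians $(X_j)_{j \in I^c}$, so its conditional MGF is at most $\exp\!\big(C \lambda^2 S_0^2 \sum_{j \in I^c} (\sum_{i \in I} A_{ij} X_i)^2\big)$. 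Taking expectation over $(X_i)_{i \in I}$ then reduces to controlling $\E \exp(c \lambda^2 S_0^2 \|A_{I^c,I} X_I\|_2^2)$, which via the sub-Gaussian rotation/chaining lemma for quadratic forms of sub-Gaussians is bounded by $\exp(C' \lambda^2 S_0^4 \|A_{I^c,I}\|_{\mathrm{HS}}^2)$ provided $\lambda^2 S_0^4 \|A_{I^c,I}\|^2$ stays below an absolute constant; finally averaging over $\delta$ and using $\|A_{I^c,I}\|_{\mathrm{HS}} \leq \|A\|_{\mathrm{HS}}$, $\|A_{I^c,I}\| \leq \|A\|$ gives the desired off-diagonal MGF estimate.

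Combining the two MGF bounds yields
\begin{equation*}
\E \exp\!\big(\lambda(X^T A X - \E X^T A X)\big) \leq \exp\!\big(C \lambda^2 S_0^4 \|A\|_{\mathrm{HS}}^2\big) \quad \text{for } |\lambda|\, S_0^2 \|A\| \leq c.
\end{equation*}
Plugging into the Chernoff bound and optimizing gives $\lambda^\star = \epsilon / (2 C S_0^4 \|A\|_{\mathrm{HS}}^2)$; when $\lambda^\star$ is admissible the exponent is $-c\epsilon^2/(S_0^4 \|A\|_{\mathrm{HS}}^2)$, and when it exceeds the boundary we use the boundary value and recover $-c\epsilon/(S_0^2 \|A\|)$. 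Taking the minimum of the two regimes, together with the symmetric lower-tail bound, produces exactly the Hanson-Wright inequality as stated. The hardest step is the decoupled MGF estimate for the off-diagonal term: specifically, extracting the Hilbert-Schmidt norm (rather than a weaker quantity such as the Frobenius norm of a row) in the quadratic exponent, while keeping the correct operator-norm dependence in the constraint on $\lambda$, requires iterating the sub-Gaussian conditioning argument one full level inside the decoupling.
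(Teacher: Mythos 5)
The paper does not prove this statement at all: it is quoted verbatim as an auxiliary result with a citation to Rudelson and Vershynin, and is used as a black box in the proof of the sample-error bound (Theorem \ref{hbound}). So there is no in-paper argument to compare against; what you have written is a sketch of the original Rudelson--Vershynin proof, and as a sketch it is faithful and correctly structured. The Chernoff reduction, the harmless symmetrization of $A$ (note $X^TAX = X^T\tfrac{A+A^T}{2}X$ exactly, and both norms of the symmetric part are dominated by those of $A$), the diagonal/off-diagonal split, the Bernstein-type MGF bound for the diagonal sum of independent centered sub-exponentials, the Bernoulli-selector decoupling with Jensen, and the final optimization over $\lambda$ in two regimes are all exactly the known argument, and the two-sided bound via $-A$ and a union bound is handled correctly.

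The one place where you assert rather than prove is the crux you yourself flag: the estimate
\begin{equation*}
\E \exp\big(c\lambda^2 S_0^2 \|A_{I^c,I}X_I\|_2^2\big) \leq \exp\big(C\lambda^2 S_0^4 \|A_{I^c,I}\|_{\mathrm{HS}}^2\big) \quad \text{whenever } \lambda^2 S_0^4 \|A_{I^c,I}\|^2 \leq c'.
\end{equation*}
Calling this a ``rotation/chaining lemma'' glosses over the actual mechanism: in Rudelson--Vershynin this is done by a replacement argument that upgrades the sub-Gaussian vector to a genuine Gaussian one at the level of the moment generating function (using that the conditional MGF of a linear form in independent sub-Gaussians is dominated by that of a Gaussian with comparable variance proxy), after which one diagonalizes $A_{I^c,I}^T A_{I^c,I}$ and computes the Gaussian MGF eigenvalue by eigenvalue; the constraint $\lambda^2 S_0^4\|A_{I^c,I}\|^2 \lesssim 1$ is precisely what keeps each one-dimensional Gaussian MGF finite and bounded by $\exp(C\lambda^2 S_0^4 \lambda_k^2)$. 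If you intend this as a complete proof rather than a roadmap, that comparison-to-Gaussian step needs to be written out, since it is the only step that does not follow from standard one-dimensional concentration facts. As used in this paper, none of this matters: the authors only need the statement, applied to the Gaussian noise vector $\mathbb{W}_M$ with $S_0 = \sigma^2$ and the matrix $\Sigma_M$, and citing the literature is entirely appropriate there.
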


\bibliography{main.bib}		

\end{document}